\documentclass[11pt]{article}
\usepackage[utf8]{inputenc}
\usepackage{amsmath}
\usepackage{amsthm}
\usepackage{amsfonts}
\usepackage{natbib}
\usepackage{bm}
\usepackage{thm-restate}

\usepackage{authblk}

\usepackage{graphicx}
\usepackage{subcaption}
\usepackage{float}

\usepackage{color}   %May be necessary if you want to color links
\usepackage{hyperref}
\hypersetup{
    %colorlinks=true, %set true if you want colored links
    linktoc=all,     %set to all if you want both sections and subsections linked
    linkcolor=blue,  %choose some color if you want links to stand out
}

\def\reals{{\mathbb R}}
\def\bx{\mathbf{x}}
\def\by{\mathbf{y}}
\def\bu{\mathbf{u}}
\def\bv{\mathbf{v}}
\def\bw{\mathbf{w}}
\def\bz{\mathbf{z}}
\def\calW{\mathcal{W}}
\newcommand{\eps}{\epsilon}
\newcommand{\E}{\mathbb{E}}
\newcommand{\err}{\mathrm{err}}

\usepackage{framed}
\usepackage{setspace}
\usepackage{listings}
\spacing{1.06}
\lstset{
  basicstyle=\ttfamily,
  mathescape,
  escapeinside={(*}{*)}
}

\usepackage[margin=1in]{geometry}

\usepackage{todonotes}

\usepackage[linesnumbered, ruled,vlined]{algorithm2e}

\def\pr{\mathbb{P}}
\newcommand{\ind}{\mathds{1}}
\newcommand{\cgoo}{\textrm{CGOO}}
\def\calA{\mathcal{A}}

\def\calC{\mathcal{C}}
\def\calD{\mathcal{D}}

\def\calH{\mathcal{H}}
\def\calJ{\mathcal{J}}
\def\calK{\mathcal{K}}

\def\calM{\mathcal{M}}

\def\calR{\mathcal{R}}
\def\calT{\mathcal{T}}
\def\calX{\mathcal{X}}

\def\calV{\mathcal{V}}
\def\calW{\mathcal{W}}
\def\calY{\mathcal{Y}}
\def\calZ{\mathcal{Z}}
\def\reals{{\mathbb R}}
\def\norm#1{\mathopen\| #1 \mathclose\|}% use instead of $\|x\|$
\newcommand{\brackets}[1]{\langle #1\rangle}

\def\supp{\text{supp}}

\def\bx{\text{\textbf{x}}}

\theoremstyle{plain}
\newtheorem{theo}{Theorem}[section]
 
\newtheorem{theorem}[theo]{Theorem}
\newtheorem{lemma}[theo]{Lemma}

\newtheorem{corollary}[theo]{Corollary}
\newtheorem{example}[theo]{Example}

\newtheorem{prop}[theo]{Proposition}

\newtheorem{definition}[theo]{Definition}

\newtheorem{thmi}[theo]{Theorem (Informal)}
\newtheorem{cori}[theo]{Corollary (Informal)}

\theoremstyle{remark}
\newtheorem{remark}[theo]{Remark}

\usepackage{dsfont}

\DeclareMathOperator*{\argmin}{argmin}

\DeclareMathOperator*{\diam}{diam}
\DeclareMathOperator*{\poly}{poly}

\DeclareMathOperator*{\cs}{CSC}
\DeclareMathOperator*{\lopt}{LOPT}
\DeclareMathOperator*{\rlopt}{RLOPT}

\DeclareMathOperator*{\smax}{smax}

\DeclareMathOperator*{\VC}{VC}
\DeclareMathOperator*{\sign}{sign}

\def\bw{\text{\textbf{w}}}

\def\Unif{\text{Unif}}
\def\loss{\ell(c, D)}
\def\bloss{\ell(c^*, D)}

\renewcommand \vec [1]{\bm{#1}}

\begin{document}

\pagenumbering{roman}

\title{The Cost of a Reductions Approach to Private Fair Optimization
%\footnote{
%This paper was presented at 
%the 2019 workshop on the Theory and Practice
%of Differential Privacy (TPDP).
%}
}
\date{}
\author[1]{
Daniel Alabi\footnote{School of Engineering and Applied Sciences, Harvard University.
Research partially supported by Facebook AI Research.
Email: \url{alabid@g.harvard.edu}.
}
}
%\affil[1]{}
\maketitle

%\noindent First Version: January 23, 2019\\
%\noindent This Version: \today

\begin{abstract}

Through the lens of information-theoretic reductions,
%[Brassard-Cr{\'{e}}peau-Robert, FOCS 1986],
we examine a reductions approach to fair optimization and
learning
where a black-box optimizer is used to learn a fair model for classification 
% [Agarwal-Beygelzimer-Dud\`ik-Langford-Wallach, 2018]
or 
regression.
% [Alabi-Immorlica-Kalai, 2018]
Quantifying the complexity,
both statistically and computationally, of
making such models satisfy the rigorous definition of
differential privacy is our end
goal. We resolve a few open questions and show
applicability to fair machine learning,
hypothesis testing, and to optimizing non-standard measures
of classification loss.
Furthermore, our sample complexity bounds are tight amongst all 
strategies that jointly minimize a composition of
functions.

The reductions approach to fair optimization can be
abstracted as the
\textit{constrained group-objective optimization} problem where we aim to optimize an
objective that is a function of losses of individual groups,
subject to some constraints. 
We give the first polynomial-time algorithms to solve
the problem with $(\eps, 0)$ or $(\eps, \delta)$ 
differential privacy guarantees
when defined on a convex decision set (for example,
the $\ell_P$ unit ball) with 
convex constraints and losses.
Accompanying information-theoretic lower bounds for the
problem are presented.
In addition, compared to a previous method for ensuring
differential privacy subject to a relaxed form
of the equalized odds fairness constraint,
the $(\eps, \delta)$-differentially private algorithm we present provides asymptotically
better sample complexity guarantees, resulting in an exponential
improvement in certain parameter regimes.
We introduce a class of bounded divergence linear
optimizers, which could be
of independent interest, and specialize to pure and approximate differential privacy.

\end{abstract}

\clearpage

\tableofcontents

\clearpage

\pagenumbering{arabic}
\section{Introduction}

Algorithmic fairness, accountability, and transparency of computer systems have become salient
sub-fields of study within computer science. The incorporation of such values has led to the development
of new models to make existing and state-of-the-art systems more conscious of societal constraints.
But some of these new models do not adhere to
other ethical standards. A standard of utmost importance
is the need to ensure the privacy of the individuals
that constitute the
data used to create the models.

Differential privacy has become a gold standard of 
(individual-level)
privacy in machine learning and data analysis~\citep{DworkMNS06}.
The adoption of this privacy definition
by the U.S. Census Bureau~\citep{Abowd18} and major tech
companies~\citep{CJKLSW18} is evidence of its impact.
Possibilities and implementations of reconstruction and inference
attacks clearly show the relevance of differential privacy
~\citep{Dinur:2003, 10.1371/journal.pgen.1000665, ChoromanskiM12,
SSSS17, GarfinkelAM18, Carlini0EKS19}.
Algorithmic fairness is also an increasingly important
requirement 
for deployed systems, that results in various
statistical tradeoffs~\citep{KleinbergMR17, KleinbergLMS20, FSV21}.
While the literature on differential privacy is guided by
(slight variants of) a singular information-theoretic
definition,
the fairness literature is burgeoning without much
clarity on what definitions are suitable for certain tasks.
In this work, we provide a \textit{generic way} to minimize
group-fairness objectives by reducing to minimizing
linear objectives.
Given the worst-case nature of differential privacy, 
the best estimator for natural objectives -- such as the
least squares objective -- depends on the
properties of the dataset~\citep{Sheffet19, komarova2020identification}.
As a consequence, the major advantage of the reductions
approach is to use existing machinery for differentially
private solvers to solve more general fairness objectives.
\textit{However, what is the cost (both statistically and
computationally) of this reductions approach?}
We aim to answer this question via the lens of
information-theoretic 
reductions~\citep{BrassardCR86, BennettBCM95}, using
tools from
Lagrangian Duality, Optimization, and Differential Privacy.

Our focus is on a reductions approach to fair optimization and
learning where
a black-box optimizer is used to learn a fair model for classification or regression
(\citep{AgarwalBD0W18, AlabiIK18}). We
explore the creation of such fair models
that adhere to differential privacy guarantees. This approach leads to applications other than algorithmic fairness. 
We consider two main suites of use cases:
the first is for optimizing convex performance measures of
the confusion matrix
(such as those derived from the $G$-mean and $H$-mean);
the second is
for satisfying statistical definitions of algorithmic fairness
(such as equalized odds, demographic parity, and the Gini index of
inequality).
\footnote{
In this paper, we only consider group-fair definitions but our framework can potentially
be extended to individual fairness notions~\citep{DworkHPRZ12}.
}
See Section~\ref{sec:reductions} for a few detailed
generic examples of how to apply our results.

We abstract the reductions approach to fair optimization as the
\textit{constrained group-objective optimization} problem where we aim to optimize an
objective that is a function of losses of individual groups,
subject to some constraints.
We present two differentially private algorithms: an $(\eps, 0)$ exponential sampling
algorithm and an $(\eps, \delta)$ algorithm
that uses an approximate linear optimizer to incrementally move toward the best decision.
The privacy and utility guarantees
of these empirical risk minimization algorithms are
presented.
Compared to a previous method for ensuring differential privacy subject to a relaxed form
of the equalized odds fairness constraint,
the $(\eps, \delta)$-differentially private algorithm provides asymptotically
better sample complexity guarantees.
The technique of using a bounded divergence linear optimizer oracle to achieve strong
guarantees of privacy/security and utility might be
applicable to other problems not considered in this paper.
Finally, we show
an algorithm-agnostic information-theoretic lower bound on the excess risk (or equivalently, the sample complexity) of any solution to the problem of
$(\eps, 0)$ or $(\eps, \delta)$ \textit{private constrained group-objective optimization}.

The focus of our work is on differentially private optimization via
empirical risk minimization. Generalization guarantees can be obtained by
taking a large enough sample of the population and of subgroups of the population. Another option is to consider the complexity (via VC dimension, for example) of the hypothesis class to be learned or the stability properties of the
differentially private algorithms since we know that stability implies generalization~\citep{McAllester99, McAllester03, DworkFHPRR15, FeldmanV19}. We do not
state any generalization guarantees in this paper but will motivate our work on empirical risk
minimization in the context of the eventual goal of machine learning -- generalization to unseen examples~\citep{Chervonenkis1971, Valiant84, Littlestone87, 
BlumerEHW89, EhrenfeuchtHKV89, LinialMR91, BousquetE00, KP2000, Vapnik00, Zhang06a}.
The reason for this viewpoint and discussion
is that differentially
private algorithms exhibit (provable) stability properties that
imply generalization.
Our framework, via our notational and definitional setup,
is amenable to analyses for stability properties.

\paragraph{Notation Setup and Example Usage}
Suppose we have a dataset $D$ of size $n$ consisting of i.i.d. draws
from an unknown distribution $\calD$. For example, we could have
$D= \{(\bx_i, a_i, y_i)\}_{i=1}^n$ that consists of non-sensitive
features $\bx_1, \ldots, \bx_n\in\calX$ of individuals, their
corresponding sensitive attribute $a_1, \ldots, a_n\in\calA$, and 
their assigned labels/values (depending on if the resulting task is for classification, regression, etc.) $y_1, \ldots, y_n\in \calY$. Let $\calC_P$ be a decision set 
(e.g., corresponding to a set of $P$-dimensional decision vectors, a set of classifiers that each can be represented by $P$ real numbers, a set of all possible real
coefficients of a polynomial threshold function, or a set of
all possible weights that can be used to represent a specific neural network architecture) where 
$\calC_P \subseteq \{c:(\calX\times\calA)\rightarrow\calY\}$ or $\calC_P \subseteq \{c:\calX\rightarrow\calY\}$.
We use $\calC_P$ to mean that the decisions in $\calC_P$
can be represented with at most $P$ real numbers whether $\calC_P$ consists of
classifiers or regression coefficient vectors. That is, the resulting parameter space lives in $\reals^P$.
For example, for a set of classifiers $\calC_P$ consisting of single-dimensional thresholds, we have $\VC(\calC_P) = P = 1$. As another example, for $P > 1$, let
$L_P = \{c_{\bw, b}~|~\bw\in\reals^{P-1}, b\in\reals\}$ where
$c_{\bw, b}(\bx) = \brackets{\bw, \bx} + b$. Then $L_P$ is parameterized by 
$\bw\in\reals^{P-1}, b\in\reals$. For regression, $\calC_P$ corresponds to the hypothesis class $L_P$ 
we wish to learn. For binary classification, $\calC_P$ would correspond to the class of functions that
result from the composition $\calC_P = \sign\circ~L_P$. For both the
regression and classification problems in the aforementioned example, the hypotheses are parameterized by a
$P$-dimensional vector.
To apply differential privacy, it is important to know the
hypothesis class we wish to learn since our statistical
and computational guarantees must necessarily
depend on properties of the
class we wish to learn~\citep{De12}.

A goal could be to obtain a decision from $\calC_P$ that can be used to classify an 
\textit{unseen} $(\bx_{n+1}, a_{n+1})\in(\calX\times\calA)$ or $\bx_{n+1}\in\calX$.
Typically, the approach is to find a (provably optimal)
predictor from the decision set $\calC_P$ via empirical risk minimization and show that this
predictor generalizes to unseen examples.
Suppose there are at most $K$ groups to which 
any example $(\bx, a, y)\sim\calD$ can belong to.
For any decision
$c\in \calC_P$, we define a loss function $\ell:\calC_P\times(\calX\times\calA\times\calY)^n\rightarrow[0, 1]^K$ to be 
$$\ell(c, D) = (\ell_1(c, D), \ldots, \ell_K(c, D)),$$ with $\ell_k(c, D)\in[0, 1]$ for each $k\in[K]$.
In addition, we also define the \textit{itemized} (per example) loss function
$\ell:\calC_P\times(\calX\times\calA\times\calY)\rightarrow[0, 1]^K$ so that the loss on the dataset $D$
will be the average of the itemized losses on each example for any decision $c\in\calC_P$
i.e., $\ell(c, D) = \frac{1}{n}\sum_{i=1}^n\ell(c, D_i)$.
We assume that $K \leq P$ and that in
most cases (as exemplified by our use cases) we have $K \ll P$. For example, although
a specific neural network architecture might have $P = 1000$ weight parameters,
$K = 20$
would be the maximum number of group statistics
(i.e., false positive rate for each racial or ethnic category)
computed on the examples fed to the neural network.
In our model, $K$ is not necessarily equal to $|\calA|$.
For example, this could happen when
$K$, the number of statistics computed for all groups,
is larger than $|\calA|$, the number of protected attributes.
For any decision $c\in\calC_P$, we let
$\ell_k(c, D)$ correspond to a context-specific or application-specific loss
for individuals that belong to group $k\in[K]$.
We assume that for any group $k\in[K]$, the loss $\ell_k(c, D)$
is an average loss of the form
$\ell_k(c, D) = \frac{1}{n}\sum_{i=1}^n\ell_k(c, D_i)$.
So $\ell_k(c, D)$ applies to the items in group $k\in[K]$ where $\ell_k(c, D_i)$ is the 
loss of $c$ on item $D_i=(\bx_i, a_i, y_i)$.
We denote the induced loss set on dataset $D$ as
$\ell(\calC_P, D) = \{\ell(c, D): c\in\calC_P\}\subseteq[0, 1]^K$.
The iterative linear optimization based private algorithm presented in this paper assumes that
$\ell(\calC_P, D)$ is compact and that we have access to an oracle that approximately optimizes
linear functions on $\ell(\calC_P, D)$. The exponential sampling algorithm assumes we have an 
approach to sampling from the decision set $\calC_P$ which we assume to be convex and
to live in at most $P$ dimensions.
If $\ell$ is convex, this algorithm is
guaranteed to be computationally efficient (i.e., runtime polynomial in $P, K, n$).
If  $\ell$ is
not convex, we cannot make such guarantees of computational efficiency but can still
make statistical efficiency guarantees.
In that case, we assume that the Vapnik–Chervonenkis (VC) dimension of $\calC_P$ ($=\VC(\calC_P)$) is finite and
that $\calC_P$ is a concept class ($\calY = \{0, 1\}$ or $\calY = \{-1, +1\}$).
For any dataset $D$, we can write the dataset as
$D = (D_{\calX\calY}, D_\calA) = (D_\calX, D_\calY, D_\calA)$ where
$D_{\calX\calY}\in(\calX\times\calY)^n, D_\calX\in\calX^n, D_\calY\in\calY^n$
represents the insensitive attributes and
$D_\calA\in\calA^n$ represents the sensitive attributes. 
The main goal of our work is to
guarantee differential privacy with respect to the sensitive attribute.
But if $\calC_P$ is of finite size or $\ell$ is convex,
we can guarantee the privacy of the insensitive attributes
as well.

\paragraph{Definitions}

We now summarize the main definitions that we employ.

\begin{definition}[\cite{ChaudhuriH11, Jagielski18}]
An algorithm $\calM:(\calX\times\calA\times\calY)^n\rightarrow\calR$ 
is \textbf{$(\eps, \delta)$-differentially private in the sensitive attributes}
if for all $D_{\calX\calY}\in(\calX, \calY)^n$ and for all neighboring 
$D_\calA\sim D_{\calA}'\in\calA^n$ and all
$T\subseteq\calR$, we have
$$
\pr[\calM(D_{\calX\calY}, D_\calA)\in T]\leq e^\eps\cdot\pr[\calM(D_{\calX\calY}, D_{\calA}')\in T] + \delta.
$$

The probability is over the coin flips of the algorithm $\calM$.
\end{definition}

Now, let $\calC_P(D_\calX)$
be the set of all possible labellings induced on $D_\calX$ by $\calC_P$. i.e.,
$\calC_P(D_\calX) = \{(c(\bx_1), \ldots, c(\bx_n))~|~c\in\calC_P\}$.
Then by Sauer's Lemma, $|\calC_P(D_\calX)|\leq O(n^{\VC(\calC_P)})$~\citep{mlbook}.
\footnote{Sometimes known as the Sauer–Shelah Lemma.}
In this paper, we
will use $\calC_P(D_\calX)$ as the range of the exponential mechanism so
that even if $\calC_P$ is infinite,
assuming that its VC dimension is finite, we can obtain empirical risk
bounds in terms of $\VC(\calC_P)$. We require that the sensitive attribute be excluded from the
domain of functions in $\calC_P$.\footnote{An assumption
also made in~\citep{Jagielski18}.}

For any $c\in\calC_P$, the true \textit{population} loss on
group $k$ is $\ell_k(c) = \E_{D\sim\calD^n}[\ell_k(c, D)]$ and the true \textit{population} loss for all
groups is
$\ell(c) = (\ell_1(c), \ldots, \ell_K(c))$. 
The goal of
constrained group-objective optimization is to minimize the error function
$f(\ell(c))$ subject to the constraint $g(\ell(c)) \leq 0$ where $f, g:[0, 1]^K\rightarrow\reals$ are
context-specific or application-specific functions specified by the data curator.

Our differential privacy guarantees will be with respect to the centralized
model where a central and trusted curator holds the data (as opposed to the
local or federated model for differentially private computation).
We now define \textit{constrained group-objective optimization} and
\textit{private constrained group-objective optimization}.

\begin{definition}[Constrained Group-Objective Optimization:$\quad\cgoo(\calC_P, n, K, f, g, \ell, D, \alpha)$]

Let $f:[0, 1]^K\rightarrow\reals$ be a function we wish to minimize subject to a
constraint function $g:[0, 1]^K\rightarrow\reals$.
Specifically, for any excess risk parameter $\alpha > 0$, decision set $\calC_P$,
and any dataset $D$ of size $n$,
we wish to obtain a decision $\hat{c}\in \calC_P$ such that 
\begin{enumerate}
\item 
$f(\ell(\hat{c}, D)) \leq \min_{c\in\calC_P: g(\ell(c, D))\leq 0} f(\ell(c, D)) + \alpha$,
\item 
$g(\ell(\hat{c}, D))\leq \alpha.$
\end{enumerate}
\label{eq:go}
Any deterministic or randomized procedure that takes input $D$
and returns a decision $\hat{c}\in\calC_P$ that satisfies
the two conditions above is a
\textit{constrained group-objective optimization} algorithm that solves the
problem specified by $\cgoo(\calC_P, n, K, f, g, \ell, D, \alpha)$.

\label{def:goo}
\end{definition}

Definition~\ref{def:goo} is implicit in the work of~\cite{AlabiIK18}.
This optimization problem differs from ordinary constrained optimization since we are optimizing
with respect to functions of group statistics (e.g., true positives, false positives for examples in
a group) instead of individual examples. In addition, there are two functions: $f$ which is used to control the error as a function of the group statistics and $g$ which can be used to control the
deviations of the group statistics from one another. In later sections, we show specific formulations of optimization problems in terms of Definition~\ref{def:goo}.
A \textit{private constrained group-objective optimization} problem is a constrained group-objective optimization problem where the
resulting decision $\hat{c}\in\calC_P$ is optimized in a differentially
private manner. i.e., satisfying $(\eps, 0)$ or $(\eps, \delta)$-differential privacy or some other notion of data privacy.

We note that Definition~\ref{def:goo} is a special case of the more general
multi-objective optimization problem, where we usually have multiple,
sometimes an exponential number of,
optimal solutions (forming a pareto-optimal set). 
% The
% \textit{constrained} multi-objective optimization problem
% is a more general problem
% that can applied to other
% statistical tasks such as
% hypothesis testing.
\footnote{
See~\citep{Marler2004} for a survey on multi-objective optimization.
}

In this paper, the algorithms we present assume that the functions $f, g$ are convex and 
$O(1)$-Lipschitz.
\footnote{
In the remainder of this paper, we use Lipschitz to mean $O(1)$-Lipschitz in the output
parameter space $\calC_P$.
}
In addition, the Frank-Wolfe based algorithm (in the appendix) assumes that the
gradients of $f, g$ are
Lipschitz.\footnote{
Sometimes referred to as the $\beta$-smooth property.
}
Our main novel contribution is an $(\eps, \delta)$-differentially private algorithm
for solving the constrained group-objective optimization problem and accompanying
techniques in the quest for data privacy. This algorithm essentially implements a differentially private
linear optimization oracle ($\lopt_{\eps, \delta}$ satisfying $(\eps, \delta)$-differential privacy) to solve linear subproblems
approximately in each timestep. The non-private version of this oracle is $\lopt$ which, although not equivalent to the
statistical query model, can be used to simulate such queries
~\citep{Kearns98}.
The specifications of $\lopt$ and $\lopt_{\eps, \delta}$ are in Definitions~\ref{def:lopt} and~\ref{def:loptp}.
In Section~\ref{sec:divergence}, we introduce a more general class of bounded 
divergence linear optimizers that includes both $\lopt$ and $\lopt_{\eps, \delta}$.
\footnote{
$\eps$-differential privacy can be cast as a constraint on the max divergence between two
random variables. Similarly, R\'enyi differential privacy can be cast as a constraint
on the R\'enyi divergence~\citep{Mironov17}.
} But for clarity of exposition, our results will be cast in terms of
$\lopt$, $\lopt_{\eps, 0}$, or $\lopt_{\eps, \delta}$.

\begin{definition}[$\lopt$]
$\lopt$ is an oracle for solving linear subproblems approximately.
Let $\calW\subseteq\reals^K$
(or $\calW\subseteq\reals^K_{\geq 0}$) be a set
of weight vectors. Then for any weight vector $\bw\in\calW$, if $\hat{c} = \lopt(\calC_P, \ell, \bw, D, \tau)$, then
$$
\bw\cdot\ell(\hat{c}, D) \leq \min_{c\in\calC_P} \bw\cdot \ell(c, D) + \tau\norm{\bw},
$$
where $\calC_P$ is the decision set, $D$ is the dataset of size $n$, and $\tau$ is the tolerance parameter of the
oracle.
\label{def:lopt}
\end{definition}

In Definition~\ref{def:lopt}, we also consider restrictions to non-negative vectors since as noted in
~\citep{KakadeKL09, AlabiIK18}, many natural approximation algorithms can only handle non-negative weight
vectors.

\begin{definition}[$\lopt_{\eps, \delta}$, $\lopt^\theta_{\eps, \delta}$]
$\lopt_{\eps, \delta}$ is an $(\eps, \delta)$-differentially private oracle for solving linear subproblems
approximately. Let $\calW\subseteq\reals^K$
(or $\calW\subseteq\reals^K_{\geq 0}$) be a set of weight vectors. Then 
for any weight vector $\bw\in\calW$:

\begin{enumerate}
    \item If $\tilde{c} = \lopt_{\eps, \delta}(\calC_P, \ell, \bw, D, \tau)$, then $\bw\cdot \ell(\tilde{c}, D) \leq \min_{c\in\calC_P} \bw\cdot \ell(c, D) + \tau\norm{\bw}$,
    \item $\forall c\in\calC_P$,
    $\pr[ \lopt_{\eps, \delta}(\calC_P, \ell, \bw, D, \tau) = c] \leq e^{\eps}\cdot\pr[ \lopt_{\eps, \delta}(\calC_P, \ell, \bw, D', \tau) = c] + \delta$,
\end{enumerate}
for any neighboring datasets $D, D'$ of size $n$ where 
$\tau$ is the tolerance parameter of the oracle. The probability is
over the coin flips of the oracle.

When item 1 holds with probability $\geq 1-\theta$, we term this oracle
$\lopt^\theta_{\eps, \delta}$. We sometimes use $\lopt_{\eps, \delta}$ and $\lopt^\theta_{\eps, \delta}$
interchangeably when it is clear from context that the linear subproblems are solved with high probability.

\label{def:loptp}
\end{definition}

We provide a \textit{generic} implementation of $\lopt_{\eps, \delta}$ 
based on the exponential
mechanism. In the case where $\ell$ is convex, we use the computationally efficient 
convex exponential sampling and stochastic gradient descent techniques
of~\cite{BassilyST14} for pure and approximate differential privacy, respectively.
When $\ell$ is not convex, we use the generic
exponential mechanism to sample from $\calC_P$.

%This process is illustrated in Figure~\ref{fig:linopt} where
%$\{\tilde{c}_1, \ldots, \tilde{c}_T\}$ are the output of the
%differentially private linear optimization oracle.

\begin{figure}
  \begin{subfigure}[b]{0.4\textwidth}
    \includegraphics[width=\textwidth]{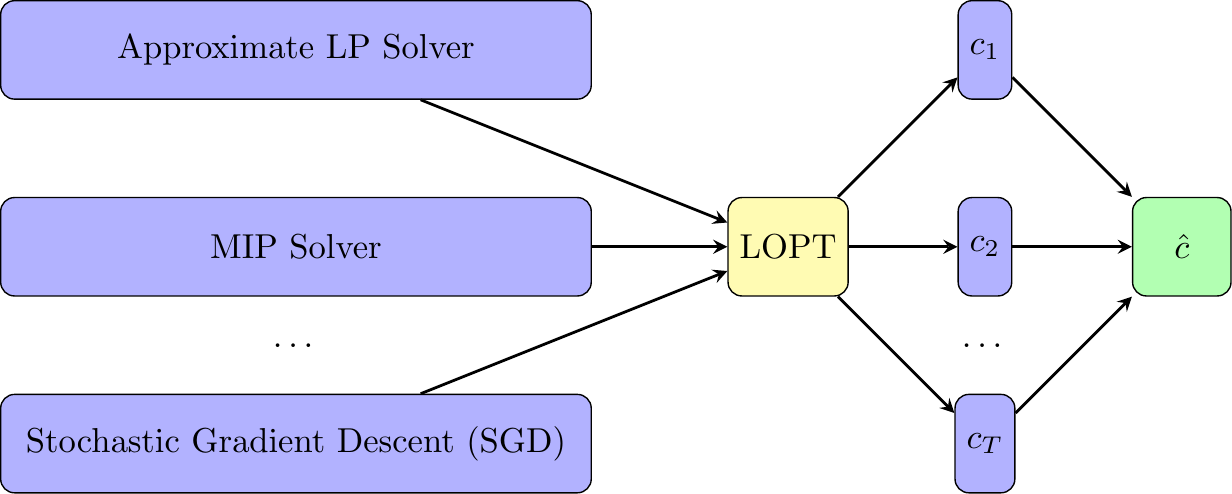}\label{fig:lopt1}
  \end{subfigure}
  \qquad
  \begin{subfigure}[b]{0.4\textwidth}
    \includegraphics[width=\textwidth]{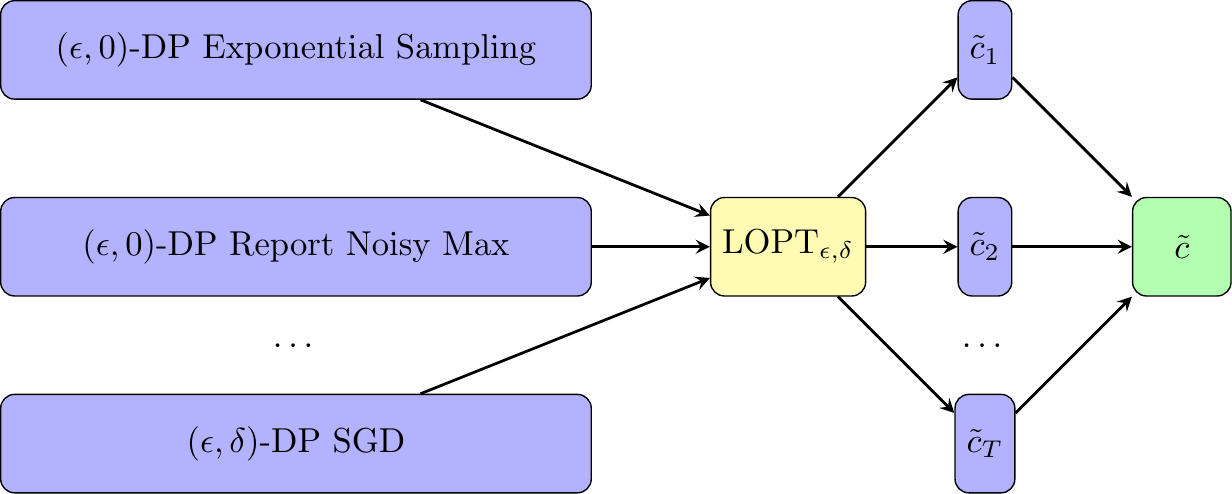}\label{fig:lopt2}
  \end{subfigure}
  \centering
  \caption{Approximate Linear Optimizer Oracles $\lopt$ and $\lopt_{\eps, \delta}$}
  \label{fig:lopt}
\end{figure}

% \begin{figure}
%   \begin{subfigure}[b]{0.8\textwidth}
%     \includegraphics[width=\textwidth]{loptimage}
%     \caption{Approximate Linear Optimizer Oracles $\lopt$ and $\lopt_{\eps, \delta}$.}\label{fig:linopt}
%   \end{subfigure}
%   \centering
% \end{figure}

As Figure~\ref{fig:lopt} illustrates, $\lopt_{\eps, \delta}$ could be implemented via a number
of approaches depending on the specification of the loss function $\ell$. 
For example, in the case where $\ell$ is convex and $\calC_P$ lies in 
the $\ell_2$ unit ball, we can
provide an implementation of a computationally efficient
$\lopt_{\eps, \delta}$ based on the
private stochastic gradient descent algorithm of~\cite{BassilyST14} and use this oracle to, for example, solve weighted
least squares regression~\citep{Sheffet19}.
If $\ell$ is not convex, we could use
more generic implementations of the exponential mechanism.
Without privacy considerations, $\lopt$ can be implemented via the use
of an approximate LP or MIP solver or via a vanilla stochastic gradient
descent~\citep{Bubeck15}. The implementation of $\lopt, \lopt_{\eps, \delta}$ will depend on the decision set $\calC_P$ and its accompanying
loss function $\ell$. In~\citep{AlabiIK18}, the existence of $\lopt$
is assumed and used to solve the $\cgoo$ problem non-privately.
We shall also follow a similar route: assume the existence of $\lopt_{\eps, \delta}$ but, in addition, we will
provide a generic implementation of the private oracle so that we may obtain utility guarantees.

In Section~\ref{sec:reductions}, we show applications of our work to two main suites of uses cases.
The first is for optimizing convex performance measures of
the confusion matrix
(such as those derived from the $G$-mean and $H$-mean);
the second is
for satisfying statistical definitions of algorithmic fairness
(such as equalized odds, demographic parity, and Gini index of
inequality).

\subsection{Summary of Results}

We proceed to state and interpret 
informal versions of some of our main
theorems and corollaries. Through the lens of information-theoretic reductions,
we show the following:

\begin{enumerate}
\item \textbf{Algorithms}: 
Present two generic differentially private algorithms to solve this
problem -- an $(\eps, 0)$ exponential sampling
algorithm and an $(\eps, \delta)$ algorithm
that uses an approximate linear optimizer to 
incrementally move toward the best decision.
\item \textbf{Improvements on Sample Complexity Upper Bound}: Compared to a previous method for ensuring differential privacy subject to a relaxed form
of the equalized odds fairness constraint,
the $(\eps, \delta)$-differentially private algorithm we present provides asymptotically
better sample complexity guarantees, resulting in an exponential
improvement in certain parameter regimes.
\item \textbf{First Polynomial-Time Algorithms}:
Give the first polynomial-time algorithms to solve
the problem with $(\eps, 0)$ or $(\eps, \delta)$ 
differential privacy guarantees
when defined on a convex decision set (for example,
the $\ell_P$ unit ball) with 
convex constraints and losses.
\item\textbf{Bounded Divergence Linear Optimizer Primitive}:
Introduce a class of bounded divergence linear optimizers
and specialize to pure and approximate differential privacy.
The technique of using bounded divergence linear optimizers to
simultaneously achieve privacy/security (and/or other
constraints) and utility might be
applicable to other problems not considered in this paper.
\item\textbf{Lower Bounds}: Finally, we show
an algorithm-agnostic information-theoretic lower bound on the excess risk (or equivalently, the sample complexity) 
of any solution to the problem of
$(\eps, 0)$ or $(\eps, \delta)$ \textit{differentially private constrained group-objective optimization}.
\end{enumerate}

Unlike in~\citep{BassilyST14}, the sample complexity
upper bounds for convex $\ell, f, g$ scale as $O(1/\alpha^2)$ rather than $O(1/\alpha)$
because of the way we apply Lagrangian Duality. We essentially
compose functions $f, g$ into $h = f + \max(0, g)\cdot O(\sqrt{K}/\alpha)$. As a result, to minimize
$h$ to within $O(\alpha)$, we need to minimize $g$ to within $O(\alpha^2/\sqrt{K})$.
As a consequence, our results are tight (with respect to
the accuracy parameter $\alpha$) amongst all strategies that
compose functions.
In addition, when using differential privacy,
the dependence on $P$ or $\VC(\calC_P)$ is necessary because our methods
sample from $P$-dimensional decision sets (while $K$ is the \textit{ambient dimension} due to
function composition).

\begin{thmi}

Suppose we are given 
any constrained group-objective optimization problem (Definition~\ref{def:goo}) where $f$ and $g$
are convex, Lipschitz functions and we wish to
obtain a decision $\tilde{c}\in\calC_P$ in a differentially private manner.

If $\ell$ is a convex function and $f, g$ are non-decreasing,
then let $n_0 = O(\frac{K\cdot P}{\eps\alpha^2})$. If not, let
$n_0 = \tilde{O}(\frac{K\cdot \VC(\calC_P)}{\eps\alpha^2})$. Then there exists
$n_0$ such that for all $n\geq n_0$ and
privacy parameter $\eps > 0$ there is an $\eps$-differentially private
algorithm that, with probability at least 9/10,
returns a decision $\tilde{c}\in\calC_P$ that
solves the $\cgoo(\calC_P, n, K, f, g, \ell, D, \alpha)$ problem.
The algorithm is guaranteed to be computationally efficient in the case where $\ell$ is convex
and $f, g$ are non-decreasing.

\label{thmi:expc}
\end{thmi}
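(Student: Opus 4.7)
The plan is to reduce the constrained problem to an unconstrained minimization via Lagrangian composition, and then to minimize the composite objective with the exponential mechanism. To set this up, I would introduce the multiplier $M := \Theta(\sqrt{K}/\alpha)$ and the composite $h(v) := f(v) + M\cdot\max(0, g(v))$ for $v\in[0,1]^K$. Since $f, g$ are convex and $\max(0,\cdot)$ is convex and non-decreasing, $h$ is convex in $v$; when additionally $f, g$ are non-decreasing and $\ell$ is convex, $h\circ\ell$ is convex on $\calC_P$. A short Lagrangian argument shows that any $\tilde{c}$ with $h(\ell(\tilde{c},D))\leq\min_{c\in\calC_P} h(\ell(c,D)) + \beta$ satisfies both $f(\ell(\tilde{c},D)) \leq f^* + \beta$ and $g(\ell(\tilde{c},D)) \leq (f^*-f_{\min}+\beta)/M = O(\alpha + \beta/\sqrt{K})$, using $f^* - f_{\min} = O(\sqrt{K})$ from Lipschitzness of $f$ on $[0,1]^K$. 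It therefore suffices to minimize $h\circ\ell$ to within $\beta = O(\alpha)$.

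Next I would apply the exponential mechanism with score $u(c; D) = -h(\ell(c, D))$ and privacy parameter $\eps$. Changing one sample perturbs each $\ell_k(c,D)$ by $O(1/n)$, so by Lipschitzness of $f$ and $g$ the sensitivity of $h\circ\ell$ in $D$ is $\Delta_h = O((1+M)\cdot\sqrt{K}/n) = O(K/(\alpha n))$. The range of the mechanism depends on the regime. When $h\circ\ell$ is convex, I would instantiate the efficient log-concave sampler of~\cite{BassilyST14} on the $P$-dimensional convex body $\calC_P$, for which the excess-risk analysis contributes a factor of $O(P)$. Otherwise, I would run the generic exponential mechanism on the finite labelling set $\calC_P(D_\calX)$, which has size at most $O(n^{\VC(\calC_P)})$ by Sauer's Lemma, contributing a factor $\tilde{O}(\VC(\calC_P))$.

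Third, the standard tail bound for the exponential mechanism (and the analogous guarantee for the convex sampler of~\cite{BassilyST14}) yields, with probability at least $9/10$, an output $\tilde{c}$ with
\[
h(\ell(\tilde{c},D)) \leq \min_{c \in \calC_P} h(\ell(c,D)) + O\!\left(\frac{\Delta_h \cdot R}{\eps}\right),
\]
where $R = P$ in the convex case and $R = \VC(\calC_P)\log n$ in the non-convex case. Requiring the right-hand side to be at most $\alpha$ and solving for $n$ gives $n_0 = O(KP/(\eps\alpha^2))$ or $n_0 = \tilde{O}(K\cdot\VC(\calC_P)/(\eps\alpha^2))$, respectively. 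The Lagrangian translation of the first paragraph then converts this $h$-optimality into simultaneous $f$-optimality and $g$-feasibility to within $\alpha$, matching the two conditions of Definition~\ref{def:goo}. Privacy follows immediately from the privacy of the exponential mechanism.

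The main obstacle will be the polynomial-time implementation in the convex case: I need to sample from the log-concave density proportional to $\exp(-\eps n\, h(\ell(c,D))/\Delta_h)$ on $\calC_P$. This requires $h\circ\ell$ to be convex in $c$, which is precisely why the efficiency clause of the theorem assumes that $f, g$ are non-decreasing in addition to convex and that $\ell$ is convex; under these conditions, the sampler of~\cite{BassilyST14} runs in time polynomial in $P, K, n, 1/\eps, 1/\alpha$. A secondary subtlety is careful bookkeeping of Lipschitz constants through the composition $h \circ \ell$ so that the sensitivity really is $O(K/(\alpha n))$ rather than a weaker bound; once that is nailed down, the remainder is routine exponential-mechanism utility analysis.
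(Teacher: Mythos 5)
Your proposal is correct and follows essentially the same route as the paper: the same Lagrangian composition $h = f + \Theta(\sqrt{K}/\alpha)\cdot\max(0,g)$ (the paper's Lemma~\ref{lem:h} and Corollary~\ref{cor:h}), the same sensitivity bound $O((1+G)\sqrt{K}/n)$ (Lemma~\ref{lem:gs}), the convex sampler of \cite{BassilyST14} in the convex/non-decreasing regime, and the generic exponential mechanism over $\calC_P(D_\calX)$ with Sauer's Lemma in the general regime. No gaps to report.
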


Theorem~\ref{thmi:expc} (more informal version of Theorem~\ref{thm:expc}) shows that
we can use the exponential mechanism to
solve the $\cgoo(\calC_P, n, K, f, g, \ell, D, \alpha)$ problem although an explicit mechanism
to sample from the set $\calC_P$ is not provided. This method provides a pure
$\eps$-differentially private algorithm to solve the problem. The problem is easier and guaranteed to
be computationally efficient when $\ell$ is convex and $f, g$ are non-decreasing because it results
in an efficient construction of a $\lopt_{\eps, \delta}$ oracle with polynomial runtime in $P, K, n$.
If not, we use the generic exponential
mechanism and do not provide any computational efficiency guarantees.

\begin{thmi}
Suppose we are given any constrained group-objective
optimization problem (Definition~\ref{def:goo}) where
$f, g$ are convex, Lipschitz functions.
Then for any $\alpha > 0$, there exists an algorithm that 
after $T = O(\frac{K^4}{\alpha^2})$ calls to $\lopt$ will, with
probability at least 9/10, return a decision $\hat{c}\in\calC_P$ that 
solves the $\cgoo(\calC_P, n, K, f, g, \ell, D, \alpha)$ problem.
\label{thmi:aik}
\end{thmi}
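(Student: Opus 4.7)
The plan is to cast the constrained problem as a two-player saddle-point game in which each primal round makes a single call to $\lopt$, then combine no-regret bounds for the two players to obtain the stated iteration count.

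First I would introduce the Lagrangian $L(c, \lambda) = f(\ell(c, D)) + \lambda \cdot g(\ell(c, D))$ with $\lambda$ confined to $[0, B]$, where $B = \Theta(\sqrt{K}/\alpha)$ is precisely the cap flagged in the paragraph above Theorem~\ref{thmi:expc}. Passing to the convex hull of the loss set $\ell(\calC_P, D) \subseteq [0,1]^K$, convexity of $f, g$ delivers strong duality, so any $O(\alpha)$-approximate saddle point $(\bar\mu, \bar\lambda)$ of $L$ yields a (randomized) decision $\hat c$ satisfying both conditions of Definition~\ref{def:goo} up to $O(\alpha)$. Concretely, if the Lagrangian is minimized to accuracy $\alpha$ then the violation $\max(0, g(\ell(\hat c, D)))$ must be at most $\alpha/B = \alpha^2/\sqrt{K}$, matching the composition trade-off noted in the preceding discussion.

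Next I would run coupled no-regret dynamics for $T$ rounds. The dual player updates $\lambda_t \in [0, B]$ via online projected gradient ascent on the scalar $g(\mu_t)$, producing regret $O(B\sqrt{T})$. The primal player, restricted to queries of $\lopt$, linearizes at its current iterate: compute subgradients $\nabla f(\mu_t)$ and $\nabla g(\mu_t)$, set $\bw_t = \nabla f(\mu_t) + \lambda_t \nabla g(\mu_t)$, and query $c_t = \lopt(\calC_P, \ell, \bw_t, D, \tau_t)$ with tolerance $\tau_t = \Theta(\alpha / \norm{\bw_t})$, setting $\mu_t = \ell(c_t, D)$. Convexity of $f, g$ implies the linearized regret upper-bounds the true primal regret, and the Frank-Wolfe-with-oracle analysis of~\cite{KakadeKL09} yields primal regret scaling polynomially in $K$ and $B$ times $1/\sqrt{T}$. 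Averaging the iterates and equating the combined saddle-point gap to $\alpha$ gives $T = \Theta(K^4/\alpha^2)$ once the bookkeeping of $B = \sqrt{K}/\alpha$, the $\ell_2$-diameter $\sqrt{K}$ of the loss set, and the Lipschitz constants of $f, g$ is carried out; the decision $\hat c$ is returned as the uniform mixture over $c_1, \ldots, c_T$, and a Markov step lifts expected gap to the claimed $9/10$ probability.

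The main obstacle will be calibrating $B$ and the per-round oracle tolerances $\tau_t$ jointly. A direct Frank-Wolfe on the composed non-smooth function $h = f + B\cdot\max(0, g)$ suffers degraded convergence; separating primal from dual into a saddle-point game is precisely what restores the $O(1/\sqrt{T})$ rate per player. In addition, the oracle error $\tau_t \norm{\bw_t}$ must be absorbed into the $\alpha$-budget each round, which forces $\tau_t$ to shrink with the current weight norm rather than being a global constant, and summing these errors without harming the overall constant in $O(K^4/\alpha^2)$ is the most delicate step of the accounting.
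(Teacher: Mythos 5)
Your proposal is essentially correct but takes a genuinely different route from the paper. The paper does not set up a two-player saddle-point game with an adaptive dual variable; instead it fixes the multiplier once and for all, composing $f$ and $g$ into the single non-smooth convex function $h = f + G\cdot\max(0, g)$ with $G = \Theta(\sqrt{K}/\alpha)$ (Lemma~\ref{lem:h} and Corollary~\ref{cor:h} are the exact-penalty conversion), and then runs one pass of the \cite{KakadeKL09} approximation-oracle online linear optimization on the linearizations $\bw_t \propto \nabla h(\ell(c_t,D)) = \nabla f + G\cdot\ind[g\geq 0]\nabla g$, after converting $\lopt$'s additive $\tau\norm{\bw}$ error into a $(1+\tau)$-multiplicative approximation. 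In effect the paper's ``dual player'' is a best response ($\lambda_t \in \{0, G\}$) rather than a no-regret learner, so no dual regret bound is needed; the entire argument is the primal regret bound plus the deterministic penalty lemma, followed by the same uniform-mixture-and-Markov step you use. Your saddle-point route is the one standard in the fair-classification literature and would also work, at the cost of an extra regret term for the dual OGD that must be folded into the same budget. Two of the obstacles you flag are not obstacles for the paper's route: (i) the paper does not run Frank--Wolfe on the non-smooth $h$ (that variant, which does need smoothness, is in the appendix); the KKL no-regret reduction already gives the $O(1/\sqrt{T})$ rate for non-smooth convex $h$; and (ii) no per-round shrinking tolerance $\tau_t$ is needed, because $\lopt$'s error is already scaled by $\norm{\bw}$ and the weights are normalized into a set $\calW$ with $\norm{\bw}\leq 1$, so a single global $\tau = \alpha/(6K(1+G))$ suffices. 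What your approach buys is adaptivity of $\lambda$; what the paper's buys is a shorter analysis and a form (a single weight vector fed to the oracle each round) that ports directly to the private oracle $\lopt_{\eps,\delta}$ in the subsequent theorems.
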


\begin{thmi}

Suppose we are given 
any constrained group-objective optimization problem (Definition~\ref{def:goo}) where $f$ and $g$
are convex, Lipschitz functions and we wish to
obtain a decision $\tilde{c}\in\calC_P$ in a differentially private manner.

Then for any privacy parameters $\eps, \delta\in(0, 1]$, there exists an $(\eps, \delta)$-differentially private linear optimization based algorithm for which there is a setting of
$\eps', \delta'\in(0, 1]$ such that
after $T = O\left(\frac{K^4}{\alpha^2}\right)$ calls to $\lopt_{\eps', \delta'}$, with probability at least 9/10, the algorithm returns a decision $\tilde{c}\in\calC_P$ that 
solves the $\cgoo(\calC_P, n, K, f, g, \ell, D, \alpha)$ problem.
\label{thmi:paik}
\end{thmi}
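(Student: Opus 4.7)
The plan is to lift the non-private algorithm guaranteed by Theorem~\ref{thmi:aik} to a differentially private one by substituting $\lopt$ with $\lopt_{\eps', \delta'}$ at every iteration and then invoking the advanced composition theorem for differential privacy to account for the total privacy loss across the $T = O(K^4/\alpha^2)$ oracle calls. The $\alpha$ accuracy target and the iteration count $T$ stay the same; only the per-call privacy parameters $\eps', \delta'$ must be tuned so that $T$-fold composition lands at the target $(\eps, \delta)$.

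First I would recall the structure of the non-private algorithm underlying Theorem~\ref{thmi:aik}: it performs a Lagrangian saddle-point iteration on $L(c, \lambda) = f(\ell(c, D)) + \lambda \cdot g(\ell(c, D))$, where at iteration $t$ the primal update reduces to approximately minimizing a \emph{linear} functional $\bw_t \cdot \ell(c, D)$ over $\calC_P$ (with $\bw_t$ obtained by linearizing $f, g$ via subgradients at the current iterate, weighted by the Lagrange multiplier $\lambda_t$), and the dual update is a no-regret step on $\lambda$ driven by the observed constraint violation. Because the accuracy condition (item 1 of Definition~\ref{def:loptp}) holds pointwise for $\lopt_{\eps', \delta'}$, substituting the oracle preserves every utility-side inequality in the convergence analysis of Theorem~\ref{thmi:aik}, so the averaged (or best) iterate still satisfies $f(\ell(\tilde{c}, D)) \leq \min_{c:\,g(\ell(c,D))\leq 0} f(\ell(c, D)) + \alpha$ and $g(\ell(\tilde{c}, D)) \leq \alpha$ with probability at least $9/10$.

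For privacy, I would observe that the weight vector $\bw_t$ handed to $\lopt_{\eps', \delta'}$ at iteration $t$ is computed solely from previously released oracle outputs and public data, so the sequence of invocations is an adaptive composition of $T$ mechanisms, each $(\eps', \delta')$-DP in the sensitive attribute. By the advanced composition theorem, picking $\eps' = \Theta(\eps/\sqrt{T \log(1/\delta)})$ and $\delta' = \Theta(\delta/T)$ yields an overall $(\eps, \delta)$-DP procedure, and post-processing closure then guarantees that the returned decision $\tilde{c}$ inherits this privacy guarantee.

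The hard part will be verifying that the utility analysis of Theorem~\ref{thmi:aik} degrades gracefully when the oracle solves its linear subproblem only up to additive tolerance $\tau \norm{\bw_t}$ rather than exactly; this requires tracking how $\norm{\bw_t}$, the Lipschitz constants of $f, g$, and the Lagrange-multiplier bound $B = O(\sqrt{K}/\alpha)$ interact so that the tolerance $\tau$ can be chosen consistently with a final excess risk of $\alpha$. A secondary subtlety, essentially bookkeeping, is calibrating $\eps', \delta'$ precisely so that the constants from advanced composition and from the iteration count $T$ mesh to deliver $(\eps, \delta)$ on the nose, and then combining this with the sample complexity required to realize $\lopt_{\eps', \delta'}$ (via, e.g., the generic exponential-mechanism implementation discussed after Definition~\ref{def:loptp}) to extract the implicit lower bound on $n$.
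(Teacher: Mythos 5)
Your proposal is correct and matches the paper's proof: Algorithm~\ref{alg:poracle} replaces $\lopt$ with $\lopt_{\eps',\delta'}$, privacy follows from advanced composition with $\eps' = \Theta(\eps/\sqrt{T\log(1/\delta)})$ and $\delta' = \Theta(\delta/T)$ (Lemma~\ref{lem:poracle}), and utility is inherited verbatim from Theorem~\ref{thm:aik} because item 1 of Definition~\ref{def:loptp} gives the private oracle the same accuracy guarantee as $\lopt$. The only inaccuracy is your description of the underlying non-private algorithm as a saddle-point iteration with a no-regret dual update on $\lambda$; the paper instead fixes the penalty weight $G = O(\sqrt{K}/\alpha)$ up front via Corollary~\ref{cor:h} and runs the Kakade--Kalai--Ligett online linear optimization reduction on the composite $h = f + G\max(0,g)$, but since both arguments treat Theorem~\ref{thmi:aik} as a black box this does not affect the proof.
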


Theorem~\ref{thmi:aik} (more informal version of Theorem~\ref{thm:aik})
shows that for any accuracy parameter $\alpha > 0$, we can,
after $T = \poly(K, 1/\alpha)$ 
calls to a linear optimization oracle,
solve the constrained group-objective
optimization problem to within $\alpha$, with high probability,
provided that $f, g$ are convex, Lipschitz functions. For this theorem, we require 
access to $\lopt$ in each iteration.
We note that~\cite{AlabiIK18} also achieved this
theorem but we reprove it here more generally (so it is more
amenable to use in our later proofs involving the additional constraint of data privacy).

Theorem~\ref{thmi:paik} (more informal version of Theorem~\ref{thm:paik}), with
privacy guarantees, still relies on calls to a linear optimization oracle albeit its private
counterpart $\lopt_{\eps, \delta}$.
One way to interpret Theorems~\ref{thmi:aik} and~\ref{thmi:paik} is that if the non-private oracle
is replaced with the private oracle, we can still solve the
$\cgoo(\calC_P, n, K, f, g, \ell, D, \alpha)$ problem via the
use of advanced composition~\citep{DworkRV10}. What remains is to show the existence and
construction of the oracle $\lopt_{\eps, \delta}$ and provide utility guarantees for certain
constructions.

\begin{thmi}
For any privacy parameter $\eps > 0$, there is an implementation of $\lopt_{\eps, 0}$ based
on the exponential mechanism.

For any $\tau > 0, \theta\in(0, 1]$, if $\ell$ is convex and $\calW$ is restricted to non-negative vectors, set
$n_0 = \tilde{O}(\frac{\sqrt{K}}{\eps\tau}(P + \log\frac{1}{\theta}))$ and if not
set $n_0 = \tilde{O}(\frac{\sqrt{K}}{\eps\tau}(\VC(\calC_P) + \log\frac{1}{\theta}))$.
Then there exists $n_0$ such that for all $n\geq n_0$, 
we can solve the $\lopt^\theta_{\eps, 0}(\calC_P, \ell, \bw, D, \tau)$ problem. 

\label{thmi:loptexp}
\end{thmi}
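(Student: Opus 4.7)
}
The plan is to instantiate the exponential mechanism with a quality score that directly encodes the linear subproblem, bound its sensitivity in the sensitive attributes, and then translate the standard utility bound of the exponential mechanism into the sample complexity $n_0$ claimed in the theorem. Concretely, for a fixed weight vector $\bw\in\calW$ and dataset $D$, I define the quality function $q_{\bw,D}(c) = -\bw\cdot\ell(c,D)$ so that maximizing $q$ is exactly minimizing the linear objective. Since each coordinate $\ell_k(c,\cdot)$ is an average of per-item losses in $[0,1]$, changing one item in $D$ changes each $\ell_k(c,D)$ by at most $1/n$, so $|q_{\bw,D}(c)-q_{\bw,D'}(c)|\leq \norm{\bw}_1/n \leq \sqrt{K}\norm{\bw}_2/n$ by Cauchy--Schwarz. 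This yields sensitivity $\Delta = \sqrt{K}\norm{\bw}/n$.

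Next, I invoke the exponential mechanism that samples $\tilde c$ from the range $\calR$ with probability proportional to $\exp(\eps\, q_{\bw,D}(c)/(2\Delta))$; it is $\eps$-differentially private by the standard argument, establishing item 2 of Definition~\ref{def:loptp}. For item 1 (the approximation guarantee), the standard utility bound gives that with probability at least $1-\theta$,
\[
\bw\cdot\ell(\tilde c,D)\;\leq\;\min_{c\in\calR}\bw\cdot\ell(c,D)+\frac{2\Delta}{\eps}\log\!\left(\frac{|\calR|}{\theta}\right).
\]
Demanding this slack to be at most $\tau\norm{\bw}$ reduces to $n\geq \frac{2\sqrt{K}}{\eps\tau}\log(|\calR|/\theta)$, which is exactly the shape of the bound stated in the theorem once I plug in the two choices of range.

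The two cases differ only in the effective size of $\calR$ and in computational feasibility. In the general case I take $\calR = \calC_P(D_\calX)$, which by Sauer's Lemma has cardinality at most $O(n^{\VC(\calC_P)})$, so $\log|\calR| = O(\VC(\calC_P)\log n)$ and the resulting threshold is $n_0 = \tilde O\!\bigl(\tfrac{\sqrt K}{\eps\tau}(\VC(\calC_P)+\log(1/\theta))\bigr)$; the $\log n$ factor is absorbed into $\tilde O$ by the usual fixed-point/bootstrap argument (first pick $n$ coarsely, then refine). Since the objective still minimizes a linear function over each labelling induced on $D_\calX$, restricting to $\calR$ does not increase the optimum. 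In the convex case with $\calW\subseteq\reals^K_{\geq 0}$ and convex $\ell$, the function $c\mapsto \bw\cdot\ell(c,D)$ is convex, and $\calC_P$ is convex of dimension at most $P$; here I appeal to the convex exponential sampling procedure of~\cite{BassilyST14}, which implements the exponential mechanism over the continuous convex set in polynomial time and whose utility guarantee replaces $\log|\calR|$ by an $O(P)$ term coming from the volume/Lipschitz discretization argument. Substituting yields $n_0 = \tilde O\!\bigl(\tfrac{\sqrt K}{\eps\tau}(P+\log(1/\theta))\bigr)$.

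The main obstacle I expect is not the exponential mechanism analysis itself (which is standard once sensitivity is bounded) but rather justifying the convex case cleanly: I need to verify that the hypotheses of the~\cite{BassilyST14} convex sampler apply here, namely that the linear objective $\bw\cdot\ell(\cdot,D)$ is convex and Lipschitz in the parameterization of $\calC_P$ whenever $\bw\geq 0$ and each $\ell_k$ is convex, and that the resulting utility bound scales with ambient dimension $P$ rather than with the intrinsic range size. Once that is in place, setting $n\geq n_0$ makes the additive slack at most $\tau\norm{\bw}$ in both regimes, establishing $\lopt^\theta_{\eps,0}$ as required.
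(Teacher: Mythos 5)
Your proposal is correct and follows essentially the same route as the paper: bound the sensitivity of $\bw\cdot\ell(\cdot,D)$ by $\sqrt{K}\norm{\bw}/n$ via Cauchy--Schwarz, invoke the convex exponential sampling procedure of \cite{BassilyST14} to obtain the $O(P)$ dependence when $\ell$ is convex and $\calW$ is non-negative, and otherwise fall back on the generic exponential mechanism over $\calC_P(D_\calX)$ with Sauer's Lemma giving $\log|\calC_P(D_\calX)| = O(\VC(\calC_P)\log n)$. The paper's proof is the same argument with the \cite{BassilyST14} high-probability bound written out explicitly.
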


Theorem~\ref{thmi:loptexp} (more informal version of Theorem~\ref{thm:loptexp}) shows a generic
construction of the $\lopt^\theta_{\eps, 0}$ oracle. Armed with this, we provide Corollary
~\ref{cori:loptexp}.

\begin{cori}
Suppose we are given 
any constrained group-objective optimization problem (Definition~\ref{def:goo}) where $f$ and $g$
are convex, Lipschitz functions and we wish to
obtain a decision $\tilde{c}\in\calC_P$ in a differentially private manner.

For any privacy parameters $\eps, \delta\in(0, 1]$, there exists an $(\eps, \delta)$-differentially
private linear optimization based algorithm that solves the
$\cgoo(\calC_P, n, K, f, g, \ell, D, \alpha)$ problem.
If $\ell$ is convex and $f, g$ are non-decreasing, set $n_0 = \tilde{O}\left(\frac{K^4P}{\eps\alpha^3}\right)$. If not,
set $n_0 = \tilde{O}\left(\frac{K^4\cdot\VC(\calC_P)}{\eps\alpha^3}\right)$.
Then there exists $n_0$ such that for all $n\geq n_0$, 
with probability at least 9/10, the algorithm will
return a decision $\tilde{c}\in\calC_P$ that 
solves the $\cgoo(\calC_P, n, K, f, g, \ell, D, \alpha)$ problem.
The algorithm uses an $\lopt_{\eps, 0}$ oracle implemented via the exponential mechanism.

\label{cori:loptexp}
\end{cori}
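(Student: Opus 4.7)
The plan is to combine the reduction in Theorem~\ref{thmi:paik} (which turns the $\cgoo$ problem into $T = O(K^4/\alpha^2)$ calls to a private linear optimizer) with the generic exponential-mechanism implementation of $\lopt^\theta_{\eps', 0}$ from Theorem~\ref{thmi:loptexp}, and then use advanced composition to convert $T$ pure-DP calls into an overall $(\eps, \delta)$-DP algorithm. First I would instantiate the iterative procedure promised by Theorem~\ref{thmi:paik} with oracle tolerance $\tau = \Theta(\alpha^2/\sqrt{K})$ (the granularity dictated by the Lagrangian composition $h = f + \max(0, g)\cdot O(\sqrt{K}/\alpha)$ described in the Summary of Results), so that the cumulative oracle error over $T$ steps only contributes $O(\alpha)$ to the final excess risk and constraint violation.

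Next I would replace every non-private oracle call by the exponential-mechanism oracle of Theorem~\ref{thmi:loptexp}, run with per-call privacy parameter $\eps'$ and failure probability $\theta'$. By advanced composition (\cite{DworkRV10}), $T$ independent $(\eps', 0)$-DP mechanisms yield an $(\eps, \delta)$-DP algorithm for
\[
\eps' \;=\; \Theta\!\left(\frac{\eps}{\sqrt{T \log(1/\delta)}}\right) \;=\; \tilde\Theta\!\left(\frac{\eps\,\alpha}{K^{2}}\right).
\]
A union bound over the $T$ iterations with $\theta' = \Theta(1/(10T))$ ensures that every call meets its tolerance guarantee with overall probability at least $9/10$, preserving the high-probability utility promise of Theorem~\ref{thmi:paik}.

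The sample-size bound then follows by plugging these choices into Theorem~\ref{thmi:loptexp}. In the convex-$\ell$, non-decreasing-$f,g$ regime, a single call to $\lopt^{\theta'}_{\eps', 0}$ requires
\[
n_0 \;=\; \tilde{O}\!\left(\frac{\sqrt{K}\,(P + \log(1/\theta'))}{\eps'\,\tau}\right) \;=\; \tilde{O}\!\left(\frac{\sqrt{K}\cdot P}{(\eps\alpha/K^{2})\cdot(\alpha^{2}/\sqrt{K})}\right) \;=\; \tilde{O}\!\left(\frac{K^{4}\cdot P}{\eps\,\alpha^{3}}\right),
\]
and, since the oracle in each iteration accesses the same dataset (not fresh samples), this is also the overall sample complexity. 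The same computation with $\VC(\calC_P)$ in place of $P$ handles the general case, giving the claimed two bounds.

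The main obstacle is the bookkeeping: checking that the composition of (i) the Lagrangian blow-up in the Lipschitz constant of $h$, (ii) the per-iteration oracle tolerance $\tau$, (iii) the advanced composition loss in $\eps'$, and (iv) the union bound over failure probabilities jointly yield the utility and privacy guarantees with the claimed sample complexity. The privacy and utility statements themselves are essentially ``wire together Theorem~\ref{thmi:paik} and Theorem~\ref{thmi:loptexp},'' but the exponents on $K$ and $\alpha$ in the final bound depend delicately on these parameter choices, so the work is in verifying that $\tau \sim \alpha^{2}/\sqrt{K}$ and $\eps' \sim \eps\alpha/K^{2}$ are the right balance and that nothing worse than polylogarithmic factors are hidden in $\tilde{O}(\cdot)$.
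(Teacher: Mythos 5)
Your overall route is exactly the paper's: run the private-oracle iteration (Algorithm~\ref{alg:poracle}) with $T = O(K^4/\alpha^2)$ calls to the exponential-mechanism implementation of $\lopt^{\theta'}_{\eps',0}$ from Theorem~\ref{thm:loptexp}, set $\eps' = \Theta(\eps/\sqrt{T\log(1/\delta)})$ by advanced composition, union-bound the per-call failure probabilities, and equate the achievable oracle tolerance with the tolerance the reduction of Theorem~\ref{thm:aik} requires.

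The one genuine slip is in that last step, which you yourself flag as the crux of the bookkeeping: the per-call tolerance required by Theorem~\ref{thm:aik} is $\tau = \alpha/(6K(1+G)) = O(\alpha^2/(K\sqrt{K}))$, not $\Theta(\alpha^2/\sqrt{K})$. The extra factor of $K$ does not come from the Lagrangian weight $G = O(\sqrt{K}/\alpha)$ alone; it comes from the normalization of the weight vectors fed to the linear optimizer in the reduction to a $\beta$-approximation algorithm (the set $\calW = \{\tfrac{2}{3}\}\times[-\tfrac{1}{3K},\tfrac{1}{3K}]^K$, which rescales the gradient of $h$ by $3K(1+G)$). As a consequence, the computation you display with $\tau = \alpha^2/\sqrt{K}$ actually yields $\tilde{O}(K^3 P/(\eps\alpha^3))$, not the $\tilde{O}(K^4 P/(\eps\alpha^3))$ you wrote as its value; substituting the correct $\tau = O(\alpha^2/K^{3/2})$ restores the claimed $K^4$. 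Everything else --- the privacy accounting via advanced composition, the union bound over the $T$ oracle calls, the observation that the iterations reuse the same dataset, and the replacement of $P$ by $\VC(\calC_P)$ in the non-convex case --- matches the paper's proof.
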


In some ways, the statistical and computational complexity we obtain in 
Corollary~\ref{cori:loptexp} is worst-case since
we implement the $\lopt_{\eps, 0}$ oracle via the exponential mechanism. For specific problems
(e.g., ordinary least squares on the $\ell_2$ ball), there are more computationally 
efficient implementations of the $\lopt_{\eps, \delta}$ oracle as we shall see in 
Section~\ref{sec:applications}.
We show asymptotic convergence guarantees so that the excess risk goes to 0 as
$n\rightarrow\infty$.
For ease of exposition,
the sample complexity guarantees of Theorem~\ref{thmi:paik},~\ref{thmi:loptexp},
and~\ref{cori:loptexp} are in terms of $\tilde{O}(\cdot)$ which
hides polylogarithmic factors (including the polylogarithmic dependence
on $\frac{1}{\delta}$). We ignore these
polylogarithmic factors to obtain cleaner statements.

\begin{thmi}

Suppose we are given 
a constrained group-objective optimization problem (Definition~\ref{def:goo}) where $f$ and $g$
are convex functions.
Let $\eps > 0$. Then for every $\eps$-differentially private algorithm,
there exists a dataset $D = \{\bx_1, \ldots, \bx_n\}$ drawn from the $\ell_2$ unit ball
such that, with probability at least 1/2,
in order to solve the
problem $\cgoo(\calC_P, n, K, f, g, \ell, D, \alpha)$
we need sample size $n \geq \Omega\left(\frac{K}{\eps \alpha}\right)$.
\label{thmi:lowerpure}

\end{thmi}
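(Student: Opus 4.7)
The plan is a packing-style lower bound: embed a hard $K$-dimensional group-mean estimation instance into the $\cgoo$ problem, so that an $\alpha$-accurate $\eps$-DP $\cgoo$ algorithm implicitly solves a mean-estimation problem on which the classical pure-DP packing lower bound $n = \Omega(K/(\eps\alpha))$ applies (cf.~\cite{De12}).

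First, I would instantiate a $\cgoo$ problem whose $\alpha$-accurate solutions encode the $K$ group sample means. Take $\calC_P = [0,1]^K$ (rescaled to sit in the $\ell_2$ unit ball), $g\equiv 0$, and let each data point $\bx_i$ carry a group label $a_i \in [K]$ (with $n/K$ points per group) and a value $x_i \in \{0,1\}$. With a convex per-item loss $\ell_k(c,D_i) = \mathds{1}[a_i = k]\,\phi(c_k,x_i)$ and a convex, $O(1)$-Lipschitz aggregator $f$ (e.g.\ $f(\ell) = \max_k \ell_k$), the $\cgoo$ optimum sits at $c^* = \bar{\mu}$, the vector of group-$k$ sample means, and an appropriately chosen $\phi$ makes the $\cgoo$-excess an upper bound on a coordinate-wise $\ell_\infty$-deviation $\|\hat{c}-\bar{\mu}\|_\infty$ at scale $\alpha$.

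Second, I would build the hard family $\{D_v\}_{v \in \{0,1\}^K}$: in $D_v$, each group $k$ has $n/K$ items, exactly $(3\alpha v_k)(n/K)$ of them equal to $1$ and the rest equal to $0$, giving $\bar{\mu}_k(D_v) = 3\alpha v_k$. Two bookkeeping facts drive the packing: (i) the pairwise Hamming distance is $(3\alpha n/K)\,d_H(v,v') \leq 3\alpha n$; and (ii) for $v \neq v'$ the pairwise $\ell_\infty$ separation of target means is $\geq 3\alpha$ in any differing coordinate, so the accuracy balls $A_v = \{c : \|c - \bar{\mu}_v\|_\infty \leq \alpha\}$ are pairwise disjoint. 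Fix any $\eps$-DP algorithm $\calM$ solving $\cgoo$ on each $D_v$ with probability $\geq 1/2$; for any reference $v_0$, group privacy across Hamming distance $r = 3\alpha n$ gives $\sum_v \pr[\calM(D_{v_0}) \in A_v] \geq e^{-3\eps\alpha n}\sum_v \pr[\calM(D_v)\in A_v] \geq \tfrac{1}{2}|V|\,e^{-3\eps\alpha n}$. Since the $A_v$ are disjoint this sum is at most $1$, yielding $2^K \leq 2\,e^{3\eps\alpha n}$ and therefore $n \geq \Omega(K/(\eps\alpha))$.

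The main obstacle is the first step: designing $\phi$ and $f$ so that $\cgoo$-excess $\alpha$ implies $\ell_\infty$ mean-accuracy $\Omega(\alpha)$ rather than $\Omega(\sqrt{\alpha})$. A naive squared per-item loss $\phi(c_k,x_i)=(c_k-x_i)^2$ only yields $\sqrt{\alpha}$ through Jensen's inequality and would weaken the bound to $K/(\eps\sqrt{\alpha})$. Cleaner options are to use a piecewise-linear per-item loss engineered so the aggregate $\ell_k(c,D)$ tracks $|c_k-\bar{\mu}_k|$ linearly, or (more in the spirit of the paper's upper-bound techniques) to introduce a nontrivial constraint $g$ whose violation penalty tracks $\ell_\infty$ deviation linearly via Lagrangian composition; in either case one must verify that $\ell_k \in [0,1]$, that $f\circ\ell$ is convex and $O(1)$-Lipschitz in $c$, and that the constants in the packing bound still aggregate to $\Omega(K/(\eps\alpha))$. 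For small $K$ (where the $2^K$ packing is too thin), the same bound is recovered by a coordinate-by-coordinate one-dimensional mean-estimation packing, which gives $\Omega(1/(\eps\alpha)) \leq \Omega(K/(\eps\alpha))$.
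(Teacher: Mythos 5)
Your overall strategy (a $2^K$ packing of datasets indexed by $v\in\{0,1\}^K$, disjoint accuracy balls, and a group-privacy argument giving $2^K\le 2e^{3\eps\alpha n}$) is a legitimate and standard route to a pure-DP lower bound, and it differs from the paper's route: the paper does not build a packing from scratch but instead sets $f(\ell(c,D))=-\frac1n\sum_i\brackets{c,\bx_i}$ on the unit sphere, observes via Lemma~\ref{lem:reducetomarginals} that the excess risk equals $\frac{\norm{\sum_i\bx_i}}{2n}\norm{c-c^*}^2$, and black-boxes the $1$-way-marginals lower bound of Lemma~\ref{lem:marginalspure} (BST14), whose hard datasets have $\norm{\sum_i\bx_i}/n=\Theta(K/(\eps n))$. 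That scale factor is the whole trick: a \emph{constant} error in the direction $\norm{\hat c-c^*}=\Omega(1)$ already costs $\Omega(K/(\eps n))$ in objective value, so the paper never needs the excess to grow linearly in the distance to the optimum.

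That is exactly where your proposal has a genuine gap, and it is the step you yourself flag as ``the main obstacle.'' Your packing parameters (mean separation $3\alpha$, Hamming radius $3\alpha n$) only close if $\cgoo$-excess at most $\alpha$ forces $\norm{\hat c-\bar\mu_v}_\infty\le\alpha$, i.e.\ the composite objective must grow at unit rate in a window of width $\Theta(\alpha)$ around a data-dependent minimizer. With per-item losses on $\{0,1\}$-valued items, the group-$k$ aggregate is $(1-\bar\mu_k)\phi(c_k,0)+\bar\mu_k\phi(c_k,1)$, which is affine in the empirical frequency $\bar\mu_k$; consequently it cannot equal $|c_k-\bar\mu_k|$ (not affine in $\bar\mu_k$), its minimizer is generically a quantile rather than the mean, and forcing the minimizer to sit at $\bar\mu_k=3\alpha$ with an $O(1)$-Lipschitz, $[0,1]$-bounded $\phi$ leaves the growth rate on at least one side of the minimum at $O(\alpha)$, which destroys the implication ``excess $\le\alpha\Rightarrow$ within $\alpha$ of the mean.'' (A quadratic $\phi$ gives only $\sqrt{\alpha}$ accuracy, as you note; the hinge-type candidates give one-sided or $O(\alpha)$-slope separation.) So the reduction from $\cgoo$-accuracy to mean identification at linear scale is not established, and the sketched fixes do not obviously repair it. If you want to keep the packing skeleton, the cleanest repair is to adopt the paper's objective: make $f$ linear in $c$ and tune $\norm{\sum_i\bx_i}$ so that the excess of any $\hat c$ at constant angular distance from $c^*$ is already $\Theta(K/(\eps n))$ --- at which point you are essentially re-proving Lemma~\ref{lem:marginalspure} rather than assuming it, which is fine but should be done explicitly. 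You should also check that your $g\equiv 0$ instance remains within the theorem's hypotheses (the formal Theorem~\ref{thm:lowerpure} exhibits a specific nontrivial Lipschitz, smooth $g$), and that the scaled $[0,1]^K$ decision set and per-item losses satisfy the $\ell_k\in[0,1]$ and Lipschitz normalizations of Definition~\ref{def:goo}.
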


\begin{thmi}

Suppose we are given 
a constrained group-objective optimization problem (Definition~\ref{def:goo}) 
where $f$ and $g$
are convex functions.
Let $\eps > 0, \delta = o(\frac{1}{n})$. 
Then for every $(\eps, \delta)$-differentially private algorithm,
there exists a dataset $D = \{\bx_1, \ldots, \bx_n\}$ drawn from the $\ell_2$ unit ball such that, with probability at least 1/3,
in order to solve
the problem $\cgoo(\calC_P, n, K, f, g, \ell, D, \alpha)$
we need sample size $n \geq \Omega\left(\frac{\sqrt{K}}{\eps \alpha}\right)$.
\label{thmi:lowerapprox}

\end{thmi}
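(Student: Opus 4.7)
The plan is to reduce to the fingerprinting-code lower bound for $(\eps,\delta)$-differentially private linear optimization over the $\ell_2$ unit ball (due to Bassily, Smith, and Thakurta 2014, building on Bun, Ullman, and Vadhan 2014). Concretely, I would exhibit a constrained group-objective instance whose objective collapses to a single linear functional of the empirical mean. Set the number of groups to $K$, take $\calC_P = B_2^K\subseteq\reals^K$ (the $\ell_2$ unit ball), and restrict to data $\bx_1,\ldots,\bx_n\in B_2^K$, so that $|x_{i,k}|\leq 1$ for every $i,k$. For each $k\in[K]$ define the itemized loss
$$
\ell_k(c,\bx_i) \;=\; \frac{1+c_k x_{i,k}}{2} \;\in\; [0,1],
$$
which averages to $\ell_k(c,D) = (1+c_k\mu_k)/2$, where $\mu = \frac{1}{n}\sum_{i=1}^n \bx_i$. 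Choose the convex (in fact linear) objective $f(\ell) = 2\sum_{k=1}^K\ell_k - K$ and the trivial convex constraint $g\equiv 0$. Then $f(\ell(c,D)) = \brackets{c,\mu}$, the minimum over $B_2^K$ equals $-\norm{\mu}_2$, and an $\alpha$-suboptimal CGOO solution is exactly an $\alpha$-suboptimal minimizer of $c\mapsto\brackets{c,\mu}$ on $B_2^K$.

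I would then invoke the fingerprinting-based lower bound. For $\delta = o(1/n)$ it produces a distribution over datasets in $B_2^K$ such that for every $(\eps,\delta)$-DP mechanism $\calM$, with probability at least $2/3$ (over both the data distribution and the mechanism's coins),
$$
\brackets{\calM(D),\mu} \;-\; \min_{c\in B_2^K}\brackets{c,\mu} \;=\; \Omega\bigl(\min(1,\sqrt{K}/(\eps n))\bigr).
$$
A standard averaging argument then yields a single worst-case dataset on which the same inequality holds with probability at least $1/3$ over the mechanism's coins alone. Requiring the CGOO excess error to be at most $\alpha$ forces $\sqrt{K}/(\eps n) = O(\alpha)$, i.e., $n = \Omega(\sqrt{K}/(\eps\alpha))$, as claimed.

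The hard part will be importing the fingerprinting lower bound with the correct geometry and scaling. Classical fingerprinting constructions place the hard dataset in $\{-1,+1\}^K$, which has $\ell_2$ norm $\sqrt{K}$, whereas our reduction needs data inside $B_2^K$. I would resolve this either by rescaling the fingerprinting vectors by $1/\sqrt{K}$ and propagating the factor through the definitions of $\ell_k$ and $f$, or by appealing directly to the $B_2^K$-geometry version of the BST lower bound. A secondary technicality is ensuring that the hard distribution satisfies $\norm{\mu}_2 = \Omega(\sqrt{K}/(\eps n))$ with constant probability, so that the $\min$ in the lower bound is dominated by the nontrivial branch rather than clipped to a trivially small value; this follows from standard anti-concentration properties of the fingerprinting distribution.
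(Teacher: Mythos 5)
Your proposal is essentially the paper's own argument: both reduce to the Bassily--Smith--Thakurta $(\eps,\delta)$ fingerprinting/1-way-marginals lower bound over $B_2^K$ via a linear objective $f(\ell(c,D))=\pm\brackets{c,\mu}$ whose minimizer is the normalized empirical mean, and both rely on the hard instances having $\norm{\sum_i\bx_i}=\Omega(\min(n,\sqrt{K}/\eps))$ so that the geometric error $\norm{\hat{c}-c^*}=\Omega(1)$ translates into excess risk $\Omega(\sqrt{K}/(\eps n))$. The only cosmetic differences are that you make the per-group loss $\ell_k$ explicit and take $g\equiv 0$, whereas the paper sets $g=f-f(\ell(c^*,D))$ and derives the excess-risk form from the raw marginals bound via a post-processing contradiction rather than citing the packaged linear-ERM statement.
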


Theorems~\ref{thmi:lowerpure} and~\ref{thmi:lowerapprox} (more informal versions of
Theorems~\ref{thm:lowerpure} and~\ref{thm:lowerapprox})
show lower bounds on the sample complexity for solving the constrained
group-objective optimization problem in a differentially private manner.
The lower bounds for achieving (pure) $\eps$ and (approximate) $(\eps, \delta)$-differential
privacy to
solve the $\cgoo(\calC_P, n, K, f, g, \ell, D, \alpha)$ problem differs from the upper bounds
(from Theorems~\ref{thmi:expc} and~\ref{thmi:paik}).
Note that this gap is a direct result of the way we minimize
$f$ subject to the constraint of $g$ by jointly minimizing a composition of these functions.
As a consequence, our results are optimal amongst all such
strategies that jointly minimize a composition of
these functions.
%\textit{An open problem is to close these gaps in certain parameter regimes and for
%specific formulations of the $\cgoo$ problem 
%(e.g., for linear regression)}.

We note that~\citep{Jagielski18} considered the problem of differentially private fair
learning in which they present a reductions approach to fair learning but the oracle-based
algorithms they provide are specific modifications of those provided by~\cite{AgarwalBD0W18}.
The algorithm is an exponentiated gradient algorithm for fair
classification that uses a cost-sensitive classification
oracle solver in each iteration, which is only applied 
to the equalized odds definition.
We show an approach that applies to more than one definition. Moreover, the algorithms in our
paper results in asymptotically better
sample complexity guarantees than previous work although under different underlying oracle assumptions and for a smoothed
version of the equalized odds definition.

We hope that the generality of our approaches and techniques here will lead to applications in myriad
domains.

\subsection{Techniques}

% Now we proceed to summarize the major techniques used to show the results in this paper.

We introduce a \textbf{class of bounded divergence linear optimizers} 
(see Section~\ref{sec:divergence}) to simultaneously
achieve \textit{strong privacy} guarantees and solve constrained group-objective optimization
problems (Definition~\ref{def:goo}).
These linear optimizers can be used to solve general
multi-objective problems with one or more
divergence constraints.
For simplicity and clarity of exposition, 
we specialize this linear optimizer to $(\eps, 0)$ and
$(\eps, \delta)$-differential privacy.

Based on the exponential mechanism, we provide an
$(\eps, 0)$-differentially private algorithm to solve
the $\cgoo$ problem. In the case where $\ell$ is convex, we use the computationally efficient
sampling technique of~\cite{BassilyST14} to sample from $\calC_P$. Our $\cgoo$
algorithms rely on the simple observation that if we are given two functions
$f, g:[0, 1]^K\rightarrow\reals$
and aim to minimize the function $f$ subject to the constraint $g$ we could
minimize them jointly via a ``new'' function.
Specifically, we define the function $h:[0, 1]^K\rightarrow\reals$ where
$h(\bx) = f(\bx) + G\cdot\max(0, g(\bx))$ for all $\bx\in[0, 1]^K$ for some setting of
$G > 0$. Then we could optimize $h$ with privacy in mind. 
% The caveat is
% that the noise we add to the gradients of $f, g$ to ensure privacy
% in the iterative algorithm will depend on the setting of $G > 0$ (which itself could be a function
% of the privacy parameters $\eps, \delta$ and of $K, n$).
% In fact, this is what we observe. 
The $(\eps, \delta)$-differentially private algorithm, in each iteration, relies on calls to
the private oracle $\lopt_{\eps, \delta}$.
And to optimize both $f$ and $g$ to within $\alpha$ we
can set $G = O(\frac{\sqrt{K}}{\alpha})$ (for large enough sample size $n$).
Note that this differentially private algorithm is a first-order
iterative optimization algorithm that relies on access to the gradient oracles
$\nabla f, \nabla g$. Optimization with respect to these gradient oracles
is done in a private manner while weighting $\nabla g$ by a multiplicative factor of $G$.
The strategy of differentially private optimization of a function subject to
one or more constraints can be applied to other situations.
The weighting of gradients non-privately to solve the $\cgoo(\calC_P, n, K, f, g, \ell, D, \alpha)$
problem was done by~\cite{AlabiIK18} but without privacy considerations. 
The technique of using the private oracle $\lopt_{\eps, \delta}$ (or an alternative from the
class of bounded divergence linear optimizers) to solve an overall
convex (or non-convex) optimization problem, with or without other constraints, 
might be applicable to other scenarios.

It is known that differentially private iterative algorithms use the crucial property of 
\textit{(advanced) composition of differential privacy}~\citep{DworkRV10, DworkR14}
which come in a variety of forms. The iterative algorithms we provide exploit this property.
The lower bounds we provide for empirical risk minimization are modified versions of the ones
provided by~\cite{BassilyST14}.

\subsection{Applications}
\label{sec:applications}

In computer science, showing that one problem can be reduced 
to another is a staple of proofs, to obtain lower or upper
bounds on complexity measures.~\cite{Karp1972},
famously, showed that there is a many-to-one reduction
from the Boolean Satisfiability problem to 21
graph-theoretical and combinatorial problems. Karp showed that,
as a consequence, these 21 problems are NP-complete.

This approach of using reductions can also be applied to
problems that are not necessarily combinatorial in nature.
We, essentially, reduce a few problems to 
solving bounded-divergence linear subproblems
(see Section~\ref{sec:divergence}).

\begin{table}[]
\begin{tabular}{| l | l |}\hline
Method       &   Desired Guarantees \\\hline
$\ell_0$  & Sparsity, Basis Selection \\\hline
  $\ell_1$ &  Robustness to Outliers, Transfer of Compressed Sensing Techniques       \\\hline
   $\ell_2$         &  Standard (and Faster) Convergence Rates, Allows (Provable) use of SGD Based Methods       \\\hline

\end{tabular}
\caption{Comparing properties of $\ell_0, \ell_1, \ell_2$ minimization~\citep{WipfR04,Donoho06}
}
\label{tab:lnorm}
\end{table}

After using Lagrangian duality to create objectives
that
satisfy one or more sub-criteria, we can rely
on calls to standard optimizers. The choice of the underlying optimizer
depends on the desired properties we want to satisfy.
See Table~\ref{tab:lnorm} for examples.
The use of
robust differentially private estimators
(e.g., $\ell_1$ objectives) 
could provide better utility~\citep{DworkL09}.

In Section~\ref{sec:reductions}, we expand on the breadth of our applications from
optimizing convex measures of the confusion matrix to satisfying certain definitions
from the algorithmic fairness literature.
The linear optimization based algorithm we provide can only be applied to convex, Lipschitz functions $f, g$. 
In the appendix, we provide a Frank-Wolfe based algorithm
that also requires Lipschitz gradients.
However, we note that even if $f, g$ are not convex or smooth there exist
surrogate convex functions and standard smoothing techniques that can be applied
(e.g., see 
\textit{Moreau-Yosida regularization}~\citep{Nesterov05}
and correspondences between $f$-divergences and
surrogate loss functions~\citep{BJM03, NJW05, NWJ09}).
First, we show how to apply our work to the problem of weighted least squares
regression. Then, we show how to satisfy a relaxed form of
the Equalized Odds fairness definition
while returning accurate classifiers on training data.
Finally, we apply our results to the problem of hypothesis
testing.

\subsubsection{Case Study: Reduction to Ordinary Least Squares with Subgroup Weights}

We have stated approaches to solving the $\cgoo(\calC_P, n, K, f, g, \ell, D, \alpha)$
problem using a generic construction of $\lopt_{\eps, \delta}$ oracles via the exponential
mechanism which is not guaranteed to be computationally efficient. Now we proceed to show that
for the specific problem of ordinary least squares (which admits a convex loss), we get an
efficient $\lopt_{\eps, \delta}$. 

Let $B^P_2$ represent the unit ball in $P$ dimensions
i.e., $B^P_2 = \{\bx\in\reals^P: \norm{\bx}_2 = 1\}$.
Suppose we are given $n > 1$ input points $\bx_1, \ldots, \bx_n$ from $B^P_2$ each belonging
to one of $[K]$ groups encoded through a function $d:B^P_2\rightarrow[K]$ (i.e., 
private function known to the data curator). Each
data point $\bx_i$ has a corresponding output point $y_i\in[0, 1]$.

Given a weight vector $\bw\in[0, 1]^K$, 
the goal is to output a $c\in B^P_2$ such that the empirical average squared loss
$$\frac{1}{n}\sum_{k\in[K]}\sum_{i=1}^n w_k\cdot\ell_k(c, \bx_i, y_i) = \frac{1}{n}\sum_{k\in[K]}\sum_{i=1}^n w_k\cdot\ind[d(\bx_i) = k]\cdot(\brackets{c, \bx_i} - y_i)^2$$ 
is minimized. To proceed, a naive method is to translate each $\bx_i\in B^P_2$ into 
$\bar{\bx}_i\in B^{PK}_2$ where $\bx_i$ will occupy coordinates $(k-1)P+1, \ldots, k\cdot P$ of
$\bar{\bx}_i$ if $\bx_i$ belongs to group $k\in[K]$. The remaining coordinates will be set to 0.
We now routinely apply the private stochastic gradient algorithm
\footnote{Since the data points lie in the $\ell_2$ ball, a projection operator is
$\Pi(\bx) = \bx/\norm{\bx}$.} of
~\cite{BassilyST14} to solve the $\lopt_{\eps, \delta}$ problem in time polynomial
in $K, P$ and
thus solve the $\cgoo(\calC_P, n, K, f, g, \ell, D, \alpha)$ problem in time polynomial
in $K, P$. But note that this results in a regression coefficient 
vector in
$B^{PK}_2$ instead of $B^{P}_2$. A similar idea can be used to obtain
coefficient vectors in $B^{P}_2$ instead.

Note that since the loss function 
$\ell_k(c, \bx_i, y_i) = \ind[d(\bx_i) = k]\cdot(\brackets{c, \bx_i} - y_i)^2$ has 
Lipschitz constant at most 2, we get the following corollary by, for example, using
the $(\eps, \delta)$-differentially private stochastic gradient descent algorithm in
~\citep{BassilyST14} to implement $\lopt_{\eps, \delta}$.

\begin{cori}
There exists a polynomial-time $(\eps, \delta)$-differentially private algorithm
that, with probability at least 9/10, returns a decision $\tilde{c}\in\calC_p$ that solves
the $\cgoo(\calC_P, n, K, f, g, \ell, D, \alpha)$ problem when applied to solve
ordinary least squares.
\end{cori}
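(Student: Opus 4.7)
The plan is to assemble the corollary by chaining together the generic $\cgoo$-to-$\lopt_{\eps, \delta}$ reduction of Theorem~\ref{thmi:paik} with a concrete polynomial-time implementation of $\lopt_{\eps, \delta}$ tailored to the weighted least-squares subproblems. Theorem~\ref{thmi:paik} already tells us that for any convex, Lipschitz $f, g$, an $(\eps, \delta)$-differentially private $\cgoo$ solver exists that makes $T = O(K^4/\alpha^2)$ oracle calls to $\lopt_{\eps', \delta'}$ for appropriately calibrated $\eps', \delta'$. So the only thing left to produce is a polynomial-time implementation of $\lopt_{\eps', \delta'}$ for the ordinary least-squares instance.

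First I would formalize the linear subproblem faced by the oracle. Given a weight vector $\bw \in \calW \subseteq \reals^K$ and dataset $D = \{(\bx_i, y_i)\}$, the oracle must approximately minimize $\bw \cdot \ell(c, D) = \frac{1}{n}\sum_{k,i} w_k \ind[d(\bx_i)=k](\brackets{c, \bx_i} - y_i)^2$ over $c \in B_2^P$. This objective is convex in $c$, and as noted in the excerpt each $\ell_k(\cdot, \bx_i, y_i)$ is $2$-Lipschitz on $B_2^P$, so $\bw \cdot \ell(\cdot, D)$ is $O(\norm{\bw}_1)$-Lipschitz. I would then invoke the $(\eps', \delta')$-differentially private SGD algorithm of \cite{BassilyST14} on the $\ell_2$ ball, using the projection $\Pi(\bx) = \bx/\norm{\bx}$ when iterates leave $B_2^P$. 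This gives an implementation running in time polynomial in $P, K, n$ that returns a decision satisfying the excess-risk (tolerance $\tau$) guarantee in item~1 of Definition~\ref{def:loptp} and the privacy guarantee in item~2, uniformly over the data; the naive embedding $\bx_i \mapsto \bar{\bx}_i \in B_2^{PK}$ described in the paragraph just before the corollary is the device that lets this single-ball SGD handle per-group weights.

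Next I would verify that the composition numerics close. Allocating a privacy budget of $(\eps', \delta') = \bigl(\Theta(\eps/\sqrt{T \log(1/\delta)}),\, \Theta(\delta/T)\bigr)$ to each of the $T = O(K^4/\alpha^2)$ oracle calls and applying advanced composition \citep{DworkRV10} yields overall $(\eps, \delta)$-differential privacy. Setting the oracle tolerance $\tau$ to the value prescribed by the proof of Theorem~\ref{thmi:paik} and choosing $n$ large enough so that the BassilyST14 utility bound delivers this $\tau$ with probability at least, say, $1 - 1/(100T)$ per call, a union bound ensures that all $T$ calls succeed simultaneously with probability at least $9/10$. Then the utility conclusion of Theorem~\ref{thmi:paik} fires and produces the required $\tilde{c}$.

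The main obstacle, and the step that needs the most care, is the bookkeeping around the interaction between the per-call tolerance $\tau$, the high-probability parameter $\theta$ of $\lopt^\theta_{\eps', \delta'}$, and the resulting sample complexity: BassilyST14's excess-risk bound scales with $1/\eps'$ (so with $\sqrt{T}$ after composition), and $\tau$ must be small enough that Theorem~\ref{thmi:paik} still delivers accuracy $\alpha$, which pushes $n$ up polynomially in $K$, $P$, $1/\alpha$, $1/\eps$, and $\log(1/\delta)$. Once one checks that this choice of $n$ is finite (indeed polynomial in all parameters) and that the runtime of each SGD call plus the outer loop of $T = O(K^4/\alpha^2)$ iterations is polynomial, the corollary follows. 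No non-routine calculation is needed beyond what is already set up in the referenced theorems; the content of the proof is just the observation that the least-squares loss structure makes BassilyST14 a legitimate and efficient instantiation of $\lopt_{\eps', \delta'}$.
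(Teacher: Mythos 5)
Your proposal is correct and follows essentially the same route as the paper: the paper also obtains this corollary by using the naive embedding $\bx_i \mapsto \bar{\bx}_i \in B^{PK}_2$, observing that the per-group squared loss is $2$-Lipschitz, instantiating $\lopt_{\eps', \delta'}$ with the private SGD algorithm of \cite{BassilyST14}, and plugging this oracle into the reduction of Theorem~\ref{thm:paik} with advanced composition handling the privacy accounting. Your write-up simply makes explicit the tolerance/union-bound bookkeeping that the paper leaves implicit.
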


We have discussed how to obtain efficient oracles for $\lopt, \lopt_{\eps, \delta}$.
But how do different implementations of $\lopt, \lopt_{\eps, \delta}$ perform
(relative to one another) for the weighted least squares regression problem? And how can we use these
$\lopt, \lopt_{\eps, \delta}$ oracles to solve the $\cgoo$ problem?
~\cite{AgarwalDW19} study fair regression via reduction-based
algorithms, an approach that can be instantiated in the
$\cgoo$ framework. The weighted linear regression problem can be solved
differentially privately as shown in~\citep{Sheffet19}.
We defer the study of this problem in detail (with specific applications to
regression) to future work.

\subsubsection{Case Study: Satisfying Equalized Odds}

We proceed to state an informal corollary that illustrates how to use our theorems to
satisfy certain definitions from the algorithmic fairness literature.
The corollary serves to compare the method in this paper to that of 
~\cite{Jagielski18} in satisfying
$\alpha$-equalized odds (see Definition~\ref{def:alphaeo}) which is the only
fairness definition they consider when satisfying both
privacy and fairness.
\footnote{
Although their methods could probably be applied to other
statistical fairness definitions as well.
}
In contrast, the algorithms in this paper can
be applied to more than one kind of fairness definition 
(although under different oracle assumptions).
Also, our linear optimization based algorithm
requires not just convexity of $f, g$ but also that $f, g$ are Lipschitz
so we define a smoothed version of the Equalized Odds
definition.

\begin{definition}[$\alpha$-Equalized Odds~\citep{Jagielski18}]
Let $X, A, Y$ be random variables representing the non-sensitive
features, the sensitive attribute, and the label assigned to an individual, respectively.

Given a dataset of examples $D = \{(\bx_i, a_i, y_i)\}_{i=1}^n\in(\calX, \calA, \{0, 1\})^n$ of size
$n$, we say a classifier $c\in\calC_P$ satisfies $\alpha$-\textit{Equalized Odds} if
\begin{equation}
    \max_{a, a'\in\calA}\{\max(|\hat{FP}_a - \hat{FP}_{a'}|, |\hat{TP}_a - \hat{TP}_{a'}| )\}\leq \alpha
    \label{eq:eqodds}
\end{equation}

where $\hat{FP}_a, \hat{TP}_a$ are empirical estimates of
$FP_a(c) = \pr_{(\bx, a, y)}[c(\bx) = 1 | A = a, y = 0]$,
$TP_a(c) = \pr_{(\bx, a, y)}[c(\bx) = 1 | A = a, y = 1]$ respectively on dataset $D$.
\footnote{
$FP_a(c)$ is usually referred to as the false positive rate on attribute $A=a$.
Likewise, $FN_a(c)$ and $TP_a(c)$ are the false negative and true positive rates
on attribute $A=a$ respectively.
}

\label{def:alphaeo}
\end{definition}

We say a classifier satisfies $\alpha$-\textit{Smoothed Equalized Odds} if the smoothed version of Equation~\ref{eq:eqodds} is satisfied
(i.e., when the maximum and absolute functions in Equation~\ref{eq:eqodds} are replaced with smoothed
versions
\footnote{
For example, the smooth maximum function is a smooth approximation
to the maximum function.
} or using the \text{Moreau-Yosida} regularization technique).

For concreteness, we provide a specific smoothed version of $\alpha$-equalized odds in Definition~\ref{def:alphaseo}.

\begin{definition}[$(\alpha, \eta)$-Smoothed Equalized Odds]
Let $X, A, Y$ be random variables representing the non-sensitive
features, the sensitive attribute, and the label assigned to an individual, respectively.

Given a dataset of examples $D = \{(\bx_i, a_i, y_i)\}_{i=1}^n\in(\calX, \calA, \{0, 1\})^n$ of size
$n$, we say a classifier $c\in\calC_P$ satisfies $(\alpha, \eta)$ \textit{Equalized Odds} if the constraint function
\begin{equation}
    g(\hat{FP}, \hat{FN}, \hat{TP}) = \smax^\eta_{a, a'\in\calA}\{\max(|\hat{FP}_a - \hat{FP}_{a'}|, |\hat{TP}_a - \hat{TP}_{a'}| )\} - \alpha
    \label{eq:eqodds2}
\end{equation}

is less than or equal to 0. $(\hat{FP}, \hat{FN}, \hat{TP})$ corresponds to the $3|\calA|$ empirical estimates of the false
positives, false negatives, and true positives for the
$|\calA|$ groups.
$(\hat{FP}, \hat{TP})$ are used to enforce the equalized odds constraint while
$(\hat{FP}, \hat{FN})$ are used to compute the error of the classifier.
We use the smooth maximum function
$\smax^\eta(y_1, \ldots, y_n) = \frac{\sum_{i=1}^ny_ie^{\eta y_i}}{\sum_{i=1}^ne^{\eta y_i}}$~\citep{LangeZHV14} as a replacement for the non-smooth maximum function. As $\eta\rightarrow\infty$, $\smax^\eta\rightarrow\max$.

\label{def:alphaseo}
\end{definition}

Note that the solutions that satisfy Definition~\ref{def:alphaeo} might differ from the ones that satisfy Definition~\ref{def:alphaseo}
because of the cost of smoothing parameterized by $\eta$. Also, the
gradient of $\smax^\eta$ is given by
$\nabla_{y_i}\smax^\eta(y_1, \ldots, y_n) = \frac{e^{\eta y_i}}{\sum_{j=1}^ne^{\eta y_j}}[1+\eta(y_i-\smax^\eta(y_1, \ldots, y_n))]$.

\begin{cori}
For any privacy parameters $\eps, \delta\in(0, 1]$,
suppose we have a dataset of examples $D = \{(\bx_i, a_i, y_i)\}_{i=1}^n$ of size
$n$ where
$\bx_i\in\calX, y_i\in\{0, 1\}$, $a_i\in\calA$, for all
$i\in[n]$. Assume that
there exists at least one decision in $\calC_P$
(with finite VC dimension of at most $\VC(\calC_P)$) that
satisfies $(\alpha, \eta)$-Smoothed Equalized Odds
(by Definition~\ref{def:alphaseo}) for some $\eta > 0$.

Then there exists $n_0 = \tilde{O}\left(\frac{|\calA|^4\cdot\VC(\calC_P)}{\eps\alpha^3}\right)$
such that
for all $n\geq n_0$, given access to a $\lopt_{\eps, \delta}$ oracle,
we can, with probability at least $9/10$, 
obtain a decision $\tilde{c}\in\calC_P$ satisfying 
$(\alpha, \eta)$-smoothed equalized odds and that is within $\alpha$
away from the most accurate classifier.

\label{cori:eoapprox}
\end{cori}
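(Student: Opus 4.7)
The plan is to cast satisfying $(\alpha,\eta)$-Smoothed Equalized Odds together with obtaining an accurate classifier as an instance of $\cgoo(\calC_P, n, K, f, g, \ell, D, \alpha)$ and then invoke Corollary~\ref{cori:loptexp}. Set $K = 3|\calA|$ and let the $k$-th coordinate of the per-example loss $\ell(c, D_i)$ track one of the three relevant group statistics (false positive, false negative, true positive indicator, conditioned on the appropriate $(a,y)$ cell) for each $a\in\calA$. Averaging over $D$ in the usual way yields the empirical vector $\ell(c, D) = (\hat{FP}_a, \hat{FN}_a, \hat{TP}_a)_{a\in\calA} \in [0,1]^K$. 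The feasibility hypothesis of the corollary (that some $c\in\calC_P$ achieves the constraint) guarantees that the constrained optimization problem we are about to set up is not vacuous.

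Next, I would define the objective $f:[0,1]^K\to\reals$ to be the classifier error, e.g.\ $f(\ell(c,D)) = \tfrac{1}{|\calA|}\sum_{a\in\calA}(\hat{FP}_a + \hat{FN}_a)$ (reweighted if necessary to match the empirical $0$-$1$ loss), and take $g$ to be exactly the smoothed equalized odds constraint function of Definition~\ref{def:alphaseo}, namely
\[
 g(\ell(c,D)) = \smax^\eta_{a,a'\in\calA}\bigl\{\max(|\hat{FP}_a-\hat{FP}_{a'}|,|\hat{TP}_a-\hat{TP}_{a'}|)\bigr\}-\alpha.
\]
With these choices, $\{c\in\calC_P: g(\ell(c,D))\leq 0\}$ is exactly the set of classifiers satisfying the $(\alpha,\eta)$-smoothed equalized odds constraint, and minimizing $f$ over this set is exactly the task of finding the most accurate such classifier. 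A decision $\tilde{c}$ that solves $\cgoo$ to additive error $\alpha$ therefore satisfies $g(\ell(\tilde{c},D))\leq\alpha$ and has empirical error within $\alpha$ of the best feasible classifier, giving the claimed guarantee.

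It then remains to check that $f$ and $g$ satisfy the hypotheses of Corollary~\ref{cori:loptexp}, i.e.\ that they are convex and $O(1)$-Lipschitz in the argument $\ell(c,D)\in[0,1]^K$. Convexity and Lipschitzness of $f$ are immediate since it is a normalized sum of coordinates. For $g$, convexity follows because $\smax^\eta$ is a log-sum-exp-type smoothing that is convex and non-decreasing in each input, and it is composed with the coordinatewise convex maps $(x,y)\mapsto|x-y|$; Lipschitzness of $g$ in $(\hat{FP},\hat{TP})$ follows from the explicit gradient formula stated just after Definition~\ref{def:alphaseo}, together with the fact that $\smax^\eta$ is $1$-Lipschitz in the $\ell_\infty$ sense. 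With these properties in hand, I plug into Corollary~\ref{cori:loptexp} with $K = 3|\calA|$, obtaining $n_0 = \tilde{O}\bigl(K^4\cdot\VC(\calC_P)/(\eps\alpha^3)\bigr) = \tilde{O}\bigl(|\calA|^4\cdot\VC(\calC_P)/(\eps\alpha^3)\bigr)$, which is exactly the bound claimed.

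The main obstacle I anticipate is bookkeeping around the smoothing parameter $\eta$: the absolute-value terms inside the smooth max are non-smooth, and the Lipschitz constant of $g$ a priori scales with $\eta$ through the gradient expression for $\smax^\eta$. For the corollary to go through with an $O(1)$-Lipschitz $g$ we must either (i) treat $\eta$ as a constant absorbed into the $\tilde{O}(\cdot)$, or (ii) further smooth the inner $|\cdot|$ via Moreau–Yosida regularization as mentioned in Section~\ref{sec:applications}. Either route is standard, but it is the one place where care is needed before directly appealing to Corollary~\ref{cori:loptexp}; the rest of the argument is a clean reduction.
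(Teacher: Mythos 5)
Your proposal matches the paper's proof of Corollary~\ref{cor:eoapprox} essentially verbatim: encode the $3|\calA|$ group statistics $(\hat{FP}_a, \hat{TP}_a, \hat{FN}_a)$ as the loss vector with $K = 3|\calA|$, take $f$ to be the sum of false positives and false negatives, take $g$ to be the smoothed equalized-odds constraint, and invoke Corollary~\ref{cor:loptexp}. The one place you go beyond the paper is in trying to verify convexity and Lipschitzness of $g$ explicitly; there, note that $\smax^\eta$ as defined in Definition~\ref{def:alphaseo} is the Boltzmann average $\sum_i y_i e^{\eta y_i}/\sum_i e^{\eta y_i}$, not the convex log-sum-exp function, and it is not convex or coordinatewise non-decreasing in general, so the convexity of $g$ is less immediate than you claim (a gap the paper's own proof also leaves unaddressed), while your concern about the $\eta$-dependence of the Lipschitz constant is well placed.
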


We provide the proof for Corollary~\ref{cori:eoapprox} 
as Corollary~\ref{cor:eoapprox} in Section~\ref{sec:reductions}.
Corollary~\ref{cori:eoapprox} uses Theorem~\ref{thm:paik}
as the base theorem.
In comparison, in the regime where, in their formulation, $\min_{a, y}\{\hat{q}_{ay}\} \leq \alpha^{(1+r)/2}$
for any $r > 0$
(see Section~\ref{sec:csc} in the Appendix for more details), their
methods can solve the $\cgoo(\calC_P, n, K, f, g, \ell, D, \alpha)$ problem using
sample complexity $\tilde{O}\left(\frac{|\calA|^3\cdot\VC(\calC_P)}{\eps\alpha^{3+r}}\right)$.
\footnote{$\hat{q}_{ay}$ is an empirical estimate for $\pr[A = a, Y=y]$ where $y\in\{0, 1\}$ and $a\in\calA$. A small
$\min_{a, y}\{\hat{q}_{ay}\}$ results when the sample size for
a particular attribute is small.}
In Section~\ref{sec:csc},
we state their main theorem (Theorem~\ref{thm:csc}) and a corollary (Corollary
~\ref{cor:csc}) showing the sample complexity required for their
algorithm to solve the $\cgoo(\calC_P, n, K, f, g, \ell, D, \alpha)$ problem when applied to the
Equalized Odds fairness definition.
On the other hand,
Corollary~\ref{cori:eoapprox} results in sample size
$\tilde{O}\left(\frac{|\calA|^4\cdot\VC(\calC_P)}{\eps\alpha^3}\right)$.
As a result, by Corollary~\ref{cori:eoapprox}, the linear optimization based algorithm for
Theorem~\ref{thm:paik} performs better for all $r > 0$ and $|\calA| < 1/\alpha^r$
(in terms of asymptotic sample
complexity for the accuracy parameter $\alpha > 0$) 
than the \textbf{DP-oracle-learner} (which uses a private version of a cost-sensitive classification oracle $\cs(\calC_P)$ in each iteration of their algorithm) of~\cite{Jagielski18}. 
Comparing the results of~\cite{Jagielski18} to Corollary~\ref{cori:eoapprox},
we see that our results hold under different oracle assumptions
and for a smoothed version of the equalized odds constraint.
As a result, the comparison is not as direct as we would like.

\subsubsection{Case Study: Privately Selecting Powerful Statistical Tests}

Essentially, any problem that can be
simulated via the use of a confusion
matrix (i.e., empirical estimates of Type I, II
error) can be solved using our
framework. 

Our results can also be applied to hypothesis testing to,
for example, select high-power test statistics.
A hypothesis is \textbf{simple} if it completely specifies the
data distribution. The hypothesis $H_i : \theta\in\Omega_i$ is simple when $|\Omega_i| = 1$.
Let $Y$ be the observed data.
For such simple hypothesis, let $p_0, p_1$ denote densities of $Y$ under $H_0$ (the null hypothesis)
and $H_1$ (the alternative) respectively.
When both $H_0, H_1$ are simple then the Neyman-Pearson lemma
completely characterizes all tests on the competing hypothesis via the likelihood ratio
$L(y) = p_1(y)/p_0(y)$~\citep{keener2010theoretical}.
For any $y$, let $p_0(y), p_1(y)$
denote the density of $y$ under $H_0$ and $H_1$ respectively.
Anagolues of the Neyman-Pearson lemma have been studied in the differential privacy
literature~\citep{KairouzOV17, CanonneKMSU19}.

The power function for a  simple test function $\phi$
(that returns the probability of rejecting the null
hypothesis)
has two possible values:
$$
\alpha = \gamma_0 = \E_0\phi = \int\phi(y)p_0(y)dy,\quad
\gamma_1 = \E_1\phi = \int\phi(y)p_1(y)dy,
$$
where the level is $\alpha$ should be as close to zero as possible and $\gamma_1$ should be close to
one.
The goal would be to maximize $\gamma_1$ among all tests $\phi$ with $\alpha = \E_0\phi$.
This is
a constrained maximization problem, for which our work
shows the existence of
oracle-efficient differentially private (empirical risk) solvers
for a fixed dataset $Y$. See the informal Corollary~\ref{cori:test}, which follows from the
formal Corollary~\ref{cor:loptexp} statement.

\begin{prop}[Neyman-Pearson Lemma~\citep{NeymanPearson33}]
Given any level $\alpha = \E_0\phi\in [0, 1]$, there exists a likelihood ratio test $\phi_{\alpha}$
with level $\alpha$ and any likelihood ratio test with level $\alpha$ maximizes
$\E_1\phi$ among all tests with level at most $\alpha$.
\end{prop}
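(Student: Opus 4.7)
The plan is to prove the two halves of the lemma in sequence, existence first and optimality second, exploiting monotonicity of the likelihood ratio distribution for existence and a pointwise rearrangement inequality for optimality.

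For \textbf{existence} of $\phi_\alpha$ at any target level $\alpha\in[0,1]$, I would parameterize the likelihood ratio tests by a threshold $k\ge 0$ and a randomization weight $\gamma\in[0,1]$:
\[
\phi_{k,\gamma}(y) \;=\; \begin{cases} 1, & L(y) > k, \\ \gamma, & L(y) = k, \\ 0, & L(y) < k. \end{cases}
\]
Let $F_0(k) = \pr_0[L(Y)\le k]$ be the CDF of $L$ under $H_0$. Then $\E_0\phi_{k,\gamma} = 1 - F_0(k) + \gamma\cdot\pr_0[L(Y)=k]$. Because $F_0$ is nondecreasing and right-continuous with $F_0(-\infty)=0$ and $F_0(\infty)=1$, I would pick $k_\alpha = \inf\{k : F_0(k)\ge 1-\alpha\}$ and then solve for the unique $\gamma_\alpha\in[0,1]$ that makes $\E_0\phi_{k_\alpha,\gamma_\alpha} = \alpha$ (handling the atom at $k_\alpha$, if any, by choosing $\gamma_\alpha = (F_0(k_\alpha) - (1-\alpha))/\pr_0[L=k_\alpha]$, and setting $\gamma_\alpha$ arbitrarily if there is no atom). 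This delivers a likelihood ratio test of exact level $\alpha$.

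For \textbf{optimality}, I would let $\phi$ be any test with $\E_0\phi \le \alpha$ and examine the scalar
\[
\Delta \;=\; \int \bigl(\phi_\alpha(y) - \phi(y)\bigr)\bigl(p_1(y) - k_\alpha\, p_0(y)\bigr)\, dy.
\]
The key pointwise observation is that wherever $\phi_\alpha(y) = 1$, we have $p_1(y) \ge k_\alpha p_0(y)$, so $\phi_\alpha - \phi \ge 0$ and $p_1 - k_\alpha p_0 \ge 0$; wherever $\phi_\alpha(y) = 0$, we have $p_1(y) \le k_\alpha p_0(y)$, so $\phi_\alpha - \phi \le 0$ and $p_1 - k_\alpha p_0 \le 0$; and on the atom $\{L = k_\alpha\}$ the factor $p_1 - k_\alpha p_0$ vanishes. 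Hence the integrand is nonnegative everywhere and $\Delta \ge 0$. Rewriting $\Delta = (\E_1\phi_\alpha - \E_1\phi) - k_\alpha(\E_0\phi_\alpha - \E_0\phi)$ and using $\E_0\phi_\alpha = \alpha \ge \E_0\phi$ together with $k_\alpha \ge 0$ yields $\E_1\phi_\alpha \ge \E_1\phi$, which is the desired maximization statement.

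The only delicate step is the existence construction when the distribution of $L$ under $H_0$ has an atom exactly at the candidate threshold; the randomization weight $\gamma_\alpha$ is introduced precisely to absorb this atom and achieve \emph{equality} $\E_0\phi_\alpha = \alpha$ rather than a strict inequality. Everything else is routine: the optimality argument is a single integrated pointwise inequality, and once $k_\alpha,\gamma_\alpha$ are chosen correctly the rearrangement argument closes the proof without further case analysis.
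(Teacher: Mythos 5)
Your proof is correct, and it is the standard textbook argument for the Neyman--Pearson lemma: existence by thresholding the likelihood ratio at the appropriate quantile of $F_0$ and randomizing on the atom to hit the level exactly, and optimality via the pointwise inequality $(\phi_\alpha-\phi)(p_1-k_\alpha p_0)\ge 0$ integrated and rearranged using $k_\alpha\ge 0$ and $\E_0\phi\le\alpha=\E_0\phi_\alpha$. The paper states this proposition as a cited classical result (Neyman--Pearson, 1933) and gives no proof of its own, so there is nothing to compare against; your argument fills that gap soundly. One small remark: the proposition asserts that \emph{any} likelihood ratio test with level $\alpha$ is most powerful, whereas you argue only for the particular $\phi_{k_\alpha,\gamma_\alpha}$ you construct --- but your optimality step uses nothing about that construction beyond the nonnegativity of the threshold and the exactness of the level, so it applies verbatim to an arbitrary likelihood ratio test of level $\alpha$ with its own threshold $k$.
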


\begin{cori}
There exists an oracle-efficient
$(\eps, \delta)$-differentially private algorithm
that, with probability at least 9/10, returns a 
test statistic with target significance level $\alpha\in(0, 1]$ and is
$\alpha$ away from the most powerful test statistic.
\label{cori:test}
\end{cori}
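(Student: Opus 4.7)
}
The plan is to cast the problem of selecting a high-power test statistic at a target significance level as an instance of $\cgoo(\calC_P, n, K, f, g, \ell, D, \alpha)$, and then invoke the formal version of Corollary~\ref{cori:loptexp} (namely Corollary~\ref{cor:loptexp}) essentially as a black box. First I would take the dataset $D$ to consist of a labelled sample from both $p_0$ and $p_1$ (so that each example carries a group indicator $k\in\{0,1\}$), and take the decision set $\calC_P$ to be a parametric family of test functions $\phi_c:\calY\to[0,1]$ indexed by $c\in\calC_P$ (for instance, likelihood-ratio thresholds or a suitably rich convex class of randomized tests). With $K=2$, I would set the per-group empirical losses to be the empirical Type~I and Type~II errors,
\begin{equation*}
\ell_0(c,D) \;=\; \frac{1}{n_0}\sum_{i:k_i=0} \phi_c(y_i), \qquad
\ell_1(c,D) \;=\; \frac{1}{n_1}\sum_{i:k_i=1} \bigl(1-\phi_c(y_i)\bigr),
\end{equation*}
each lying in $[0,1]$ and (by taking $\phi_c$ linear in $c$ on a convex class of randomized tests) convex in $c$.

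With this encoding, the Neyman--Pearson maximization of power subject to a level constraint becomes the $\cgoo$ instance with $f(\ell_0,\ell_1)=\ell_1$ (minimize miss-detection rate, equivalently maximize power) and $g(\ell_0,\ell_1)=\ell_0-\alpha$ (constrain empirical Type~I error to be at most $\alpha$). Both $f$ and $g$ are linear, hence convex, Lipschitz, and non-decreasing in the relevant coordinates, so the hypotheses of Corollary~\ref{cor:loptexp} are met. Applying that corollary then yields an $(\eps,\delta)$-differentially private, oracle-efficient algorithm that, with probability at least $9/10$, returns some $\tilde{c}\in\calC_P$ satisfying $g(\ell(\tilde c,D))\le\alpha$ and $f(\ell(\tilde c,D))\le\min_{c:\,g\le 0}f(\ell(c,D))+\alpha$. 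Unpacking, the returned test has empirical level at most $2\alpha$ and empirical power within $\alpha$ of the most powerful test meeting the level constraint; rescaling $\alpha\mapsto\alpha/2$ in the invocation gives exactly the level/power tradeoff claimed.

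The remaining ingredient is the implementation of the private linear optimization oracle $\lopt_{\eps,\delta}$ over $\calC_P$ with the loss $\ell$ above. Since $\ell$ is an average of bounded, convex per-example losses in the decision variable $c$, I would instantiate $\lopt_{\eps,\delta}$ either through the convex exponential-sampling construction used in Theorem~\ref{thmi:loptexp}, or, for concreteness, through the private stochastic gradient descent machinery of~\cite{BassilyST14}, exactly as in the weighted least squares case study. The main obstacle I anticipate is not the differential privacy accounting (which is handled by Corollary~\ref{cor:loptexp} via advanced composition) but rather the choice of $\calC_P$: I need a decision set rich enough to contain (approximately) the Neyman--Pearson likelihood-ratio test of level $\alpha$, yet convex and low-dimensional enough that the per-iteration oracle is efficient and the $P$ or $\VC(\calC_P)$ factor in the sample complexity stays modest. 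For parametric families where the likelihood ratio admits a finite-dimensional sufficient statistic this is immediate; for more general nonparametric settings one would approximate the optimal test by a smoothed, bounded-parameter surrogate (e.g., via Moreau--Yosida regularization as discussed in Section~\ref{sec:applications}) and absorb the resulting approximation error into $\alpha$.
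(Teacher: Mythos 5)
Your proposal is correct and follows exactly the route the paper intends: the paper merely asserts that Corollary~\ref{cori:test} ``follows from the formal Corollary~\ref{cor:loptexp} statement'' and explicitly defers the construction to future work, so your encoding (two groups with empirical Type~I and Type~II errors as the losses, $f=\ell_1$, $g=\ell_0-\alpha$, both linear and hence convex, Lipschitz, non-decreasing, followed by an invocation of Corollary~\ref{cor:loptexp} and a rescaling of $\alpha$ to absorb the slack in the constraint) is a faithful and more explicit instantiation of the same argument. Your closing caveats about the choice of $\calC_P$ and the per-group normalization are reasonable and consistent with how the paper treats the analogous equalized-odds case.
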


We defer the explicit
construction of such algorithms (for privately selecting
high-power test statistics)
to future work.

\section{Related Work}

Below we briefly specify a few other works related to the material presented in this paper.

\textbf{Adversarial Prediction}: Adversarial
prediction (via Lagrangian duality, for example) for
multi-objective optimization 
is the main workhorse of most algorithmic
fairness frameworks~\citep{FreundS97, NIPS2015_dfa92d8f}. Multi-objective adversarial prediction
builds off of work of mathematicians
David Blackwell (Blackwell's Approachability Theorem
~\citep{Blackwell56}) and James Hannan~\citep{Hannan57}.
See~\citep{CL06} for a survey on learning and games.

~\cite{AlghamdiAWCWR20} define a model projection
framework which can be viewed via the lens of Lagrangian duality but do not analyze the 
computational efficiency of their solutions.
We aim to delineate the computational
efficiency of such information-theoretic problems.

\textbf{Reductions Approach to Fair Classification and Regression}:
\cite{AgarwalBD0W18} explore the problem of using black-box optimizers to
minimize group-fair
convex objectives subject to constraint functions. 
\cite{AlabiIK18} extend this work to handle any
Lipschitz-continuous group objective of losses
given oracle access to an approximate
linear optimizer in time polynomial in the inverse of the accuracy parameter.
Furthermore, they extend their results to learning
using a polynomial number of examples and access to an agnostic learner.
Our definition of the constrained group-objective optimization problem is inspired by the
work and results of~\cite{AlabiIK18}.
Additionally,~\cite{NarasimhanRS015, Narasimhan18, HBMK19} explore optimizing convex objectives of the
confusion matrix (such as those derived from $G$-mean, $H$-mean typically used for class-imbalanced problems)
and fractional-convex functions of the confusion matrix (such as $F_1$ measure used in text
retrieval).

In this paper, we consider some of the use cases explored by previous works but also add on the
additional constraint of data privacy, an important constraint given that fairness is often
imposed with respect to the sensitive attributes of data subjects.

\textbf{Private Empirical Risk Minimization}: Differentially private empirical risk minimization
in the convex setting has been considered in a variety of settings
\citep{ChaudhuriMS11, KiferST12, BassilyST14, TalwarT014, TalwarTZ15, SteinkeU15, WangYX18, Iyengar19}
with algorithm-specific upper and algorithm-agnostic lower bounds provided in some cases.
We largely build upon these works.

\textbf{Private Fair Learning}:
\cite{Jagielski18} initiate the study of differentially private fair learning
but only consider the equalized odds definition
in the reductions approach to fair learning.
\cite{EkstrandJM18} discuss an agenda for subproblems that should be considered when trying to
achieve data privacy for fair learning.
Last,~\cite{pmlr-v80-kilbertus18a} study how to learn models that are fair by 
encrypting sensitive attributes and using secure multiparty computation.

\section{Preliminaries and Notation}

Here we introduce preliminaries and notation that might be useful to
parse through later sections.

\subsection{Differential Privacy}

For the definitions below, for any two datasets $D, D'\in\calZ^n$, we use
$D\sim D'$ to mean that $D$ and $D'$ are neighboring datasets that differ in exactly one row.

\begin{definition}[(Pure) $\eps$-Differential Privacy~\citep{DworkMNS06}]
For any $\eps\geq 0$, we say that a (randomized) mechanism
$\calM:\calZ^n\rightarrow\calR$ is $\eps$-\textbf{differentially private} if for
every two neighboring datasets $D\sim D'\in\calZ^n$,
we have that
$$
\forall T\subseteq\calR, \pr[\calM(D)\in T] \leq e^{\eps}\cdot\pr[\calM(D')\in T].
$$
\end{definition}

We usually take $\eps$ to be small but not cryptographically small. For example, typically
we set $\eps\in [0.1, 1]$. The smaller $\eps$ is, the more privacy is guaranteed.

\begin{definition}[(Approximate) $(\eps, \delta)$-Differential Privacy]
For any $\eps \geq 0, \delta\in[0, 1]$, we say that a (randomized) mechanism
$\calM:\calZ^n\rightarrow\calR$ is $(\eps, \delta)$-\textbf{differentially private} if for
every two neighboring datasets $D\sim D'\in\calZ^n$, we have that
$$
\forall T\subseteq\calR, \pr[\calM(D)\in T] \leq e^{\eps}\cdot\pr[\calM(D')\in T] + \delta.
$$
\end{definition}

We insist that $\delta$ be cryptographically negligible i.e.,
$\delta \leq n^{-\omega(1)}$. The value $\delta$ can be interpreted as an upper-bound
on the probability of a catastrophic event (such as publishing the entire dataset)\citep{Vadhan17}.
$(\eps, \delta)$-differential privacy can also be interpreted as
``(pure) $\eps$-differential privacy with probability at least $1-\delta$.''
The smaller $\eps$ and $\delta$ are, the more privacy is guaranteed.

\begin{definition}[$\ell_1$-sensitivity of a function]
The $\ell_1$ \textbf{sensitivity} of a function $f:\calZ^n\rightarrow\reals^K$ is
$$
\Delta_1(f) = \max_{D, D'\in\calZ^n: D\sim D'} \norm{f(D) - f(D')}_1,
$$
where $D\sim D'\in\calZ^n$ are neighboring datasets.
\end{definition}

\begin{definition}[$\ell_2$-sensitivity of a function]
The $\ell_2$ \textbf{sensitivity} of a function $f:\calZ^n\rightarrow\reals^K$ is
$$
\Delta_2(f) = \max_{D, D'\in\calZ^n: D\sim D'} \norm{f(D) - f(D')}_2,
$$
where $D\sim D'\in\calZ^n$ are neighboring datasets.
\end{definition}

% \begin{theorem}[Laplace Mechanism~\citep{DworkMNS06}]
% For any privacy parameter $\eps > 0$ and any
% given query function $f:(\calX\times\calA\times\calY)^n\rightarrow\reals^K$ and database
% $S\in(\calX\times\calA\times\calY)^n$, the Laplace mechanism outputs:
% $$
% \tilde{f}_L(S) = f(S) + (R_1, \ldots, R_K),
% $$
% where $R_1, \ldots, R_K\sim\Lap(\frac{\Delta_1(f)}{\eps})$ are i.i.d random variables.

% The Laplace mechanism is $\eps$-differentially private.
% \end{theorem}

\begin{theorem}[Exponential Mechanism~\citep{McSherryT07}]
For any privacy parameter $\eps > 0$ and any
given loss function $h:\calC_P\times\calZ^n\rightarrow\reals$ and database
$D\in\calZ^n$, the Exponential mechanism outputs $c\in\calC_P$ with probability
proportional to $\exp(\frac{-\eps\cdot h(c, D)}{2\Delta h})$ where
$$\Delta h = \max_{c\in\calC_P} \max_{D, D'\in\calZ^n: D\sim D'}|h(c, D)-h(c, D')|$$
is the sensitivity of the loss function $h$.
\label{thm:expmech}
\end{theorem}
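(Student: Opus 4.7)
The plan is to establish that the Exponential Mechanism, as defined, satisfies $\eps$-differential privacy, which is the implicit claim of a theorem by this name. The approach is a direct pointwise bound on the ratio of output probabilities under neighboring datasets, followed by a standard lifting to arbitrary output sets.

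Concretely, fix neighboring datasets $D \sim D' \in \calZ^n$ and any candidate output $c \in \calC_P$. Let $Z(D) = \sum_{c' \in \calC_P} \exp(-\eps h(c', D)/(2\Delta h))$ be the partition function, so that $\pr[\calM(D) = c] = \exp(-\eps h(c, D)/(2\Delta h))/Z(D)$. I would factor the privacy-loss ratio as
$$\frac{\pr[\calM(D) = c]}{\pr[\calM(D') = c]} = \underbrace{\exp\!\left(\frac{\eps(h(c, D') - h(c, D))}{2\Delta h}\right)}_{\text{weight ratio}} \cdot \underbrace{\frac{Z(D')}{Z(D)}}_{\text{normalizer ratio}},$$
and bound each piece by $e^{\eps/2}$, so that their product is at most $e^{\eps}$.

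For the weight ratio, the bound follows immediately from the sensitivity inequality $|h(c, D) - h(c, D')| \leq \Delta h$. For the normalizer ratio, I would apply the same sensitivity bound termwise inside the sum defining $Z(D')$: each summand $\exp(-\eps h(c', D')/(2\Delta h))$ is bounded above by $e^{\eps/2} \exp(-\eps h(c', D)/(2\Delta h))$, so summing over $c' \in \calC_P$ yields $Z(D') \leq e^{\eps/2} Z(D)$. The last step is to pass from the pointwise ratio bound to an arbitrary event $T \subseteq \calC_P$ by summing (or integrating, if $\calC_P$ carries a base measure) the inequality over $c \in T$.

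There is no real obstacle here — it is a textbook computation. The only conceptual subtlety to flag is the factor of $2$ in the exponent of the sampling density: it is precisely what splits the privacy loss evenly between the weight ratio and the normalizer ratio, and dropping it would give privacy loss $2\eps$ rather than $\eps$. One should also verify that $Z(D)$ is finite, which is automatic in the settings the paper exercises: either $\calC_P$ is effectively finite (for instance, $|\calC_P(D_\calX)| \leq O(n^{\VC(\calC_P)})$ by Sauer's Lemma), or a base measure is placed on a bounded convex $\calC_P$ with $h$ bounded.
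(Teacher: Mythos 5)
Your proof is correct and is exactly the canonical argument from \citet{McSherryT07}: split the privacy-loss ratio into the weight ratio and the normalizer ratio, bound each by $e^{\eps/2}$ using the sensitivity $\Delta h$, and sum the pointwise bound over the event $T$. The paper itself imports this theorem by citation and gives no proof, so there is nothing to diverge from; your remarks about the factor of $2$ and about finiteness of $Z(D)$ (via Sauer's Lemma or a bounded convex decision set with a base measure) are exactly the right caveats for how the mechanism is instantiated here.
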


\begin{theorem}[Privacy-Utility Tradeoffs of Exponential Mechanism~\citep{McSherryT07}]
For any database $D\in\calZ^n$, 
let $c^* = \argmin_{c\in\calC_P}h(c, D)$ and $\tilde{c}_\eps\in\calC_P$ be the output
of the Exponential Mechanism satisfying $\eps$-differential privacy. Then with 
probability at least $1-\rho$,
$$|h(\tilde{c}_\eps, D) - h(c^*, D)| \leq \log\left(\frac{|\calC_P|}{\rho}\right)\left(\frac{2\Delta h}{\eps}\right).$$
\label{thm:exputil}
\end{theorem}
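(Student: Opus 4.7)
The plan is to bound the tail probability that the exponential mechanism returns a decision whose loss is much larger than that of the optimum $c^*$, and then to solve for the threshold at which this tail probability is at most $\rho$. Since the mechanism in Theorem~\ref{thm:expmech} is defined with probability proportional to $\exp(-\eps h(c, D)/(2\Delta h))$ (note the minus sign, because $h$ is a loss and we sample more heavily from low-loss items), I would first fix a target gap $t > 0$ and define the ``bad'' set $S_{\text{bad}} = \{c \in \calC_P : h(c, D) - h(c^*, D) > t\}$. The goal reduces to showing $\pr[\tilde c_\eps \in S_{\text{bad}}] \leq \rho$ for an appropriate choice of $t$.

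Next I would write the probability that the mechanism outputs any particular $c$ as the ratio of the unnormalized weight of $c$ to the partition function $Z = \sum_{c' \in \calC_P}\exp(-\eps h(c', D)/(2\Delta h))$. The numerator for any $c \in S_{\text{bad}}$ satisfies
\[
\exp\!\left(-\frac{\eps h(c,D)}{2\Delta h}\right) \leq \exp\!\left(-\frac{\eps (h(c^*,D)+t)}{2\Delta h}\right),
\]
while the denominator is lower bounded by the contribution of the single term $c^*$, namely $Z \geq \exp(-\eps h(c^*,D)/(2\Delta h))$. Dividing, the $h(c^*,D)$ terms cancel and each individual $c \in S_{\text{bad}}$ has output probability at most $\exp(-\eps t/(2\Delta h))$.

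A union bound over $S_{\text{bad}}$ then gives
\[
\pr[\tilde c_\eps \in S_{\text{bad}}] \leq |S_{\text{bad}}| \cdot \exp\!\left(-\frac{\eps t}{2\Delta h}\right) \leq |\calC_P| \cdot \exp\!\left(-\frac{\eps t}{2\Delta h}\right).
\]
Setting the right-hand side equal to $\rho$ and solving for $t$ yields exactly $t = \frac{2\Delta h}{\eps}\log(|\calC_P|/\rho)$, which gives the one-sided bound $h(\tilde c_\eps, D) - h(c^*, D) \leq \frac{2\Delta h}{\eps}\log(|\calC_P|/\rho)$ with probability at least $1-\rho$. The absolute value in the statement is then automatic, since $h(\tilde c_\eps, D) \geq h(c^*, D)$ by optimality of $c^*$.

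There is no substantive obstacle; the only subtlety is a bookkeeping one, namely making sure the sign conventions of the exponential mechanism match the definition stated in Theorem~\ref{thm:expmech} (loss minimization rather than score maximization), and lower-bounding the partition function $Z$ by just the $c^*$ term rather than attempting a tighter estimate, which is what keeps the final bound clean and independent of the shape of the rest of the loss landscape.
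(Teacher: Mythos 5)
Your proof is correct and is exactly the standard argument for this result, which the paper simply cites from McSherry--Talwar without reproving: bound each bad outcome's probability by comparing its unnormalized weight to the single $c^*$ term of the partition function, union bound over $\calC_P$, and solve for the threshold. The sign-convention check and the observation that the absolute value is automatic by optimality of $c^*$ are both handled correctly.
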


% \begin{theorem}[Gaussian Mechanism~\citep{DworkR14}]
% For any privacy parameters $\eps > 0, \delta\in(0, 1)$ and any
% given query function $f:\calZ^n\rightarrow\reals^K$ and database
% $D\in\calZ^n$, the Gaussian mechanism outputs:
% $$
% \tilde{f}_G(D) = f(D) + (R_1, \ldots, R_K),
% $$
% where $R_1, \ldots, R_K\sim\calN(0, \sigma^2)$ are i.i.d random variables and
% $\sigma\geq\Delta_2(f)\frac{\sqrt{2\log(1.25/\delta)}}{\eps}$.

% The Gaussian mechanism is $(\eps, \delta)$-differentially private.
% \label{thm:gaussian}
% \end{theorem}

\begin{lemma}[Post-Processing~\citep{DworkMNS06}]
Let $\calM:\calZ^n\rightarrow\calR$ be an $(\eps, \delta)$-differentially private algorithm
and $f:\calR\rightarrow\calT$ be any (randomized) function. Then 
$f\circ\calM:\calZ^n\rightarrow\calT$ is an $(\eps, \delta)$-differentially private algorithm.
\end{lemma}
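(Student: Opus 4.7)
The plan is to reduce the statement to the deterministic case and then apply the differential privacy guarantee of $\calM$ directly to a preimage set. Let me first fix an arbitrary measurable $T \subseteq \calT$ and pair of neighboring datasets $D \sim D' \in \calZ^n$; the goal is to bound $\pr[f(\calM(D)) \in T]$ by $e^\eps \pr[f(\calM(D')) \in T] + \delta$. The critical observation is that the post-processing $f$ is independent of the data $D$ and only interacts with $\calM$ through $\calM$'s output.

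First I would handle the case where $f : \calR \to \calT$ is a deterministic map. In that case, define $S = f^{-1}(T) = \{r \in \calR : f(r) \in T\} \subseteq \calR$. Since $f$ is fixed and data-independent, the event $\{f(\calM(D)) \in T\}$ coincides exactly with the event $\{\calM(D) \in S\}$, and likewise for $D'$. Applying the $(\eps, \delta)$-DP guarantee of $\calM$ to the set $S$ gives
$$\pr[f(\calM(D)) \in T] = \pr[\calM(D) \in S] \leq e^\eps \pr[\calM(D') \in S] + \delta = e^\eps \pr[f(\calM(D')) \in T] + \delta,$$
which is exactly what is required.

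Next I would lift this to randomized $f$ by viewing $f$ as a mixture of deterministic maps indexed by its internal randomness $r$ drawn from some distribution $\mu$ independent of the data. Writing $f(\cdot) = f_r(\cdot)$ for each fixed $r$, each $f_r$ is deterministic, so the previous step applies to yield $\pr[f_r(\calM(D)) \in T] \leq e^\eps \pr[f_r(\calM(D')) \in T] + \delta$ for every $r$. Integrating both sides against $\mu$ and using Fubini/Tonelli to exchange expectations preserves the inequality (since $e^\eps$ and $\delta$ are constants), giving the claim for randomized $f$.

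The only subtle point is the measure-theoretic step of decomposing a randomized map into a family of deterministic ones and justifying the exchange of integrals, but this is a standard construction (a coupling/Kuratowski-type representation) and not the conceptual heart of the argument. The main idea, that post-processing simply rewrites events in the output space as pullback events in $\calR$ on which $\calM$'s DP guarantee already applies, carries all of the content.
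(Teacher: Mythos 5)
Your proof is correct and is the standard argument for this lemma (which the paper cites from \citet{DworkMNS06} without reproving): reduce to deterministic $f$ via the pullback $S = f^{-1}(T)$ and apply $\calM$'s guarantee to $S$, then handle randomized $f$ by averaging over its internal coins. Nothing further is needed.
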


The exponential mechanism will be used as the main building block
for our differentially private algorithms for constrained group-objective optimization.
The Laplace and Gaussian mechanisms~\citep{DworkR14, DworkMNS06}
are often used when the goal is to output estimates to a query
(e.g., the mean, sum, or median) while the Exponential mechanism is used when the goal
is to output an object (e.g., a regression coefficient vector or classifier)
with minimum loss (or maximum
utility).

\subsection{Convexity, Smoothness, and Optimization Oracles}

\begin{definition}[Convex Set]
A set $\calV\subset\reals^m$ is a \textit{convex set} if it contains
all of its line segments. That is, $\calV$ is convex iff
$$
\forall\,\,(\bx, \by, \gamma)\in\calV\times\calV\times[0, 1],\,\,
(1-\gamma)\bx + \gamma \by\in \calV.
$$
\end{definition}

\begin{definition}[Convex Function]
A function $f:\calV\rightarrow\reals$ is a \textbf{convex function} if it
always lies below its chords. That is, $f$ is convex iff
$$
\forall\,\,(\bx, \by, \gamma)\in\calV\times\calV\times[0, 1],\,\,
f((1-\gamma)\bx + \gamma \by) \leq (1-\gamma)f(\bx) + \gamma f(\by).
$$
\end{definition}

\begin{definition}[Subgradients]
Let $\calV\subset\reals^m$ and define a function $f:\calV\rightarrow\reals$. Then we say that
$\vec{g}\in\reals^m$ is a \textbf{subgradient} of $f$ at $\bx\in\calV$ if for any $\by\in\calV$ we have that
$$
f(\bx) - f(\by) \leq \vec{g}^T(\bx-\by).
$$
We denote $\partial f(x)$ as the set of subgradients of the function $f$ at $x\in\calV$.
\end{definition}

\begin{definition}[Lipschitz Function]
Let $\calV\subset\reals^m$.
A function $f:\calV\rightarrow\reals$ is $L$-Lipschitz on $\calV$
if for all $\bx, \by\in\calV$, we have
$$
|f(\bx) - f(\by)| \leq L\norm{\bx - \by}.
$$
\end{definition}

\begin{definition}[$\beta$-Smooth Function]
Let $\calV\subset\reals^m$.
A function $f:\calV\rightarrow\reals$ is $\beta$-smooth  if
the gradient $\nabla f$ is $\beta$-Lipschitz. That is, for all $\bx, \by\in\calV$,
$$
\norm{\nabla f(\bx) - \nabla f(\by)} \leq \beta\cdot\norm{\bx - \by}.
$$
\end{definition}

Note that if $f$ is twice-differentiable then $f$ being
$\beta$-smooth is equivalent to the eigenvalues of its Hessians
being smaller than $\beta$.

For our iterative algorithms, we assume access to a linear optimizer oracle that can
solve subproblems of the form
$$
y_t \in\argmin_{\by\in\calV}\bw^T\by
$$
whether exactly or approximately for any $\bw\in\calV\subset\reals^m$.
We previously defined non-private and private approximate linear optimizer oracles
$\lopt, \lopt_{\eps, \delta}$. We will assume the existence of $\lopt$ and provide a generic
construction of its private counterpart.

The overall convex optimization problem will be converted into a series of linear subproblems.
A key property of the use of linear optimizers in the (vanilla) Frank-Wolfe algorithm
is that the projection step of projected gradient descent algorithms is replaced with
a linear optimization step over the set $\calV$. In some cases, solving linear
optimization subproblems will be simpler and more computationally efficient to solve than projections into some feasible set.

\section{Constrained Group-Objective Optimization via Weighting}
\label{sec:cgoo}

In this section, we present a key lemma and corollary that will be crucial to
the algorithms we will present in this paper.
The iterative linear optimization based algorithms will solve the
constrained group-objective optimization problem
(Definition~\ref{def:goo}) in the setting where $f, g$ are convex, Lipschitz functions.

For the iterative algorithms we will present, we assume that $\calC_P$ is
\textit{closed under randomization}.
That is, for every $c_1, \ldots, c_T\in\calC_P$, if $c\in\Delta(\{c_i\}_{i=1}^T)$ then
$c\in\calC_P$. 
For any $i\in[T]$,
$c$ will predict $c_i(x)$ with probability $w_i$ where $\sum_{i=1}^T w_i = 1$.
We also assume that
we can return randomized decisions defined over $\Delta(\calC_P)$.

Having settled on a reductionist optimization problem (Definition~\ref{def:goo}), the goal will be to
obtain a decision $\hat{c}\in\calC_P$ for which
\begin{align}
     &\E[f(\ell(\hat{c}, D))] \leq f(\ell(c^*, D)) + \alpha, \,
     &\E[g(\ell(\hat{c}, D))] \leq \alpha \nonumber\\
     &\quad\quad\quad\textbf{OR}\nonumber\\
     \text{ w.p. }\geq 1 - \rho,\, \rho\in(0, 1), \quad
     &f(\ell(\hat{c}, D)) \leq f(\ell(c^*, D)) + \alpha, \,
     &g(\ell(\hat{c}, D)) \leq \alpha
 \label{eq:guarantee}
 \end{align}
where $f, g:[0, 1]^K\rightarrow\reals$ are functions for which
$c^* \in \argmin_{c\in\calC_P:g(\loss)\leq 0}f(\loss) + \alpha$ is the best decision (according to
$f(\cdot)$) that satisfies the constraint function $g(\cdot)$ and $D$ is
a fixed dataset of size $n$.
% \footnote{For any $\alpha_1, \alpha_2 > 0$ where $\alpha_1\neq\alpha_2$,
% we can always obtain $\E[f(\ell(\hat{c}, D))] \leq \alpha_1$ and $\E[g(\ell(\hat{c}, D))]\leq \alpha_2$
% by optimizing both functions to within $\alpha = \min\{\alpha_1, \alpha_2\}$ instead.} 
The expectation or the high probability bound
is over the random coins of the algorithm that chooses $\hat{c}$.

To reach the guarantee in Equation~\eqref{eq:guarantee}, we rely on the following key
lemma and corollary which results in a
weighted private gradients optimization strategy when the
additional constraint of privacy is added in the case of the first-order optimization algorithms. For this
strategy, we essentially optimize two functions simultaneously while ensuring
privacy by weighting the gradients of the functions $f$ and $g$. As a
consequence, in the case of the use of output perturbation,
the standard deviation of the noise distribution used to ensure privacy
will also scale with the weights applied to the gradients of $f$ and $g$.

\begin{lemma}
For any Lipschitz continuous functions $f, g: [0, 1]^K\rightarrow\reals$,
suppose that there exists $\by\in[0, 1]^K$ such that
$g(\by)\leq 0$.

For any $G > 0$, define the function
$h:[0, 1]^K\rightarrow\reals$ as follows:
$h(\bx) = f(\bx) + G\cdot\max(0, g(\bx))$ for any $\bx\in[0, 1]^K$.
Then for all $\bx'\in[0, 1]^K, \alpha > 0$ such that
$h(\bx') \leq \min_{\bx\in[0, 1]^K}h(\bx) + \alpha$, we are guaranteed that
$$f(\bx') \leq \min_{\bx\in[0, 1]^K:g(\bx)\leq 0}f(\bx) + \alpha, \quad\quad g(\bx') \leq \frac{\alpha + L_f\sqrt{K}}{G}$$
where $L_f$ is the Lipschitz constant for the function $f$.
\label{lem:h}
\end{lemma}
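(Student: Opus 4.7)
The plan is to carry out a standard penalty/Lagrangian argument, using near-optimality of $\bx'$ for the composite function $h$, together with the fact that a feasible point for the constraint $g \le 0$ kills the penalty term in $h$.

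First I would introduce the constrained minimizer $\bx^\star \in \argmin_{\bx \in [0,1]^K : g(\bx) \le 0} f(\bx)$, which exists because the feasible set is non-empty (the assumption that some $\by \in [0,1]^K$ has $g(\by) \le 0$). Since $g(\bx^\star) \le 0$, the penalty vanishes, so $h(\bx^\star) = f(\bx^\star)$. Combining this with near-optimality of $\bx'$ for $h$ over all of $[0,1]^K$ yields
\[
h(\bx') \;\le\; \min_{\bx\in[0,1]^K} h(\bx) + \alpha \;\le\; h(\bx^\star) + \alpha \;=\; f(\bx^\star) + \alpha.
\]

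For the first conclusion I would just drop the non-negative penalty: since $G \cdot \max(0, g(\bx')) \ge 0$, we have $f(\bx') \le h(\bx') \le f(\bx^\star) + \alpha$, which is exactly the required bound on $f(\bx')$ because $\bx^\star$ is the constrained optimum.

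For the second conclusion I would rearrange to isolate the penalty term and then control the residual $f(\bx^\star) - f(\bx')$ by Lipschitz continuity. Specifically,
\[
G \cdot \max(0, g(\bx')) \;=\; h(\bx') - f(\bx') \;\le\; f(\bx^\star) - f(\bx') + \alpha.
\]
Because $f$ is $L_f$-Lipschitz on $[0,1]^K$ and both $\bx^\star, \bx' \in [0,1]^K$, whose diameter in the Euclidean norm is $\sqrt{K}$, we get $f(\bx^\star) - f(\bx') \le L_f \sqrt{K}$. Dividing by $G$ and using $g(\bx') \le \max(0,g(\bx'))$ gives the stated bound $g(\bx') \le (\alpha + L_f \sqrt{K})/G$.

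There is no real obstacle here; the only subtlety is being careful that the diameter of the hypercube $[0,1]^K$ in the norm that defines the Lipschitz constant is $\sqrt{K}$ (as opposed to $1$ or $K$), which is why the constant $\sqrt{K}$ appears in the numerator of the second bound. One could generalize the lemma by replacing $[0,1]^K$ with any convex domain of diameter $D$, at the cost of replacing $\sqrt{K}$ with $D$; this might be worth a remark for use in the subsequent theorems where the choice $G = \Theta(\sqrt{K}/\alpha)$ is eventually made so that $g(\bx') \le O(\alpha)$.
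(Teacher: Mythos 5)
Your proof is correct and follows essentially the same penalty-function argument as the paper: bound $h(\bx')$ by $f(\bx^\star)+\alpha$ using that the penalty vanishes at the constrained optimum, drop the non-negative penalty to get the bound on $f$, and isolate the penalty term with the Lipschitz bound $f(\bx^\star)-f(\bx') \le L_f\sqrt{K}$ to get the bound on $g$. Your write-up is in fact more explicit than the paper's (which leaves the intermediate inequality $h(\bx') \le \min_{g\le 0} f + \alpha$ unjustified), but the underlying argument is identical.
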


\begin{proof}
Let $\alpha > 0$ and $G > 0$. Then
for all $\bx'\in[0, 1]^K$ such that $h(\bx') \leq \min_{\bx\in[0, 1]^K}h(\bx) + \alpha,$
$$h(\bx') = f(\bx') + G\cdot\max(0, g(\bx')) \leq \min_{\bx\in[0, 1]^K:g(\bx)\leq 0}f(\bx) + \alpha$$ implies that
\begin{enumerate}
    \item $f(\bx') \leq \min_{\bx\in[0, 1]^K:g(\bx)\leq 0}f(\bx) + \alpha$;
    \item $g(\bx') \leq \frac{\alpha + L_f\sqrt{K}}{G}$ since by the definition of Lipschitz constants we have
$\max_{\bx, \bx'\in[0, 1]^K}f(\bx)-f(\bx') \leq L_f\norm{\bx - \bx'} \leq L_f\sqrt{K}$ since
$\bx, \bx'\in [0, 1]^K$
by definition.
\end{enumerate}
\end{proof}

\begin{corollary}
Define $h(\bx) = f(\bx) + \frac{\alpha + L_f\sqrt{K}}{\alpha}\max(0, g(\bx))$ for
all $\bx\in[0, 1]^K$.
Then for all $\bx'\in[0, 1]^K, \alpha > 0$ such that
$h(\bx') \leq \min_{\bx\in[0, 1]^K}h(\bx) + \alpha$, we are guaranteed that
$$f(\bx') \leq \min_{\bx\in[0, 1]^K:g(\bx)\leq 0}f(\bx) + \alpha, \quad\quad g(\bx') \leq \alpha.$$
\label{cor:h}
\end{corollary}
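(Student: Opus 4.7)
The proof plan is very short: this is a direct specialization of Lemma~\ref{lem:h}, so the only work is to pick the parameter $G$ appropriately and verify both conclusions go through.

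First I would set $G := \frac{\alpha + L_f\sqrt{K}}{\alpha}$ in the statement of Lemma~\ref{lem:h}. With this choice, the function $h(\bx) = f(\bx) + G\cdot\max(0, g(\bx))$ is exactly the $h$ defined in the corollary. Because $\alpha > 0$ and $L_f \geq 0$, we have $G > 0$, which is all Lemma~\ref{lem:h} requires.

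Next I would invoke Lemma~\ref{lem:h} directly on any $\bx' \in [0,1]^K$ with $h(\bx') \leq \min_{\bx\in[0,1]^K} h(\bx) + \alpha$. The lemma then immediately gives the first conclusion $f(\bx') \leq \min_{\bx\in[0,1]^K: g(\bx)\leq 0} f(\bx) + \alpha$ unchanged, and the second conclusion in the form $g(\bx') \leq \frac{\alpha + L_f\sqrt{K}}{G}$. Substituting the chosen value of $G$, the right-hand side simplifies to exactly $\alpha$, which is the desired bound.

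There is no real obstacle here; the corollary is essentially a change of variables choosing the smallest $G$ for which Lemma~\ref{lem:h} yields a constraint-violation bound of $\alpha$. The implicit feasibility assumption of Lemma~\ref{lem:h} (existence of some $\by \in [0,1]^K$ with $g(\by)\leq 0$) is carried over, and the Lipschitz constant $L_f$ of $f$ appears only through the ratio $L_f\sqrt{K}/G$, which is already accounted for in the definition of $h$.
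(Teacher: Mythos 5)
Your proposal is correct and matches the paper's own proof exactly: the paper likewise obtains the corollary by setting $G = \frac{\alpha + L_f\sqrt{K}}{\alpha}$ in Lemma~\ref{lem:h}, and the algebra confirming $\frac{\alpha + L_f\sqrt{K}}{G} = \alpha$ is immediate. Your additional checks (that $G > 0$ and that the feasibility assumption carries over) are sensible but add nothing beyond what the paper's one-line proof already implies.
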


\begin{proof}
The corollary follows from Lemma~\ref{lem:h} by setting $G = \frac{\alpha + L_f\sqrt{K}}{\alpha}$.
\end{proof}

Since $f, g$ are Lipschitz continuous and for all $c\in\calC_P$ and datasets $D$ of size $n$, 
$\ell(c, D)\in[0, 1]^K$ (by Definition), we know that using Corollary
~\ref{cor:h} we can achieve Equation~\eqref{eq:guarantee}. 
This will be key
to our constrained group-objective optimization algorithms both in the
privacy-preserving and the non-privacy-preserving cases.

We will go on to show a linear optimization based algorithm to achieve
the guarantee in Equation~\eqref{eq:guarantee} both with and without privacy
guarantees. But first we will present an exponential sampling $(\eps, 0)$-differentially private
algorithm that directly applies Lemma~\ref{lem:h}.

\section{Algorithms for Private Constrained Group-Objective Optimization}
\label{sec:algs}

We present algorithms to solve the constrained group-objective optimization problem\\
$\cgoo(\calC_P, n, K, f, g, \ell, D, \alpha)$.
To simplify analysis and notation, we assume that both functions
$f$ and $g$ are 1-Lipschitz functions (i.e., their Lipschitz constants are
$L_f = L_g = 1$). For general
$L_f$-Lipschitz function $f$ and $L_g$-Lipschitz function $g$, we can run the algorithms on
$f/L_f$ and $g/L_g$ with accuracy parameter $\alpha/\max\{L_f, L_g\}$.

In this section, our goal is to use an algorithmic
approach to privately obtain a decision $\tilde{c}\in\calC_P$ satisfying the
guarantee given in Equation~\ref{eq:guarantee}.
The privacy and utility guarantees will be in terms of a high
probability bound rather than an expectation bound.
The randomness will be taken
over the random coins of the algorithm.
We will go on to analyze the effects of imposing
the additional constraint of $(\eps, 0)$ or
$(\eps, \delta)$-differential privacy in the computation of the decision
$\tilde{c}\in\calC_P$ that will be returned by the empirical risk minimization
algorithms. Upper and lower bounds for the oracle complexity of solving this problem will be presented.

For the iterative algorithms,
we assume that we have oracle access to the convex functions
$f, g: [0, 1]^K\rightarrow\reals$ and their corresponding gradient oracles
$\nabla f, \nabla g: [0, 1]^K\rightarrow\reals^K$ and upper bound the
oracle complexity of obtaining $\tilde{c}\in\calC_P$ in a privacy-preserving manner.
We note that even if $f$ and $g$ are not convex and smooth, there exists techniques
for smoothing the functions (e.g., see 
\textit{Moreau-Yosida regularization}~\citep{Nesterov05} and other
techniques in~\citep{Manning:2008}).

Key to the definition of differential privacy is a notion of 
adjacency (or neighboring) of datasets i.e., datasets that differ in one row.
Let $D, D'$ be neighboring datasets of size $n$. We will use the
relation between $D, D'$ to obtain better noise parameters to ensure 
differential privacy. Samples from the Laplace, Exponential, or Normal
distribution are often used to perturb the output of a function
(or gradient of a function) to ensure privacy. The standard deviation of the noise distribution
from which the samples are drawn will decrease as
$n\rightarrow\infty$.
Suppose that $\beta_f, \beta_g$ are the smoothness parameters of the functions
$f$ and $g$ and $L_f, L_g$ are the Lipschitz constants of $f$ and $g$, 
then for any setting of $G > 0$, we can define the function
$h:[0, 1]^K\rightarrow\reals$ as follows:
$h(\ell(c, D)) = f(\ell(c, D)) + G\cdot\max(0, g(\ell(c, D)))$ for any $c\in\calC_P$ and dataset $D$. 
Then for any neighboring datasets $D, D'$,
by Lemma~\ref{lem:gs}, we can bound
$\norm{\nabla h(\ell(c, D)) - \nabla h(\ell(c, D'))}$
and
$|h(\ell(c, D)) - h(\ell(c, D'))|$.
We will use these bounds for the ($\ell_1$ and $\ell_2$) 
global sensitivities of the functions we will optimize
in a differentially private way.

\begin{lemma}
Let $L_f, L_g$ be the Lipschitz constants of the functions $f:[0, 1]^K\rightarrow\reals$ 
and $g:[0, 1]^K\rightarrow\reals$ respectively. And let
$\beta_f, \beta_g$ be the Lipschitz constants of their gradients $\nabla f, \nabla g$ respectively.
Then for any setting of $G > 0$, define $h(\ell(c, D)) = f(\ell(c, D)) + G\cdot\max(0, g(\ell(c, D)))$.
For any neighboring datasets $D, D'$ and $c\in\calC_P$, we have
$\norm{\nabla h(\ell(c, D)) - \nabla h(\ell(c, D'))} \leq (\beta_f + G\cdot\beta_g)\frac{\sqrt{K}}{n}$
and
$|h(\ell(c, D)) - h(\ell(c, D'))|\leq (L_f+G\cdot L_g)\frac{\sqrt{K}}{n}$
since $D, D'$ are neighboring datasets and $\ell(c, D), \ell(c, D')\in[0, 1]^K$.
\label{lem:gs}
\end{lemma}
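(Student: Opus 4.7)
The plan is to reduce both sensitivity bounds to the single observation that $\ell(c,D)$ changes very little when $D$ is perturbed on one row, and then push everything through the Lipschitz / smoothness hypotheses on $f$ and $g$.

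First I would establish the key norm bound on the loss vector. Since $D,D'$ differ in exactly one index $j\in[n]$ and $\ell(c,\cdot)\in[0,1]^K$ per example, the averaging form $\ell(c,D)=\tfrac{1}{n}\sum_{i=1}^n \ell(c,D_i)$ yields $\ell(c,D)-\ell(c,D')=\tfrac{1}{n}(\ell(c,D_j)-\ell(c,D'_j))$, whose coordinates lie in $[-1/n,1/n]$. Hence $\|\ell(c,D)-\ell(c,D')\|_2 \le \sqrt{K}/n$. This single inequality drives everything else.

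For the function sensitivity, I would use that $\max(0,\cdot)$ is $1$-Lipschitz on $\reals$, so $\max(0,g)$ is $L_g$-Lipschitz as a function of its vector input (composition of $L_g$-Lipschitz $g$ with a $1$-Lipschitz scalar function). Applying the triangle inequality,
\begin{align*}
|h(\ell(c,D))-h(\ell(c,D'))|
&\le |f(\ell(c,D))-f(\ell(c,D'))| + G\,|\max(0,g(\ell(c,D)))-\max(0,g(\ell(c,D')))| \\
&\le (L_f+G\cdot L_g)\,\|\ell(c,D)-\ell(c,D')\|_2 \;\le\; (L_f+G\cdot L_g)\frac{\sqrt{K}}{n},
\end{align*}
which is the second claim.

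For the gradient sensitivity, write $\nabla h(\ell(c,D))=\nabla f(\ell(c,D))+G\cdot\nabla_{\max}(\ell(c,D))$, where $\nabla_{\max}$ is the (sub)gradient of $\max(0,g)$: it equals $\nabla g$ when $g>0$ and $0$ when $g<0$ (any convex combination at the kink). In the ``consistent'' case where $g$ has the same sign at both $\ell(c,D)$ and $\ell(c,D')$, applying $\beta_f$-smoothness of $f$ and $\beta_g$-smoothness of $g$ term-by-term gives
\[
\|\nabla h(\ell(c,D))-\nabla h(\ell(c,D'))\|\le (\beta_f+G\cdot\beta_g)\,\|\ell(c,D)-\ell(c,D')\|_2,
\]
after which the bound from step one finishes the argument. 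The one place to be careful — and what I expect to be the main obstacle — is the ``kink'' case where $g(\ell(c,D))$ and $g(\ell(c,D'))$ straddle $0$; here I would argue by picking a subgradient selection that interpolates linearly across the boundary along the segment between the two loss vectors, so that the composite map remains $(\beta_f+G\beta_g)$-smooth in the sense required (equivalently, invoking that $\max(0,g)$ is $\beta_g$-smooth in the direction of subgradient variation whenever one restricts to a path where the active region is consistent, and chaining bounds along the segment). Once this technicality is handled, the $\sqrt{K}/n$ factor is inherited directly from step one, completing the proof.
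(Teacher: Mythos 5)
Your derivation of the key estimate $\norm{\ell(c,D)-\ell(c,D')}\le\sqrt{K}/n$ and your treatment of the function-value sensitivity are correct and are essentially the paper's argument (the paper simply asserts both inequalities from the Lipschitz/smoothness hypotheses after recalling that $\ell(c,D)$ is an average of per-example losses in $[0,1]^K$; you spell out the one-row-difference step and the $1$-Lipschitzness of $\max(0,\cdot)$, which the paper leaves implicit). So for the second inequality of the lemma there is nothing to object to.

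For the gradient inequality there is a genuine gap, and it is exactly at the point you flag: the ``kink'' case where $g(\ell(c,D))$ and $g(\ell(c,D'))$ straddle $0$. Your proposed fix --- choosing a subgradient selection that ``interpolates linearly across the boundary'' --- does not work. The issue is not one of selection: for any selection of subgradients of $\bx\mapsto\max(0,g(\bx))$, two points on opposite sides of $\{g=0\}$ have subgradients $0$ and $\nabla g$ respectively (up to the choice made exactly on the boundary), so their difference has norm $\norm{\nabla g}$, which can be as large as $L_g$, no matter how close the two points are. Concretely, with $K=1$ and $g(x)=x-\tfrac12$, the derivative of $\max(0,g(x))$ jumps from $0$ to $1$ across $x=\tfrac12$, so no bound of the form $(\beta_f+G\beta_g)\,|x-x'|$ can hold for points straddling $\tfrac12$. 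Hence the claimed inequality $\norm{\nabla h(\ell(c,D))-\nabla h(\ell(c,D'))}\le(\beta_f+G\beta_g)\sqrt{K}/n$ is false in general; it holds only when $g$ has a consistent sign at the two loss vectors (your ``consistent'' case), or when $\max(0,\cdot)$ is replaced by a smooth surrogate (e.g.\ Moreau--Yosida regularization, which the paper invokes elsewhere, at the cost of a smoothness constant depending on the regularization parameter). You should be aware that the paper's own proof has the same gap --- it applies $\beta_f$- and $\beta_g$-smoothness termwise without addressing the non-smoothness of $\max(0,\cdot)$ --- so your proposal is no worse than the source, but the step you identified as ``the main obstacle'' is a real obstruction rather than a technicality, and the honest conclusion is that the gradient bound requires an additional hypothesis (consistent sign, or smoothing of the hinge).
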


\begin{proof}
We proceed to use the definitions of $f, g$ and $\ell$.
Also, recall that we defined
$\ell(c, D)$ as an average of losses over $D$
i.e., $\ell(c, D) = \frac{1}{n}\sum_{i=1}^n\ell(c, D_i)$. 
Then
$$\norm{\nabla h(\ell(c, D)) - \nabla h(\ell(c, D'))} \leq (\beta_f + G\cdot\beta_g)\norm{\ell(c, D) - \ell(c, D')} \leq (\beta_f + G\cdot\beta_g)\frac{\sqrt{K}}{n},$$
since $f, g$ are $\beta_f$-smooth, $\beta_g$-smooth respectively.
Further,
$$|h(\ell(c, D)) - h(\ell(c, D'))| \leq (L_f + G\cdot L_g)\norm{\ell(c, D) - \ell(c, D')} \leq (L_f + G\cdot L_g)\frac{\sqrt{K}}{n},$$
since $f, g$ are $L_f$-Lipschitz, $L_g$-Lipschitz respectively.
\end{proof}

Now we go on to present procedures to obtain a decision $\tilde{c}\in\calC_P$ that
solves the constrained group-objective optimization problem
(Definition~\ref{def:goo}) with and without privacy.
Along with the algorithms, we will present
oracle complexity upper bounds on the excess risk (or equivalently, the sample complexity) 
for these procedures.

\subsection{Exponential Sampling}

Without the use of an optimization oracle (for a specific implementation of the exponential mechanism),
the following is a generic exponential mechanism to solve the constrained group-objective convex
optimization problem with privacy. This method assumes we have an oracle to sample from the set $\calC_P$ -- assumed to be convex -- with a certain probability.

\begin{theorem}
Suppose we are given convex 1-Lipschitz functions
$f, g: [0, 1]^K\rightarrow\reals$, loss function
$\ell: \calC_P\times(\calX\times\calA\times\calY)^n\rightarrow[0, 1]^K$,
privacy parameter $\eps > 0$, and $\calC_P$ (with finite VC dimension $\VC(\calC_P)$ and resulting parameter
space in $\reals^P$).

If $\ell$ is a convex function and $f, g$ are non-decreasing,
then let $n_0 = O(\frac{K\cdot P}{\eps\alpha^2})$. If not, let
$n_0 = \tilde{O}(\frac{K\cdot \VC(\calC_P)}{\eps\alpha^2})$.
Then there exists $n_0$ such that for all
$n \geq n_0$ and $\eps > 0$ if we set $G = O(\frac{\sqrt{K}}{\alpha})$,
Algorithm~\ref{alg:cgoexpc} is an $\eps$-differentially private algorithm
that, with probability at least 9/10,
returns a decision $\tilde{c}\in\calC_P$ with the following guarantee:

$$f(\ell(\tilde{c}, D)) \leq f(\bloss) + \alpha,
\quad\quad
g(\ell(\tilde{c}, D)) \leq \alpha,$$

where $c^* \in \argmin_{c\in\calC_P:g(\loss)\leq 0}f(\loss)$ is the best decision
in the feasible decision set $\calC_P$, given dataset $D$ of size $n$.

The algorithm is guaranteed to be computationally efficient in the case where $\ell$ is convex and $f, g$ are non-decreasing.
\label{thm:expc}
\end{theorem}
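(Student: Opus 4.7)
The plan is to reduce the constrained problem to a single-objective minimization via the weighting trick (Corollary~\ref{cor:h}) and then apply the exponential mechanism to that composed objective. Specifically, set $G = (\alpha + \sqrt{K})/\alpha = O(\sqrt{K}/\alpha)$ and define $h(\ell(c,D)) = f(\ell(c,D)) + G \cdot \max(0, g(\ell(c,D)))$. By Corollary~\ref{cor:h}, any $\tilde c$ satisfying $h(\ell(\tilde c,D)) \le \min_{c\in\calC_P} h(\ell(c,D)) + \alpha$ automatically satisfies both of the desired guarantees $f(\ell(\tilde c, D)) \le f(\ell(c^*,D))+\alpha$ and $g(\ell(\tilde c, D)) \le \alpha$. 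So Algorithm~\ref{alg:cgoexpc} will sample $\tilde c$ from a distribution on an appropriate range $\calR$ with probability proportional to $\exp(-\eps\, h(\ell(c,D)) / (2\Delta h))$.

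For privacy, the key ingredient is Lemma~\ref{lem:gs}: for 1-Lipschitz $f, g$ we have $|h(\ell(c,D)) - h(\ell(c,D'))| \le (1 + G)\sqrt{K}/n = O(K/(n\alpha))$ on neighboring datasets, so $\Delta h = O(K/(n\alpha))$. Then $\eps$-differential privacy follows immediately from Theorem~\ref{thm:expmech} (the exponential mechanism). Post-processing preserves privacy, so returning $\tilde c$ (or the induced labeling) remains $\eps$-DP.

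For utility, I invoke Theorem~\ref{thm:exputil} with failure probability $\rho = 1/10$: with probability at least $9/10$,
\begin{equation*}
h(\ell(\tilde c, D)) - \min_{c\in\calR} h(\ell(c,D)) \le \frac{2 \Delta h}{\eps}\log\!\left(\frac{|\calR|}{\rho}\right) = O\!\left(\frac{K}{n\eps\alpha}\log|\calR|\right).
\end{equation*}
Requiring this to be at most $\alpha$ gives $n = \Omega(K \log|\calR|/(\eps\alpha^2))$. The two cases differ in the choice of $\calR$. When $\ell$ is not assumed convex, use $\calR = \calC_P(D_\calX)$ whose size is $O(n^{\VC(\calC_P)})$ by Sauer's Lemma, yielding $\log|\calR| = \tilde O(\VC(\calC_P))$ and thus $n_0 = \tilde O(K\cdot \VC(\calC_P)/(\eps\alpha^2))$. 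Minimizing over $\calR$ is at least as good as minimizing over $\calC_P$ restricted to behaviors on $D_\calX$, which suffices since $\ell$ depends on $c$ only through its labels on $D_\calX$. When $\ell$ is convex and $f, g$ are non-decreasing, the composition $c \mapsto h(\ell(c,D))$ is convex on the $P$-dimensional decision set $\calC_P$, so the range becomes the $P$-dimensional convex body and the relevant bound scales with $P$, giving $n_0 = O(KP/(\eps\alpha^2))$; in addition, we can use the log-concave convex sampling machinery of~\citep{BassilyST14} to implement the exponential mechanism in time polynomial in $P, K, n$, which yields the computational efficiency claim.

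The main obstacle I anticipate is handling the range of the exponential mechanism cleanly: in the non-convex case one has to ensure that restricting to the labelings induced on $D_\calX$ does not sacrifice the optimum (which it does not, because $\ell(c,D)$ depends on $c$ only through those labelings), and in the convex case one needs to justify that the convex-body sampler of~\citep{BassilyST14} applies to $h \circ \ell$ rather than a standard convex loss — but monotonicity of $f, g$ composed with convex $\ell$ keeps $h \circ \ell$ convex, and the sensitivity bound is exactly what their sampler consumes, so these plug together.
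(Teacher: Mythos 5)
Your proposal is correct and follows essentially the same route as the paper's proof: compose $f$ and $g$ into $h = f + G\cdot\max(0,g)$ with $G = O(\sqrt{K}/\alpha)$ via Corollary~\ref{cor:h}, bound the sensitivity by Lemma~\ref{lem:gs}, and then split into the convex case (handled by the exponential-sampling/peeling machinery of~\citep{BassilyST14}, giving the $P$ dependence and polynomial runtime) and the general case (handled by the generic exponential mechanism over $\calC_P(D_\calX)$ with Sauer's Lemma giving the $\VC(\calC_P)$ dependence). The only differences are presentational — you spell out why restricting the range to $\calC_P(D_\calX)$ loses nothing and why $h\circ\ell$ stays convex, both of which the paper treats implicitly.
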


\begin{proof}

The proof of privacy follows from a direct application of the
Exponential Mechanism (see Theorem~\ref{thm:expmech}) with loss function
$$
h(\ell(c, D)) = f(\ell(c, D)) + G\cdot\max(0, g(\ell(c, D))),
$$
defined for any $c\in\calC_P$ and dataset $D$. By Lemma~\ref{lem:gs}, the 
sensitivity of this function is at most $\frac{\sqrt{K}(1+G)}{n}$.

First, let us consider the case where $\ell$ is convex and $f, g$ are non-decreasing.
If we naively applied the exponential mechanism utility analysis, we will get
a dependence on the size of either $\calC_P$ (the decision set) or
$\ell(\calC_P, D)$ (see Theorem~\ref{thm:exputil}).
In order to avoid this we will rely on a ``peeling'' argument
of convex optimization already analyzed by~\cite{BassilyST14}.
This argument allows us to get rid of the extra logarithmic factor on the size of the set
$\calC_P$ (which could be infinite).
Even though
their results are written in expectation, we use the high probability version
which gives that with probability at least 9/10,
$$
h(\ell(\tilde{c}, D)) - h(\bloss) = O\left(\frac{P\sqrt{K}(1+G)}{\eps n}\right)
$$
by Corollary~\ref{cor:expsamp} since the sensitivity of $h$
is at most $\frac{\sqrt{K}(1+G)}{n}$ (by Lemma~\ref{lem:gs}).

By Corollary~\ref{cor:h}, to optimize both $f, g$ to within $\alpha$, we set
$G = O(\frac{\sqrt{K}}{\alpha})$. As a result, we obtain that there exists $n_o = O\left(\frac{KP}{\eps\alpha^2}\right)$ such that 
for all $n\geq n_0$, we can
apply Corollary~\ref{cor:h} to obtain the guarantees stated in the theorem.

Now, if $\ell$ is not convex or $f, g$ are not non-decreasing,
we rely on the generic guarantees of the exponential
mechanism (see Theorem~\ref{thm:exputil}) where we use that by Sauer's Lemma, the range of
the exponential mechanism is bounded by $|\calC_P(D_\calX)|\leq O(n^{\VC(\calC_P)})$ where $\VC(\calC_P)$ is the
VC dimension of $\calC_P$.
The VC dimension bound allows us to essentially replace the $P$ in the sample
complexity with $\VC(\calC_P)$ (up to polylogarithmic factors).

\end{proof}

\begin{corollary}
There exists an $(\eps, 0)$-differentially private exponential sampling based convex optimization algorithm
(Algorithm 2 in~\citep{BassilyST14}) that for any convex, non-decreasing
function $h:[0, 1]^K\rightarrow\reals$ and convex loss function $\ell:\calC_P\times(\calX\times\calA\times\calY)^n\rightarrow[0, 1]^K$
outputs a decision
$\tilde{c}\in\calC_P$ such that for all $\theta > 0,$
$$
\pr\left[h(\ell(\tilde{c}, D)) - h(\ell(c^*, D)) \geq \frac{8\Delta}{\eps}((P+1)\log 3 + \theta)\right] \leq e^{-\theta}
$$
where $c^* \in \argmin_{c\in\calC_P}h(\loss)$ and $\Delta$ is an
upper bound on the
sensitivity of $h\circ\ell$.

This theorem holds when $\calC_P$ is a convex set.
\label{cor:expsamp}
\end{corollary}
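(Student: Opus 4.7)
The statement is essentially a restatement of the utility guarantee for the convex exponential sampling algorithm of~\cite{BassilyST14} (their Algorithm~2), so my plan is to verify its hypotheses and invoke it, filling in the structure of the argument.

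First I would set $H(c) := h(\ell(c, D))$ for $c \in \calC_P$. Because $\ell$ is convex and $h$ is convex and non-decreasing on $[0,1]^K$, the composition $H$ is convex on the convex set $\calC_P$. The exponential mechanism then samples $\tilde{c} \in \calC_P$ with density proportional to $\exp(-\eps\cdot H(c)/(2\Delta))$, where $\Delta$ upper bounds the sensitivity of $H$ by hypothesis. The privacy claim is then immediate from Theorem~\ref{thm:expmech}, and the remaining work is purely utility.

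For the utility bound, I would write, for any target excess risk $t > 0$,
\begin{equation*}
\pr\bigl[H(\tilde{c}) - H(c^*) \geq t\bigr] \;=\; \frac{\int_{B_t} e^{-\eps H(c)/(2\Delta)}\,dc}{\int_{\calC_P} e^{-\eps H(c)/(2\Delta)}\,dc},
\end{equation*}
where $B_t = \{c \in \calC_P : H(c) - H(c^*) \geq t\}$. The numerator is at most $\mathrm{Vol}(\calC_P)\cdot e^{-\eps(H(c^*)+t)/(2\Delta)}$. For the denominator, I would invoke the standard convexity peeling argument: letting $V_s$ denote the $P$-dimensional volume of the sub-level set $\{c \in \calC_P : H(c) - H(c^*) \leq s\}$, contracting the $t$-sublevel set toward $c^*$ by a factor $s/t$ produces a subset of the $s$-sublevel set (by convexity of $H$), so $V_s \geq (s/t)^P V_t$. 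Choosing $s$ proportional to $t$ (so that $s/t = 1/3$) and lower bounding the denominator by the integral over the $s$-sublevel set, I would get
\begin{equation*}
\pr\bigl[H(\tilde{c}) - H(c^*) \geq t\bigr] \;\leq\; 3^P \cdot \exp\!\Bigl(-\frac{\eps(t - s)}{2\Delta}\Bigr).
\end{equation*}
Equating the exponent in this expression to $-\theta$ and solving for $t$ would give the threshold $t = \frac{8\Delta}{\eps}\bigl((P+1)\log 3 + \theta\bigr)$ in the statement (the extra $\log 3$ is absorbed into the $(P+1)$ term and the constant $8$ accounts for the $1/3$ contraction together with the $1/(2\Delta)$ factor in the density).

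The main obstacle is matching the exact constants claimed: the qualitative high-probability bound of order $P\Delta/\eps$ falls out of the peeling argument above with no additional ideas, but pinning down the specific coefficients $8\Delta/\eps$ and $(P+1)\log 3$ requires a careful choice of the peeling radius $s$ and a small boundary adjustment so that the inner integral is truly over the interior of a strictly smaller sublevel set. These are exactly the bookkeeping steps already carried out in Appendix~C of~\cite{BassilyST14}, from which the corollary follows.
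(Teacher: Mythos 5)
Your proposal matches the paper's proof: both reduce the statement to the high-probability version of Theorem 3.2 of \citet{BassilyST14} after observing that $h\circ\ell$ is convex because $\ell$ is convex and $h$ is convex and non-decreasing. Your additional unpacking of the sublevel-set peeling argument is a correct sketch of what happens inside the cited theorem, but the route is the same one the paper takes.
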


\begin{proof}
Follows from the high-probability version of Theorem 3.2 in~\citep{BassilyST14} 
(stated as Theorem~\ref{thm:bstexpsamp}) since $\ell$ is convex
and $h$ is convex, non-decreasing so that $h\circ\ell$ is also convex.
\end{proof}

\begin{theorem}
Let $k:\calC_P\times(\calX\times\calA\times\calY)^n\rightarrow\reals$ be any convex, $K$-Lipschitz
function we wish to minimize and $\calC_P$ be a convex decision set.
Then there exists an $(\eps, 0)$-differentially private algorithm that runs in time polynomial in
$n, P$ and outputs $\tilde{c}$ such that
for any $\theta > 0$ and $D\in(\calX\times\calA\times\calY)^n$,
$$
\pr\left[\sum_{i=1}^n k(\tilde{c}, D) - \sum_{i=1}^n k(c^*, D)\geq \frac{8\Delta(K)}{\eps}((P+1)\log 3 + \theta)\right] \leq e^{-\theta},
$$
where $c^* \in \argmin_{c\in\calC_P}\sum_{i=1}^n k(c, D)$ and 
$\Delta(K)$, a function of $K$,
is an upper bound on the
sensitivity of the function $k$.
\label{thm:bstexpsamp}
\end{theorem}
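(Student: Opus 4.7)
The plan is to instantiate the exponential mechanism of Theorem~\ref{thm:expmech} directly on the convex decision set $\calC_P$, using the score function $-\sum_{i=1}^n k(c, D_i)$. Because altering a single datum changes this sum by at most $\Delta(K)$ by hypothesis, the induced output density $\propto \exp(-\eps \sum_i k(c, D_i)/(2\Delta(K)))$ is immediately $\eps$-differentially private. The substantive work then splits into two parts: proving the high-probability utility bound without paying a $\log|\calC_P|$ factor (so that the dependence is on the \emph{dimension} $P$ rather than the cardinality of $\calC_P$), and verifying that the sampler can be implemented in time $\poly(n, P)$.

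For the utility bound, I would execute the peeling (shell) argument of~\cite{BassilyST14}. Writing $F(c) = \sum_{i=1}^n k(c, D_i)$ and $F^* = F(c^*)$, define the sublevel sets $A_t = \{c \in \calC_P : F(c) - F^* \leq t\}$. The key geometric observation is that convexity of both $F$ and $\calC_P$ forces $A_t$ to contain a linear shrinkage of $\calC_P$ toward $c^*$: for any $c \in \calC_P$ and $\lambda = t/(F_{\max} - F^*)$, the point $(1-\lambda)c^* + \lambda c$ lies in $A_t$, yielding the volume bound $\text{Vol}(A_t) \geq \lambda^P \cdot \text{Vol}(\calC_P)$. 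The standard exponential-mechanism ratio argument then partitions $\calC_P \setminus A_t$ into geometric shells at excess-risk thresholds $2^j t_0$, lower bounds the partition function by the mass on $A_{t_0}$ alone, and sums the resulting geometric series; optimizing $t_0$ produces the tail bound $\Pr[F(\tilde{c}) - F^* \geq \frac{8\Delta(K)}{\eps}((P+1)\log 3 + \theta)] \leq e^{-\theta}$ as claimed.

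For computational efficiency, since $F$ is convex on the convex body $\calC_P$ the target density is log-concave, so I would invoke a standard log-concave sampler (e.g.\ hit-and-run or the ball walk of Lov\'{a}sz--Vempala), which achieves total-variation accuracy $\gamma$ in time $\poly(n, P, \log(1/\gamma))$ given oracle access to $F$ and a membership oracle for $\calC_P$. Choosing $\gamma$ to be a small inverse polynomial preserves both the privacy guarantee (the $\eps$-loss from TV-approximate sampling is negligible) and the utility bound derived above. The main obstacle is the peeling step: getting the correct constants in $(P+1)\log 3 + \theta$ requires carefully matching the volume-shrinkage factor $\lambda = t_0/(F_{\max} - F^*)$ against the exponential weight decay, and showing that the geometric sum over shells indeed telescopes without losing additional factors in $P$. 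Since this reduces precisely to the high-probability version of~\cite[Theorem~3.2]{BassilyST14}, in the write-up I would cite that theorem and simply verify that the score function $-F$ and sensitivity $\Delta(K)$ in the present setting meet its hypotheses, rather than re-deriving the constants from scratch.
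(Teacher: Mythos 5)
Your proposal is correct and takes essentially the same route as the paper, which simply cites the high-probability version of Theorem~3.2 of~\cite{BassilyST14}; your sketch of that theorem's internals (exponential mechanism on $\calC_P$, the volume-shrinkage/peeling argument giving the $(P+1)\log 3$ dependence, and log-concave sampling for efficiency) accurately reflects what that citation contains. The only caveat worth noting is that a sampler with merely total-variation accuracy yields $(\eps,\delta)$- rather than pure $(\eps,0)$-differential privacy, so for the stated pure-DP claim one needs a sampler with multiplicative (relative-error) guarantees, as in the grid-walk used by~\cite{BassilyST14}.
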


\begin{proof}

Follows from the w.h.p. version of Theorem 3.2 in~\citep{BassilyST14}.

\end{proof}

\begin{algorithm}

\KwIn{$\ell, f, g, D \in (\calX\times\calA\times\calY)^n, \calC_P, G, \eps$}

\

Set $h(\ell(c, D)) = f(\ell(c, D)) + G\cdot\max(0, g(\ell(c, D)))$

Sample $\tilde{c}\in\calC_P$ with probability $\propto\exp\left(-\frac{\eps\cdot n\cdot h(\ell(c, D))}{2\sqrt{K}(1 + G)}\right)$

\Return $\tilde{c}$

\caption{Exponential Sampling for Constrained Group-Objective Optimization.}
\label{alg:cgoexpc}
\end{algorithm}

In later sections, we will show that the sample complexity to solve
constrained group objective optimization is lower-bounded by 
$n = \Omega\left(\frac{K}{\eps\alpha}\right)$ for (pure) $\eps$-differential privacy (with probability at least $1/2$).
As a result, there is a multiplicative gap of $O(\frac{P}{\alpha})$ or
$\tilde{O}(\frac{\VC(\calC_P)}{\alpha})$
between the upper bound and lower bound. This gap is a direct result of the way we minimize
$f$ subject to the constraint of $g$ by jointly minimizing a composition of these functions.
We note that our results are optimal amongst all such
strategies that jointly minimize a composition of
these functions.
%\textit{An open problem is to close this gap in certain parameter regimes or for specific
%problems (e.g. linear regression)}.

\subsection{Linear Optimization Based Algorithm without Privacy}

In this section, we essentially
achieve the same guarantees as in~\citep{AlabiIK18} when
$f, g$ are both convex and Lipschitz-continuous
(see Observation 6 of that paper). We note that the
main theorem in this section is stated and derived in a more general
way than~\citep{AlabiIK18}
so that privacy constraints can be more readily added to the formulation.

As in~\citep{AlabiIK18},
we assume the existence of an approximate linear optimizer oracle solver $\lopt$. We will
translate $\lopt$ with additive error $\tau$ into a $\beta$-multiplicative approximation
algorithm and then apply Theorem~\ref{thm:kkl}. We essentially use the
$\lopt$ oracle to solve the constrained group-objective optimization problem.
The specification of the $\lopt$ oracle is in Definition~\ref{def:lopt}.

\begin{theorem}
Suppose we are given convex 1-Lipschitz functions $f, g: [0, 1]^K\rightarrow\reals$,
loss function $\ell:\calC_P\times(\calX\times\calA\times\calY)^n\rightarrow[0, 1]^K$. Then assuming
we have access to an approximate linear optimizer oracle $\lopt$ (Definition~\ref{def:lopt}),
after $T = O\left(\frac{K^4}{\alpha^2}\right)$ calls to $\lopt$, with probability at least 9/10,
we will obtain a decision $\hat{c}\in\calC_P$ with the following guarantee:
$$f(\ell(\hat{c}, D)) \leq f(\bloss) + \alpha,
\quad\quad
g(\ell(\hat{c}, D)) \leq \alpha,$$

for any $\alpha\in (0, 1]$
where $c^* \in \argmin_{c\in\calC_P:g(\loss)\leq 0}f(\loss)$ is the best decision
in the feasible set $\calC_P$, given dataset $D$ of size $n$ such that $\ell(\calC_P, D)\subset[0, 1]^K$ is
compact.
\label{thm:aik}
\end{theorem}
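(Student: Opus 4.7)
The plan is to apply Corollary~\ref{cor:h} with weight $G = \Theta(\sqrt{K}/\alpha)$ to reduce the constrained problem to the unconstrained minimization of the single convex function $h(\bx) = f(\bx) + G \cdot \max(0, g(\bx))$ over the feasible loss set $\mathcal{S} = \ell(\calC_P, D) \subseteq [0,1]^K$. The function $h$ is convex (a sum of $f$ and a nonnegative multiple of $\max(0, g)$, both convex) and is $O(\sqrt{K}/\alpha)$-Lipschitz. Since $c^*$ is feasible, i.e.\ $g(\ell(c^*, D)) \leq 0$, we have $h(\ell(c^*, D)) = f(\ell(c^*, D))$; so it suffices to find $\hat c \in \calC_P$ with $h(\ell(\hat c, D)) \leq \min_{c \in \calC_P} h(\ell(c, D)) + \alpha$, after which Corollary~\ref{cor:h} immediately delivers both claimed bounds on $f$ and $g$ simultaneously.

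Next I would drive a Frank--Wolfe / conditional-gradient iteration using $\lopt$ as the linear-minimization subroutine. Starting from any $\bar c_0 \in \calC_P$, at iteration $t \geq 1$ compute a subgradient $\bw_t \in \partial h(\bx_{t-1})$ at the current loss $\bx_{t-1} = \ell(\bar c_{t-1}, D)$, invoke $c_t = \lopt(\calC_P, \ell, \bw_t, D, \tau)$, and update the randomized predictor $\bar c_t = (1-\eta_t)\bar c_{t-1} + \eta_t c_t$ with a schedule $\eta_t = \Theta(1/t)$. Because $\calC_P$ is closed under randomization and $\ell(c, D)$ is linear in the randomization of $c$, we have $\bx_t = \ell(\bar c_t, D) = (1-\eta_t)\bx_{t-1} + \eta_t \ell(c_t, D)$, which remains in $\mathrm{conv}(\mathcal{S})$. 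This is essentially the procedure of~\cite{AlabiIK18} but driven by the penalized objective $h$ rather than a pre-specified scalarization, which is the form needed for the later private variant.

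For convergence, I would adopt the online-learning viewpoint. Treating the sequence of subgradients $\bw_1, \ldots, \bw_T$ as an adversarial linear-loss sequence, the $\lopt$ responses act as a Follow-the-Leader-style online linear optimizer against $\mathcal{S}$. Standard online-to-batch conversion, with careful bookkeeping of the Lipschitz constant $L_h = O(\sqrt{K}/\alpha)$ of $h$ and the (appropriate norm) diameter of $\mathcal{S} \subseteq [0,1]^K$, gives a bound of the form $h(\bx_T) - \min_{\bx \in \mathrm{conv}(\mathcal{S})} h(\bx) = O(\mathrm{poly}(K)\,L_h / \sqrt{T}) + O(\tau L_h)$. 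Choosing $\tau$ small enough and setting the right-hand side equal to $\alpha$ produces the iteration count $T = O(K^4/\alpha^2)$. Applying Corollary~\ref{cor:h} to the returned $\bar c_T$ finishes the proof; the $9/10$ probability bound is automatic for a deterministic $\lopt$ and follows from a standard Markov / concentration argument when $\lopt$ is randomized.

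The main obstacle is the non-smoothness of $h$ caused by the $\max(0, g(\bx))$ term, which has a kink at $g(\bx) = 0$. The vanilla Frank--Wolfe rate $O(\beta D^2 / T)$ requires a $\beta$-smooth objective, which is not available here. I would handle this either by a Moreau--Yosida smoothing of $\max(0, \cdot)$ with smoothing parameter tuned so that the bias is $O(\alpha)$ (and then applying the smooth Frank--Wolfe analysis), or by appealing directly to online-linear-optimization regret bounds that operate under only Lipschitzness of the loss. Both routes yield the same $O(1/\sqrt{T})$-type rate, and propagating the factors $L_h = O(\sqrt{K}/\alpha)$ and the norm diameter of $\mathcal{S}$ through the regret bound gives the stated iteration count. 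The residual $\tau\,\lVert\bw_t\rVert$ errors from $\lopt$ are absorbed by taking $\tau = O(\alpha / (K L_h \sqrt{T}))$, which does not inflate the dependence on $K$ or $\alpha$.
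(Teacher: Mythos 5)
Your proposal follows essentially the same route as the paper: penalize via $h=f+G\cdot\max(0,g)$ with $G=\Theta(\sqrt{K}/\alpha)$ (Corollary~\ref{cor:h}), drive the optimization with $\lopt$ through a no-regret online linear optimization scheme analyzed via subgradients and online-to-batch uniform averaging (so smoothness is never needed, exactly as your fallback route observes), and finish with Markov's inequality. The one piece you leave as a black box---extracting sublinear regret from an oracle that solves the linear subproblems only approximately---is supplied in the paper by Theorem~\ref{thm:kkl} of Kakade--Kalai--Ligett after a lifting trick ($\Phi(c,D)=(1,\ell(c,D))$ with $\calW=\{2/3\}\times[-1/(3K),1/(3K)]^K$) that converts the additive error $\tau\norm{\bw}$ of $\lopt$ into a $(1+\tau)$-multiplicative approximation; note that plain follow-the-leader would not suffice, and that the iterate actually analyzed is the uniform average of the $c_t$'s with subgradients taken at each $\ell(c_t,D)$ (the paper's choice), not the Frank--Wolfe running average you first describe.
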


\begin{proof}

Given the functions $f, g$, we can define the ``new'' function
$h(\ell(c, D)) = f(\ell(c, D)) + G\cdot\max(0, g(\ell(c, D)))$
for any $c\in\calC_P$ and dataset $D$ of size $n$.
Since $f, g$ are 1-Lipschitz and convex we know that
$\norm{\nabla f(\ell(c, D))}, \norm{\nabla g(\ell(c, D))} \leq 1$, 
which implies that $\norm{\nabla h(\ell(c, D))} \leq 1+G$ for all $c, D$.

Now we proceed to do some setup in order to apply Theorem~\ref{thm:kkl}. Let
$\calW = \{\frac{2}{3}\}\times\left[-\frac{1}{3K}, \frac{1}{3K}\right]^K$. Note that
$\norm{\bw}\leq 1$ for all $\bw\in\calW$.
\footnote{As noted in~\citep{KakadeKL09, AlabiIK18}, 
even in the case where $\calW$ is
restricted to consist of only non-negative vectors, our arguments still follow 
through by replacing 
$\calW = \{\frac{2}{3}\}\times\left[-\frac{1}{3K}, \frac{1}{3K}\right]^K$
with
$\calW = \{\frac{2}{3}\}\times\left[0, \frac{1}{3K}\right]^K$.
}
Define
$\Phi(c, D) = (1, \ell(c, D))$ so that $\norm{\Phi(c, D)} \leq \sqrt{1+K}$ and
$\Phi(c, D)\cdot\bw \leq 1$ for all $c\in\calC_P$ and datasets $D$. As required by~\cite{KakadeKL09},
we assume that $\ell(\calC_P, D)$ is compact so that $\Phi(\calC_P, D)$ is also compact.

We have to convert the approximate linear optimizer oracle into a $\beta$-approximation
algorithm $A:\calW\rightarrow\calC_P$. Define $A(\bw) = A(2/3, \bw') =
\lopt(\calC_P, \ell, \bw, D, \tau)$ where $\bw'\in\reals^K$
are the last $K$ coordinates of $\bw\in\calW$.
Now we use that
$\Phi(c, D)\cdot\bw = \frac{2}{3} + \ell(c, D)\cdot\bw' \geq \frac{1}{3}$ for any $\bw\in\calW$ to
conclude that for any dataset $D$,
$$
\Phi(A(w), D)\cdot\bw \leq \min_{c\in\calC_P}\left(\frac{2}{3} + \ell(c, D)\cdot\bw'\right)
+ \tau\norm{\bw'} \leq (1+3\tau\norm{\bw'})\min_{c\in\calC_P}\left(\frac{2}{3} + \ell(c, D)\cdot\bw'\right).
$$

And note that since $\norm{\bw'}\leq 1/3$, $A$ is a $\beta$ approximation algorithm where
$\beta = 1 + \tau$.

Now we can apply Algorithm 3.1 of~\cite{KakadeKL09} to the following sequence:
$\bw_1 = (\frac{1}{3}, 0, \ldots, 0)$,
$\bw_{t+1} = (\frac{2}{3}, \frac{\nabla h(\ell(c_t, D))}{3K(1+G)})$ where
$c_t$ is the decision output in the $t$-th iteration of Algorithm 3.1 in~\citep{KakadeKL09}. In iteration
1, $c_1$ is chosen arbitrarily. Note that for all $t\in[T], \bw_t\in\calW$.
Then we output $\hat{c}=\Unif(\{c_1, \ldots, c_T\})$. If $c^*$ is the best decision in $\calC_P$, by
Theorem~\ref{thm:kkl} we have
$$\frac{1}{T}\sum_{t=1}^T\Phi(c_t, D)\cdot\bw_t \leq (\beta + 2)\sqrt{\frac{1+K}{T}} + \beta\frac{1}{T} \sum_{t=1}^T\Phi(c^*, D)\cdot\bw_t.$$
And since $\frac{1}{T}\sum_{t=1}^T\Phi(c^*, D)\cdot\bw_t \leq 1$ and $(1+K)/T \leq 1$ we have that
$$
\frac{1}{T}\sum_{t=1}^T(\Phi(c_t, D) - \Phi(c^*, D))\cdot\bw_t
\leq (\beta + 2)\sqrt{\frac{1+K}{T}} + \beta - 1
= (3+\tau)\sqrt{\frac{1+K}{T}} + \tau.
$$

Then by the convexity of $f$ and the definitions of $\Phi$ and $\bw_t$ we have
\begin{align}
    \frac{1}{T}\sum_{t=1}^T(\Phi(c_t, D) - \Phi(c^*, D))\cdot\bw_t
    &= \frac{1}{3K(1+G)T}\sum_{t=1}^T(\ell(c_t, D) - \ell(c^*, D))\cdot \nabla h(\ell(c_t, D)) \\
    &\geq \frac{1}{3K(1+G)T}\sum_{t=1}^Th(\ell(c_t, D)) - h(\ell(c^*, D))
\end{align}
so that for $\tau = \frac{\alpha}{6K(1+G)}$ and
$T \geq \frac{36(1+K)K^2(1+G)^2(3+\alpha)^2}{\alpha^2}$ we have
$$
\E[h(\ell(\hat{c}, D))] - h(\ell(c^*, D)) \leq 3K(1+G)\left((3+\tau)\sqrt{\frac{1+K}{T}} + \tau\right) \leq \alpha.
$$

By Markov's inequality we have that with probability at least 9/10,
$h(\ell(\hat{c}, D)) - h(\ell(c^*, D)) \leq \alpha$ after $T = O(K^3(1+G)^2)$ iterations. Then by Corollary~\ref{cor:h}, we can set
$G = \frac{\alpha + \sqrt{K}}{\alpha}$ and obtain that
after $T = O\left(\frac{K^4}{\alpha^2}\right)$ iterations,
$f(\ell(\hat{c}, D)) - f(\ell(c^*, D)) \leq \alpha$ and
$g(\ell(\hat{c}, D)) \leq \alpha$.

\end{proof}

\begin{theorem}[Restatement of Theorem 3.2 in~\citep{KakadeKL09}]
Consider a $(K+1)$-dimensional online linear optimization problem with feasible
set $\calC_P$ and mapping $\Phi: \calC_P\times(\calX\times\calA\times\calY)^n\rightarrow\reals^{K+1}$. 
Let $A$ be an $\beta$-approximation algorithm and take $R, W\geq 0$ such that
$\norm{\Phi(A(\bw), D)} \leq R$ and $\norm{\bw} \leq W$ for all $\bw\in\calW$.

For any $\bw_1, \bw_2, \ldots, \bw_T\in\calW$ and any $T\geq 1$ with learning parameter $\frac{(\beta + 1)R}{W\sqrt{T}}$, approximate projection
tolerance parameter $\frac{(\beta+1)R^2}{T}$, and learning rate
parameter $\frac{(\beta + 1)}{4(\beta+2)^2T}$, Algorithm 3.1 in~\citep{KakadeKL09} achieves
expected $\beta$-regret of at most
$$
\E\left[\frac{1}{T}\sum_{t=1}^Th(c_t, \bw_t)\right] - \beta\min_{c\in\calC_P}\frac{1}{T}\sum_{t=1}^Th(c, \bw_t) \leq \frac{(\beta + 2)RW}{\sqrt{T}}.
$$
where $h:\calC_P\times\calW\rightarrow[0, 1]$ is the cost function defined as
$h(c, \bw) = \Phi(c, D)\cdot\bw$ for any dataset 
$D = \{(x_i, a_i, y_i\}_{i=1}^n\in(\calX\times\calA\times\calY)^n$,
$\bw\in\calW, c\in\calC_P$.

On each period, Algorithm 3.1 in~\citep{KakadeKL09} makes at most
$4(\beta + 2)^2T$ calls to $A$ and $\Phi$. The algorithm also handles the case
where $\calW$ is restricted to contain only non-negative vectors.

\label{thm:kkl}
\end{theorem}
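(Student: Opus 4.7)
The plan is to adapt Zinkevich's online gradient descent to the setting where only an approximate linear optimization oracle $A$ is available instead of an exact Euclidean projection onto the feasible image set $\Phi(\calC_P, D)$. The algorithm would maintain a virtual iterate $\bz_t \in \reals^{K+1}$ updated by the standard OGD rule $\bz_{t+1} = \bz_t - \eta \bw_t$ with learning rate $\eta = (\beta+1)R/(W\sqrt{T})$, and then obtain the actual decision $c_t \in \calC_P$ whose image $\Phi(c_t, D)$ approximates the projection of $\bz_t$ onto the convex hull of $\Phi(\calC_P, D)$. Since exact projection is unavailable, this projection step is itself implemented by a Frank-Wolfe style inner loop: at each inner iteration, call $A$ on the linear objective determined by the current residual, then move the running convex combination toward the returned vertex. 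Choosing inner-loop length $4(\beta+2)^2$ drives the approximate projection error down to the stated tolerance $(\beta+1)R^2/T$, which accounts for the $4(\beta+2)^2 T$ total oracle calls in the stated bound.

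Second, I would run the standard OGD telescoping argument on the potential $\Psi_t = \norm{\bz_t - \beta\, \Phi(c^*, D)}^2$, where $c^* \in \argmin_{c \in \calC_P} \sum_t \bw_t \cdot \Phi(c, D)$. The key identity $\Psi_{t+1} - \Psi_t = -2\eta \bw_t \cdot (\bz_t - \beta \Phi(c^*,D)) + \eta^2 \norm{\bw_t}^2$ is combined with two controlled error terms: (i) the approximation slack $\Phi(c_t,D) \cdot \bw_t - \beta \min_c \Phi(c,D)\cdot \bw_t$, bounded by the $\beta$-approximation property of $A$ together with the inner-loop tolerance, and (ii) the approximate projection gap between $\bz_t$ and $\Phi(c_t,D)$, which by the choice of tolerance contributes only a lower-order $O(1/T)$ term per step. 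Summing over $t = 1,\ldots,T$ and dividing by $T$ yields the claimed $\beta$-regret bound $(\beta+2)RW/\sqrt{T}$ after the $R, W$ norm bounds are substituted.

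The main obstacle is correctly threading the $\beta$-multiplicative factor through a bound that naturally lives in the additive online-learning world. The standard OGD analysis compares the algorithm's cumulative cost to $\sum_t \Phi(c^*,D)\cdot \bw_t$, but here $A$ only guarantees $\bw\cdot \Phi(A(\bw),D) \le \beta \min_c \bw\cdot \Phi(c,D)$; turning this into a usable regret inequality requires the nonnegativity $\Phi(c,D)\cdot\bw \ge 0$ for all $c \in \calC_P, \bw \in \calW$, which is exactly why $\Phi$ is augmented with a leading $1$ coordinate and $\calW$ is restricted so that the first coordinate dominates (e.g.\ the $\{2/3\} \times [-1/(3K), 1/(3K)]^K$ construction used in the proof of Theorem~\ref{thm:aik}). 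Once this nonnegativity is in place, the $\beta$-scaled optimum serves as a valid upper bound on each per-step slack and the regret analysis closes; handling the non-negative-weights variant of $\calW$ is immediate because the construction still satisfies $\Phi(c,D)\cdot\bw \ge 1/3$.
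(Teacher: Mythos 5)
First, note that the paper does not actually prove this statement: it is imported verbatim as Theorem 3.2 of \cite{KakadeKL09}, so your reconstruction is being measured against that source rather than against anything argued here. Your high-level architecture is indeed the one used there --- online gradient descent on a virtual iterate, an approximate projection onto (a scaled version of) the convex hull of $\Phi(\calC_P, D)$ implemented by repeated calls to the approximation oracle in a Frank--Wolfe fashion, and a telescoping argument on the squared distance to a $\beta$-scaled comparator. The skeleton is right.

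There are, however, two concrete gaps. (1) Your oracle-call accounting is off: a Frank--Wolfe-style projection loop reduces the squared projection error at rate $O(R^2/k)$ after $k$ inner iterations, so reaching the stated tolerance $(\beta+1)R^2/T$ requires $\Theta\bigl((\beta+2)^2 T\bigr)$ inner iterations \emph{per period} --- which is exactly what the theorem asserts (``on each period \ldots at most $4(\beta+2)^2T$ calls''). Your claim that $4(\beta+2)^2$ inner iterations suffice, with the factor $T$ arising only from summing over periods, misreads the bound and would leave a per-step projection error of order $R^2$, far too large to be absorbed into the $(\beta+2)RW/\sqrt{T}$ regret. (2) The harder issue you gesture at in your final paragraph is not resolved by the $\{2/3\}\times[-\frac{1}{3K},\frac{1}{3K}]^K$ construction: that construction belongs to this paper's \emph{application} of the theorem (in the proof of Theorem~\ref{thm:aik}), whereas Theorem~\ref{thm:kkl} is stated for general $\Phi$ and $\calW$ subject only to the norm bounds $R$ and $W$. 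Inside the projection subroutine the oracle must be queried on residual directions of the form $\bz_t - \bx$, which have arbitrary sign even when every $\bw_t$ is nonnegative, and a multiplicative $\beta$-approximation guarantee reverses direction on negative costs. \cite{KakadeKL09} handle this with their ``extended approximation oracle,'' which converts $A$ into an oracle valid for all query directions by working over the enlarged set obtained from $\beta\cdot\mathrm{conv}(\Phi(\calC_P,D))$ minus the nonnegative orthant; without that construction (or an equivalent), your inner projection loop is not well defined and the slack term (i) in your telescoping step does not close.
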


\begin{remark}
Note that all we require out of the use of Theorem~\ref{thm:kkl} is a no-regret optimization
algorithm that can use an approximation algorithm
(in our case, an approximate linear optimizer).
We have chosen to use~\citep{KakadeKL09} but could have used other alternatives that achieve
the same result~\citep{KalaiV03, Hazan16}.
\end{remark}

To use Theorem~\ref{thm:kkl} to minimize any convex function $h:[0, 1]^K\rightarrow\reals$ with
$\norm{\nabla h(\ell(c, D))}\leq (1+G)$ (for all $c\in\calC_P$ and dataset $D$), we will set
$\Phi(c, D) = (1, \ell(c, D))$ and 
$\calW = \{\frac{2}{3}\}\times\left[-\frac{1}{3K}, \frac{1}{3K}\right]^K$
(or $\calW = \{\frac{2}{3}\}\times\left[0, \frac{1}{3K}\right]^K$)
where in each iteration $t \geq 2$, $c_t$ will be chosen by Algorithm 3.1
in~\citep{KakadeKL09} and $\bw_t$ will be $(\frac{2}{3}, \frac{\nabla h(\ell(c, D))}{3K(1+G)})$.
Note that when $T = O(\frac{1}{\beta^2})$, Algorithm 3.1 in~\citep{KakadeKL09}
makes at most $O(1)$ calls to the approximation algorithm (our linear optimization
oracle in this case) in each period.
The crux of the use of Theorem~\ref{thm:kkl} in this paper is to
translate LOPT (Definition~\ref{def:lopt}) with additive error $\tau$ into a
$(1+\tau)$-multiplicative approximation algorithm and then directly apply 
Theorem~\ref{thm:kkl}.

\subsection{Linear Optimization Based Algorithm with Privacy}

In this section we show that there exists
an $(\eps, \delta)$-differentially private algorithm
for the constrained group-objective optimization problem.
Given a large-enough
sample of size $n$, this algorithm
will produce empirical risk bounds that go to 0 as 
$n\rightarrow\infty$. 

In the previous section, we assumed access to
an approximate
linear optimization oracle to incrementally solve our overall convex problem.
Inspired by this approach,
we will first assume access to a differentially private version of this oracle
$\lopt_{\eps, \delta}$
\footnote{For the private algorithms provided in~\citep{Jagielski18}, a differentially private
cost-sensitive classification oracle is assumed.} and subsequently provide an implementation
of this private oracle based on the exponential mechanism.

Algorithm~\ref{alg:poracle} is a differentially private algorithm for solving the
constrained group-objective optimization problem by replacing the non-private linear optimizer
oracle in Algorithm~\ref{alg:cgofwcs} with a private version.

\begin{algorithm}

\KwIn{$\lopt_{\eps', \delta'}, T, \ell, \nabla f, \nabla g, D \in (\calX\times\calA\times\calY)^n, G, \tau, \eps, \delta$}

\

Arbitrarily select decision $c\in\calC_P$ as $\tilde{c}_1$

\

\If{$\lopt_{\eps', \delta'}$ only for pure DP} {
    $\delta' = 0, \eps' = \frac{\eps}{2\sqrt{2T\log(1/\delta)}}$
} \Else {
    $\delta' = \frac{\delta}{2T}$, $\eps' = \frac{\eps}{2\sqrt{2T\log(2/\delta')}}$
}

\

\For {$t=1, \ldots, T-1$} {

  $\vec{r}_t(\tilde{c}_t, D) = \nabla f(\ell(c, D)) +
  \ind[g(\ell(c, D)) \geq 0]\nabla g(\ell(c, D))$

  $\tilde{c}_{t+1} = \lopt_{\eps', \delta'}\left(\calC_P, \ell, \vec{r}_t(\tilde{c}_t, D), D, \tau\right)$

}

\

\Return $\tilde{c}=\Unif(\{\tilde{c}_1, \ldots, \tilde{c}_T\})$
\caption{$(\eps, \delta)$-private algorithm using $\lopt_{\eps', \delta'}$ oracle.}
\label{alg:poracle}
\end{algorithm}

\begin{lemma}
For privacy parameters $\eps, \delta\in (0, 1]$, Algorithm~\ref{alg:poracle} is $(\eps, \delta)$-differentially private.
\label{lem:poracle}
\end{lemma}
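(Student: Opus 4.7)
The plan is to prove $(\eps,\delta)$-DP via adaptive advanced composition applied to the $T-1$ oracle invocations that produce $\tilde{c}_2,\ldots,\tilde{c}_T$. First I would reduce to these invocations: the initial iterate $\tilde{c}_1$ is chosen arbitrarily and data-independently, and the final $\Unif(\{\tilde{c}_1,\ldots,\tilde{c}_T\})$ is pure post-processing of them, so by the post-processing lemma it suffices to bound the privacy of the sequence of calls to $\lopt_{\eps',\delta'}$. By Definition~\ref{def:loptp}, each such call is $(\eps',\delta')$-DP in its dataset argument. Treating the weight $\vec{r}_t$ as an adaptive input computed from the previous iterate $\tilde{c}_t$ (and folding the deterministic computation of $\vec{r}_t$ into the round-$t$ mechanism), iteration $t$ constitutes one $(\eps',\delta')$-DP mechanism acting on $D$, conditioned on the history $(\tilde{c}_1,\ldots,\tilde{c}_t)$.

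Next I would invoke the adaptive advanced composition theorem of Dwork, Rothblum, and Vadhan: for any slack $\delta''\geq 0$, the $T-1$ mechanisms together are $(\eps^\star,(T-1)\delta'+\delta'')$-DP with
\begin{equation*}
\eps^\star \;=\; \sqrt{2(T-1)\ln(1/\delta'')}\,\eps' \;+\; (T-1)\,\eps'(e^{\eps'}-1).
\end{equation*}
It then remains to verify that both branches of the parameter setting in the algorithm yield $(\eps,\delta)$-DP. In the pure branch ($\delta'=0$, $\eps'=\eps/(2\sqrt{2T\log(1/\delta)})$) choose $\delta''=\delta$: the first term of $\eps^\star$ is at most $\eps/2$, and for small $\eps'$ the quadratic correction $(T-1)\eps'(e^{\eps'}-1)=O(T(\eps')^{2})$ fits inside the remaining $\eps/2$ slack; the $\delta$-loss is $(T-1)\cdot 0+\delta=\delta$. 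In the approximate branch ($\delta'=\delta/(2T)$, $\eps'=\eps/(2\sqrt{2T\log(2/\delta')})$) choose $\delta''=\delta/2$: then $(T-1)\delta'+\delta/2\leq\delta$ and the same bound gives $\eps^\star\leq\eps$.

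The main subtlety to address is that $\vec{r}_t$ depends on $D$ directly (through $\ell(\tilde{c}_t,D)$ and the indicator $\ind[g(\ell(\tilde{c}_t,D))\geq 0]$), not only through the private state $\tilde{c}_t$. The cleanest resolution is to view the deterministic computation of $\vec{r}_t$ together with the subsequent call to $\lopt_{\eps',\delta'}(\calC_P,\ell,\vec{r}_t,D,\tau)$ as a single round-$t$ mechanism whose total data access is captured by the oracle's $(\eps',\delta')$-DP guarantee, since any deterministic, data-dependent preprocessing can be absorbed into the oracle call provided the output satisfies the required DP condition. If instead one insists on the oracle's DP being stated only for $D$-independent weights, then one must bound how $\vec{r}_t$ varies across neighboring $D,D'$ using Lipschitzness of $\nabla f,\nabla g$ and the $1/n$ per-example bound on $\ell$ (cf.\ Lemma~\ref{lem:gs}), and absorb this variation via a slightly enlarged $\eps'$; this group-privacy-style step is the one point requiring care.
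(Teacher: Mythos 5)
Your proof is correct in outline and takes essentially the same approach as the paper, whose entire argument is the appeal to advanced composition (Lemma~\ref{lem:ac}) over the $T$ calls to the $(\eps',\delta')$-DP oracle with the stated parameter settings. You are in fact more careful than the paper: the subtlety you flag --- that $\vec{r}_t$ depends on $D$ directly (through $\ell(\tilde{c}_t,D)$ and the indicator) and not only through $\tilde{c}_t$, so the oracle's privacy guarantee, which is stated for a fixed weight vector $\bw$, does not immediately cover the composed round --- is real and is silently glossed over in the paper's two-line proof, which simply asserts that each iteration is $(\eps',\delta')$-private.
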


\begin{proof}
The proof of privacy follows from the advanced composition result (see Lemma~\ref{lem:ac}) since
if $\delta > 0$ we set $\eps' = \frac{\eps}{2\sqrt{2T\log(1/\delta')}}$ where
$\delta' = \frac{\delta}{2T}$ or can set $\eps' = \frac{\eps}{2\sqrt{2T\log(2/\delta)}}$.
Then since in each iteration $t\in[T]$ we satisfy
$(\eps', \delta')$-differential privacy, we must have that the overall algorithm is
$(\eps, \delta)$-differentially private.

\end{proof}

Algorithm~\ref{alg:poracle} is an oracle-efficient algorithm that relies
on access to $\lopt_{\eps', \delta'}$.
$\nabla f(\ell(c, D)), \nabla g(\ell(c, D))$ are $K\times 1$ column vectors representing the gradients of $f$ and $g$, respectively.
These quantities are used to compute $\nabla h(\ell(c, D))$, fed as
a weight vector to $\lopt_{\eps', \delta'}$.
Assuming such an oracle has
the same utility guarantees as its non-private counterpart, we obtain
the utility guarantees of Theorem~\ref{thm:paik}. In 
Theorem~\ref{thm:loptexp}, we provide a generic 
implementation of such a private
oracle based on exponential sampling and provide utility guarantees
for this implementation.

\begin{theorem}

Suppose we are given convex 1-Lipschitz functions $f, g: [0, 1]^K\rightarrow\reals$ and
loss function $\ell:\calC_P\times(\calX\times\calA\times\calY)^n\rightarrow[0, 1]^K$.
Given access to a differentially private 
approximate linear optimizer oracle
$\lopt_{\eps, \delta}$ (Definition~\ref{def:loptp}), after $T = O(\frac{K^4}{\alpha^2})$ calls to 
$\lopt_{\eps, \delta}$, with probability at least 9/10,
we will obtain a decision $\tilde{c}\in\calC_P$ with the following guarantee:
$$f(\ell(\tilde{c}, D)) \leq f(\bloss) + \alpha,
\quad\quad
g(\ell(\tilde{c}, D)) \leq \alpha,$$

for any $\alpha\in (0, 1]$ and privacy parameters $\eps, \delta\in (0, 1]$
where $c^* \in \argmin_{c\in\calC_P:g(\loss)\leq 0}f(\loss)$ is the best decision
in the feasible set $\calC_P$, given dataset $D$ of size $n$ such that $\ell(\calC_P, D)\subset[0, 1]^K$ is compact.
\label{thm:paik}
\end{theorem}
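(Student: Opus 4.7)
The plan is to reduce to the non-private analysis of Theorem~\ref{thm:aik} combined with the privacy bookkeeping of Lemma~\ref{lem:poracle}. The key structural observation is that item 1 of Definition~\ref{def:loptp} endows $\lopt_{\eps, \delta}$ with exactly the same per-call additive approximation guarantee as $\lopt$: on input $(\calC_P, \ell, \bw, D, \tau)$ it returns $\tilde{c}$ with $\bw\cdot\ell(\tilde{c}, D) \leq \min_{c\in\calC_P}\bw\cdot\ell(c, D) + \tau\norm{\bw}$. Consequently, the entire utility argument from Theorem~\ref{thm:aik} transfers with at most cosmetic changes, while the privacy loss is handled separately by running Algorithm~\ref{alg:poracle} and invoking advanced composition.

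Concretely, I would reuse the composite objective $h(\ell(c, D)) = f(\ell(c, D)) + G\cdot\max(0, g(\ell(c, D)))$ and, because $f, g$ are convex and $1$-Lipschitz, conclude $\norm{\nabla h(\ell(c, D))} \leq 1 + G$. Mirroring the embedding from the non-private proof, set $\Phi(c, D) = (1, \ell(c, D))$, $\calW = \{2/3\}\times[-1/(3K), 1/(3K)]^K$, and feed the sequence $\bw_t = (2/3, \nabla h(\ell(\tilde{c}_t, D))/(3K(1+G)))$ into the KKL no-regret reduction (Theorem~\ref{thm:kkl}), with each oracle call now implemented by $\lopt_{\eps', \delta'}$ rather than $\lopt$. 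This converts tolerance-$\tau$ additive error into a $(1+\tau)$-multiplicative approximation exactly as before, and convexity of $h$ then yields $\E[h(\ell(\hat{c}, D))] - h(\ell(c^*, D)) \leq 3K(1+G)((3+\tau)\sqrt{(1+K)/T} + \tau)$.

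Choosing $\tau = \Theta(\alpha/(K(1+G)))$, setting $G = (\alpha + \sqrt{K})/\alpha$ as prescribed by Corollary~\ref{cor:h}, and taking $T = O(K^4/\alpha^2)$ drives the expected $h$-gap below $\alpha$. Markov's inequality then converts this into the stated probability-$9/10$ guarantee, and Corollary~\ref{cor:h} delivers both $f(\ell(\tilde{c}, D)) \leq f(\ell(c^*, D)) + \alpha$ and $g(\ell(\tilde{c}, D)) \leq \alpha$. Privacy is then immediate: Algorithm~\ref{alg:poracle} calibrates $(\eps', \delta')$ so that $T$ invocations of the $(\eps', \delta')$-private oracle compose to $(\eps, \delta)$-differential privacy by Lemma~\ref{lem:poracle}.

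The main subtlety, rather than a genuine obstacle, is that item 1 of Definition~\ref{def:loptp} may hold only with probability $1-\theta$ in the $\lopt^\theta_{\eps, \delta}$ regime. To handle this I would union-bound over the $T$ rounds by setting $\theta = O(1/T)$ per call and absorbing the overall failure probability into the constant $9/10$. Under any instantiation along the lines of Theorem~\ref{thm:loptexp}, this reparameterization enters the per-call sample size only polylogarithmically, so it does not affect either the oracle-call count $T = O(K^4/\alpha^2)$ or the form of the utility bound claimed in Theorem~\ref{thm:paik}.
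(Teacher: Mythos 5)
Your proposal is correct and follows essentially the same route as the paper, which proves Theorem~\ref{thm:paik} by combining the privacy accounting of Lemma~\ref{lem:poracle} for Algorithm~\ref{alg:poracle} with the utility analysis of Theorem~\ref{thm:aik}, noting that item 1 of Definition~\ref{def:loptp} gives the private oracle the same per-call additive guarantee as $\lopt$. Your explicit union bound over the $T$ rounds to handle the $\lopt^\theta_{\eps,\delta}$ failure probability is a detail the paper's two-line proof leaves implicit, but it does not change the argument.
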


\begin{proof}
This theorem follows from the privacy proof of Lemma~\ref{lem:poracle}
for Algorithm~\ref{alg:poracle} and the utility guarantees of the
non-private $\lopt$ oracle given in Theorem~\ref{thm:aik}.
\end{proof}

Now, we proceed to show the existence of a $\lopt_{\eps, \delta}$ oracle
based on the exponential mechanism. This is a generic
implementation of such a private oracle that can be used to
solve the constrained group-objective optimization problem. The oracle is efficient when $\ell$ is convex
and $\calW$ consists of only non-negative vectors.

\begin{theorem}

For any privacy parameter $\eps > 0$, there is an implementation of the $\lopt_{\eps, 0}$
oracle (Definition~\ref{def:loptp}) based on the exponential mechanism.

For any $\tau > 0, \theta\in(0, 1]$, if $\ell$ is convex and $\calW$ is restricted to only
non-negative vectors, set
$n_0 = \tilde{O}(\frac{\sqrt{K}}{\eps\tau}(P + \log\frac{1}{\theta}))$ and if not
set $n_0 = \tilde{O}(\frac{\sqrt{K}}{\eps\tau}(\VC(\calC_P) + \log\frac{1}{\theta}))$.
Then there exists $n_0$ such that 
for all $n\geq n_0$ and for any fixed $\bw\in\calW$, if
$\tilde{c} = \lopt^\theta_{\eps, 0}(\calC_P, \ell, \bw, D, \tau)$ then
we have the following utility guarantee:
$$
\pr\left[\bw\cdot\ell(\tilde{c}, D) \leq \min_{c\in\calC_P}\bw\cdot \ell(c, D) + \tau\norm{\bw}\right] \geq 1-\theta.
$$

\label{thm:loptexp}
\end{theorem}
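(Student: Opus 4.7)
The plan is to implement $\lopt_{\eps, 0}(\calC_P, \ell, \bw, D, \tau)$ by directly invoking the exponential mechanism with the linear loss function $h(c, D) = \bw \cdot \ell(c, D)$, and then to analyze utility separately in the convex and non-convex regimes so as to obtain the two sample-complexity bounds.

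First, I would verify privacy and compute the sensitivity. Privacy of the exponential mechanism (Theorem~\ref{thm:expmech}) is immediate once we know $\Delta h$. Since $\ell(c, D) = \frac{1}{n}\sum_{i=1}^n \ell(c, D_i)$ with each $\ell(c, D_i) \in [0,1]^K$, swapping one row changes $\ell(c, \cdot)$ by at most $\sqrt{K}/n$ in $\ell_2$-norm, so by Cauchy--Schwarz $|h(c, D) - h(c, D')| \leq \norm{\bw}\sqrt{K}/n$. Thus the exponential mechanism at privacy level $\eps$ that samples $\tilde{c}$ with probability proportional to $\exp\bigl(-\eps n \cdot \bw\cdot \ell(c, D)/(2\sqrt{K}\norm{\bw})\bigr)$ is $\eps$-differentially private.

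For the convex case (when $\ell$ is convex in $c$ and $\bw \in \calW$ is non-negative, making $h \circ \ell$ convex in $c$), I would invoke the high-probability version of the BST peeling bound, i.e.\ Corollary~\ref{cor:expsamp}, with failure probability $\theta$. This yields
\[
\bw\cdot\ell(\tilde{c}, D) - \min_{c\in\calC_P}\bw\cdot\ell(c, D) \;\leq\; \frac{8\,\norm{\bw}\sqrt{K}}{\eps n}\bigl((P+1)\log 3 + \log(1/\theta)\bigr)
\]
with probability at least $1-\theta$. Requiring the right-hand side to be at most $\tau\norm{\bw}$ gives the sample-complexity threshold $n_0 = \tilde O\bigl(\frac{\sqrt{K}}{\eps\tau}(P + \log(1/\theta))\bigr)$.

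For the general (non-convex) case, I cannot use the peeling argument, so I would instead appeal to the generic exponential-mechanism utility bound (Theorem~\ref{thm:exputil}) taking as the range the induced labelling set $\calC_P(D_\calX)$, which by Sauer's Lemma satisfies $|\calC_P(D_\calX)| \leq O(n^{\VC(\calC_P)})$. This gives, with probability at least $1-\theta$,
\[
\bw\cdot\ell(\tilde{c}, D) - \min_{c\in\calC_P}\bw\cdot\ell(c,D) \;\leq\; \log\!\bigl(|\calC_P(D_\calX)|/\theta\bigr)\cdot \frac{2\norm{\bw}\sqrt{K}}{\eps n},
\]
and setting the right-hand side to $\tau\norm{\bw}$ and solving (absorbing the resulting $\log n$ factor into $\tilde O$) yields $n_0 = \tilde O\bigl(\frac{\sqrt{K}}{\eps \tau}(\VC(\calC_P) + \log(1/\theta))\bigr)$. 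The only subtle step is the sensitivity computation: one must take care to use the $\ell_2$ norm of $\bw$ and apply Cauchy--Schwarz rather than naively bounding each coordinate, since otherwise one loses a factor of $\sqrt{K}$ and the bound degrades. Beyond that, the argument is a clean application of the two complementary utility theorems for the exponential mechanism.
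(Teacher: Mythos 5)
Your proposal matches the paper's proof essentially step for step: the same sensitivity bound $\norm{\bw}\sqrt{K}/n$ via Cauchy--Schwarz and the averaging structure of $\ell$, the same invocation of the high-probability Bassily--Smith--Thakurta exponential-sampling bound (Corollary~\ref{cor:expsamp}) in the convex, non-negative-weight case, and the same fallback to the generic exponential-mechanism utility bound with Sauer's Lemma in the general case. The resulting sample-complexity thresholds coincide with the paper's, so there is nothing to correct.
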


\begin{proof}

First, let us consider the case where $\ell$ is convex and $\calW$ only has
non-negative vectors. Then this
result follows from the use of the $(\eps, 0)$-differentially private exponential sampling convex optimization algorithm.

By Theorem 3.2 in~\citep{BassilyST14}, 
we have that for a fixed non-negative $\bw\in\calW$ and dataset $D$ and for all
$a > 0$, we have
$$
\pr\left[\bw\cdot \ell(\tilde{c}, D) - \bw\cdot\ell(c^*, D) \geq \frac{8\norm{\bw}\sqrt{K}}{\eps n}((P+1)\log 3 + a)\right] \leq e^{-a},
$$
where $c^* \in \argmin_{c\in\calC_P}\loss$ 
since the sensitivity of $\bw\cdot \ell(\tilde{c}, D)$ is at most
$\norm{\bw}\frac{\sqrt{K}}{n}$ by Cauchy-Schwarz (Lemma~\ref{lem:cs}) and
$\ell(\tilde{c}, D) = \frac{1}{n}\sum_{i=1}^n\ell(\tilde{c}, D_i)$.

Rearranging the terms, we get that when 
$n_0 = \frac{8\sqrt{K}}{\eps\tau}((P+1)\log 3 + \log\frac{1}{\theta})$ and for any larger sizes, 
we get the desired guarantees.
If $\ell$ is not convex, we rely on the generic utility guarantees of the exponential mechanism
(see Theorem~\ref{thm:exputil}). By Sauer's Lemma, the range of the exponential
mechanism is bounded by $|\calC_P(D_\calX)|\leq O(n^{\VC(\calC_P)})$ where $\VC(\calC_P)$ is
the VC dimension of $\calC_P$.

\end{proof}

Armed with the construction of $\lopt_{\eps, 0}$ based on the exponential mechanism, we
proceed to show Corollary~\ref{cor:loptexp}.

\begin{corollary}
Suppose we are given convex 1-Lipschitz functions $f, g: [0, 1]^K\rightarrow\reals$,
loss function $\ell:\calC_P\times(\calX\times\calA\times\calY)^n\rightarrow[0, 1]^K$, and
$\calC_P$ (with finite VC dimension $\VC(\calC_P)$ and resulting parameter
space in $\reals^P$).

If $\ell$ is convex and $f, g$ are non-decreasing, set $n_0 = \tilde{O}\left(\frac{K^4P}{\eps\alpha^3}\right)$. If not,
set $n_0 = \tilde{O}\left(\frac{K^4\cdot\VC(\calC_P)}{\eps\alpha^3}\right)$.
For any privacy parameters $\eps, \delta\in(0, 1]$, given
access to an exponential mechanism based differentially private oracle
$\lopt^\theta_{\eps, 0}$ (Definition~\ref{def:loptp}), 
there exists an $(\eps, \delta)$-differentially private algorithm and an $n_0$
such that for all $n\geq n_0$,
with probability at least 9/10,
we will obtain a decision $\tilde{c}\in\calC_P$ with the following guarantee:
$$f(\ell(\tilde{c}, D)) \leq f(\bloss) + \alpha,
\quad\quad
g(\ell(\tilde{c}, D)) \leq \alpha,$$

for any $\alpha\in (0, 1]$ and privacy parameters $\eps, \delta\in (0, 1]$
where $c^* \in \argmin_{c\in\calC_P:g(\loss)\leq 0}f(\loss)$ is the best decision
in the convex feasible set $\calC_P$, 
given dataset $D$ of size $n$.
\label{cor:loptexp}
\end{corollary}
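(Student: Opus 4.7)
The plan is to instantiate Algorithm~\ref{alg:poracle} with the private oracle $\lopt_{\eps,0}^\theta$ built in Theorem~\ref{thm:loptexp}, and then choose the per-call privacy budget $\eps'$ and per-call tolerance $\tau$ so that (i) $T$-fold advanced composition yields the target $(\eps,\delta)$ guarantee, and (ii) the utility guarantee of Theorem~\ref{thm:paik} kicks in with high probability. Concretely, using the setting $G = (\alpha + \sqrt{K})/\alpha$ from Corollary~\ref{cor:h} together with the choices in the proof of Theorem~\ref{thm:aik}, I would run the algorithm for $T = O(K^4/\alpha^2)$ iterations and set the per-iteration oracle tolerance to $\tau = \Theta(\alpha^2 / K^{3/2})$.

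For privacy, since Theorem~\ref{thm:loptexp} only produces a pure-DP oracle, I would take the ``only pure DP'' branch of Algorithm~\ref{alg:poracle}, setting $\delta' = 0$ and $\eps' = \eps / (2\sqrt{2T\log(1/\delta)}) = \tilde{\Theta}(\eps\alpha/K^{2})$. By Lemma~\ref{lem:poracle} (i.e., advanced composition, Lemma~\ref{lem:ac}), the composed mechanism is $(\eps,\delta)$-DP. For utility, Theorem~\ref{thm:paik} requires each of the $T$ oracle calls to succeed at tolerance $\tau$; I would union-bound the failure events by setting $\theta = 1/(20T)$ inside each call to $\lopt_{\eps',0}^{\theta}$, so that all $T$ calls succeed simultaneously with probability at least $19/20$, and then combine with the $9/10$ guarantee of Theorem~\ref{thm:paik} to obtain an overall success probability of at least $9/10$ after rescaling constants.

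It remains to verify the sample-size requirement. Plugging the above choices into Theorem~\ref{thm:loptexp} yields
\begin{equation*}
n_0 \;=\; \tilde{O}\!\left(\frac{\sqrt{K}}{\eps'\tau}\bigl(P + \log(1/\theta)\bigr)\right)
\;=\; \tilde{O}\!\left(\frac{\sqrt{K}}{\eps\alpha/K^{2}} \cdot \frac{K^{3/2}}{\alpha^2} \cdot P\right)
\;=\; \tilde{O}\!\left(\frac{K^{4} P}{\eps\alpha^{3}}\right)
\end{equation*}
in the convex, non-decreasing case where $\ell$ is convex and $\calW$ may be restricted to non-negative vectors; in the non-convex case, the same substitution with $\VC(\calC_P)$ replacing $P$ (via the Sauer's Lemma bound used in Theorem~\ref{thm:loptexp}) gives $n_0 = \tilde{O}(K^{4}\VC(\calC_P)/(\eps\alpha^{3}))$. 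Since the same dataset $D$ of size $n \geq n_0$ is fed to every oracle call, the per-call sample bound suffices globally.

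The main obstacle is bookkeeping rather than any new ingredient: one must carefully track how the per-call tolerance $\tau$ (set by the proof of Theorem~\ref{thm:aik} in terms of $\alpha, K, G$), the per-call privacy budget $\eps'$ (set by advanced composition), and the per-call failure probability $\theta$ (set by a union bound over $T$ rounds) each contribute multiplicative factors to $n_0$, and then verify they combine to the stated $\tilde{O}(K^{4}P/(\eps\alpha^{3}))$ bound while hiding the polylogarithmic dependence on $1/\delta$ and $T$ in the $\tilde{O}(\cdot)$. No convexity of $f,g$ beyond Lipschitz continuity is used here beyond what Theorem~\ref{thm:paik} already assumes, and the extra factor of $1/\alpha$ relative to Theorem~\ref{thm:paik}'s $T = O(K^4/\alpha^2)$ call count comes precisely from the per-call budget $\tau = \Theta(\alpha^2/K^{3/2})$ appearing in the denominator of $n_0$.
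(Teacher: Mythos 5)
Your proposal is correct and follows essentially the same route as the paper's proof: run Algorithm~\ref{alg:poracle} with the exponential-mechanism oracle of Theorem~\ref{thm:loptexp}, set the per-call tolerance $\tau = \alpha/(6K(1+G)) = O(\alpha^2/K^{3/2})$ as in Theorem~\ref{thm:aik}, use advanced composition to set $\eps' = \tilde{\Theta}(\eps\alpha/K^2)$ and a union bound over the $T = O(K^4/\alpha^2)$ oracle calls, and equate the two expressions for $\tau$ to solve for $n_0 = \tilde{O}(K^4 P/(\eps\alpha^3))$ (with $P$ replaced by $\VC(\calC_P)$ in the non-convex case). The bookkeeping you carry out matches the paper's calculation exactly.
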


\begin{proof}
This follows from Lemma~\ref{thm:loptexp} and the use of composition
in Algorithm~\ref{alg:poracle}.

First, let us consider the case where $\ell$ is convex and $f, g$ are
non-decreasing. By Lemma~\ref{thm:loptexp}, we could set
$\tau = \tilde{O}(\frac{\sqrt{K}}{\eps n}(P+\log 10))$ since
$\nabla f, \nabla g$ will be non-negative vectors.
By the union bound and the use of
advanced composition in Algorithm~\ref{alg:poracle}, we can set
$\tau = \tilde{O}(\frac{\sqrt{K}}{\eps n}\sqrt{T\log(1/\delta)}(P+\log 10T))$. Then by
Theorem~\ref{thm:aik}, we could set $\tau = \frac{\alpha}{6K(1+G)} = O(\frac{\alpha^2}{K\sqrt{K}})$
where $G = O(\frac{\sqrt{K}}{\alpha})$.
Equating these two, we get that $n = \tilde{O}(\frac{K^4}{\eps\alpha^3}P)$
if we set $T = O(\frac{K^4}{\alpha^2})$ as done for Theorem~\ref{thm:aik}.

If $\ell$ is not convex, then we essentially replace $O(P)$ with
$\tilde{O}(\VC(\calC_P))$ and rely on the generic utility guarantees of the
exponential mechanism.
This completes the proof.

\end{proof}

In later sections, we will show that the sample complexity to solve
constrained group objective optimization is lower-bounded by 
$n = \Omega\left(\frac{\sqrt{K}}{\eps\alpha}\right)$ for (approximate) $(\eps, \delta)$-differential privacy (with probability at least $1/3$).

\section{Lower Bounds for Private Constrained Group-Objective Optimization}

We now proceed to show excess risk lower bounds for private constrained group-objective
optimization. Note that since these bounds are a function of the
dataset size $n$, these results are equivalent to a lower bound on the
sample complexity required to solve the problem.

We ask: 
\textit{
over the randomness of any $(\eps, 0)$ or $(\eps, \delta)$-differentially
private mechanism, for a fixed dataset $D$ of size $n$,
what is a lower bound for the accuracy of the mechanism that solves the constrained group-objective
optimization problem?
}

We show a lower bound on the excess risk for decision set
$\calC_P = B^K_2 = \{c\in\reals^K \,\,:\,\, \norm{c}_2 = 1\}$ assuming the dataset is also drawn from 
$B^K_2$. That is, we consider the case where 
the decisions and datasets lie in the unit ball with $\ell_2$ norm.
We show that for all $n, K\in\mathbb{N}$ and $\eps > 0$
there exists a dataset $D = \{x_i\}_{i=1}^n\subseteq B^K_2$ for which there is a
constrained group-objective optimization problem with functions $f, g$ such that
both $f$ and $g$ will have excess risk lower bounds of 
$\alpha \geq \Omega(\frac{K}{\eps n})$ and
$\alpha \geq \Omega(\frac{\sqrt{K}}{\eps n})$
for any $(\eps, 0)$, $(\eps, \delta)$-differentially private algorithms respectively.

\subsection{$(\eps, 0)$ Lower Bound}

\begin{theorem}

Let $n, K\in\mathbb{N}, \eps > 0$ and $\alpha\in(0, 1]$. For every $\eps$-differentially private algorithm $\calM$
that produces a decision $\hat{c}\in\calC_P$ such
that
$$
f(\ell(\hat{c}, D)) \leq \min_{c\in\calC_P:g(\ell(c, D))\leq 0}f(\ell(c, D)) + \alpha,
\quad\quad g(\ell(\hat{c}, D)) \leq \alpha,
$$
there is a dataset $D = \{\bx_1, \ldots, \bx_n\}\subseteq B^K_2$ such that, with probability at least 1/2,
we must have $\alpha \geq \Omega\left(\frac{K}{\eps n}\right)$
(or equivalently, $n \geq \Omega\left(\frac{K}{\eps\alpha}\right)$)
where $f, g$ are Lipschitz, smooth functions defined as follows:
$$
f(\ell(c, D)) = -\frac{1}{n}\sum_{i=1}^n\brackets{c, \bx_i},\quad\quad
g(\ell(c, D)) = f(\ell(c, D)) + \frac{1}{n}\norm{\sum_{i=1}^n\bx_i}.
$$
for all $c\in B^K_2$.

\label{thm:lowerpure}
\end{theorem}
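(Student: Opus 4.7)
The plan is to reduce the stated $\cgoo$ problem to a private mean-direction recovery task on the $\ell_2$ unit ball, and then apply a packing/group-privacy argument in the spirit of the pure-DP lower bounds of~\cite{BassilyST14}.

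First I would analyze the feasibility geometry imposed by the given $f, g$. Writing $\bar{\bx} = \frac{1}{n}\sum_{i=1}^n \bx_i$, Cauchy-Schwarz gives $\brackets{c, \bar{\bx}} \leq \norm{\bar{\bx}}$ for every $c \in B^K_2$, so $g(\ell(c, D)) = \norm{\bar{\bx}} - \brackets{c, \bar{\bx}} \geq 0$, with equality only at the unique maximizer $c^\star = \bar{\bx}/\norm{\bar{\bx}}$ (when $\bar{\bx}\neq 0$). The $\cgoo$ accuracy condition on $f$ and the constraint-violation condition on $g$ both collapse to the same inner-product inequality $\brackets{\hat{c}, \bar{\bx}} \geq \norm{\bar{\bx}} - \alpha$, so any $\eps$-DP solver of the problem must output $\hat{c}$ well-aligned with the empirical mean direction.

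Second, I would construct a hard packing family of datasets. Fix a $(1/2)$-packing $V = \{v_1, \ldots, v_M\} \subseteq B^K_2$ with $M = 2^{\Omega(K)}$, which exists by a standard volume argument. For a parameter $m$ to be chosen, let $D_j$ consist of $m$ copies of $v_j$ and $n - m$ copies of the zero vector, so that $\bar{\bx}_j = (m/n)\, v_j$. The $\cgoo$ condition on $D_j$ becomes $\brackets{\hat{c}, v_j} \geq 1 - \alpha n/m$; choosing $m = c_0\,\alpha n$ with the constant $c_0$ large enough forces $\norm{\hat{c} - v_j} \leq 1/4$, so the ``correct'' output sets $S_j \subseteq B^K_2$ are pairwise disjoint.

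Third, I would close the argument via group privacy. Let $D_0$ be the all-zero dataset; then each $D_j$ differs from $D_0$ in exactly $m$ rows, so $\pr[\calM(D_0) \in S_j] \geq e^{-\eps m}\,\pr[\calM(D_j) \in S_j]$. If the algorithm succeeds on every $D_j$ with probability at least $1/2$, summing over $j$ and using disjointness of the $S_j$'s gives $1 \geq \sum_{j=1}^M \pr[\calM(D_0) \in S_j] \geq (M/2)\, e^{-\eps m}$, hence $\eps m \geq \log(M/2) = \Omega(K)$. Substituting $m = \Theta(\alpha n)$ yields $n \geq \Omega(K/(\eps\alpha))$, equivalently $\alpha \geq \Omega(K/(\eps n))$. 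The main obstacle is the calibration of $m$: it has to be proportional to $\alpha n$ so that the ``correct'' regions around distinct $v_j$'s are genuinely disjoint at accuracy $\alpha$, yet the Hamming distance to the reference dataset must scale linearly with $\alpha$ in order for group privacy to pick up the $1/\alpha$ factor rather than merely $1/\eps$. Verifying Lipschitzness and smoothness of $f, g$ on the constructed instances is routine since both depend affinely on $\bar{\bx}$; the overall construction is an adaptation of the pure-DP ERM lower bound of~\cite{BassilyST14} specialized to the $(f, g)$ pair in the statement.
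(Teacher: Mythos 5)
Your proposal is correct, but it takes a different route from the paper. The paper's proof is a black-box reduction to the pure-DP lower bound for releasing 1-way marginals (Part 1 of Lemma 5.1 in \citep{BassilyST14}): it uses the identity $f(\ell(c,D))-f(\ell(c^*,D))=\frac{\norm{\sum_i\bx_i}}{2n}\norm{c-c^*}^2$ (Lemma~\ref{lem:reducetomarginals}) together with a post-processing argument --- if $\norm{\hat{c}-c^*}=o(1)$ then rescaling $\hat{c}$ would estimate the empirical mean too accurately, contradicting the marginals bound --- to conclude $\norm{\hat{c}-c^*}=\Omega(1)$ and hence excess risk $\Omega(K/(\eps n))$ on the hard dataset with $\norm{\sum_i\bx_i}=\Theta(\min(n,K/\eps))$. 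You instead inline the packing-plus-group-privacy argument that underlies that marginals lemma, specialized to the $(f,g)$ pair at hand; your observation that both CGOO conditions collapse to $\brackets{\hat{c},\bar{\bx}}\geq\norm{\bar{\bx}}-\alpha$ is the same geometric reduction the paper performs, and your calibration $m=\Theta(\alpha n)$ makes transparent exactly where the $1/\alpha$ factor enters, which the paper's black-box invocation hides. The trade-off is that the paper's route is shorter and inherits the constants from \citep{BassilyST14}, while yours is self-contained. Two small points to tidy up: the paper defines $B^K_2=\{\bx:\norm{\bx}_2=1\}$ (the sphere), so the zero vector is not an admissible data point --- pad $D_j$ with $(n-m)/2$ antipodal pairs $\{\bu,-\bu\}$ of unit vectors instead, which preserves $\bar{\bx}_j=(m/n)v_j$ --- and your choice $m=c_0\alpha n\leq n$ implicitly restricts to the regime $\alpha\leq 1/c_0$, which is the only regime where the claimed bound is non-trivial anyway.
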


\begin{proof}
The major idea in the proof is to reduce to the problem of optimizing 1-way marginals (a standard
method for lower bounding the accuracy of differentially private mechanisms).

We have defined $f$ as $f(\ell(c, D)) = -\frac{1}{n}\sum_{i=1}^n\brackets{c, \bx_i}$ which has minimum
$c^* = \frac{\sum_{i=1}^n\bx_i}{\norm{\sum_{i=1}^n\bx_i}}$ by Lemma~\ref{lem:minlower}.
We defined $g$ as $g(\ell(c, D)) = f(\ell(c, D)) + \frac{1}{n}\norm{\sum_{i=1}^n\bx_i} = f(\ell(c, D)) - f(\ell(c^*, D))$
which has minimum $c^*$ so that
the constraint $g(\ell(c^*, D)) \leq 0$ is satisfied.

Now by Lemma~\ref{lem:reducetomarginals}, we have that
$f(\ell(c, D)) - f(\ell(c^*, D)) = \frac{\norm{\sum_{i=1}^n\bx_i}}{2n}\norm{c - c^*}^2$. Now we invoke
Lemma~\ref{lem:marginalspure}.
If $\hat{c}$ is the output of any $\eps$-differentially private mechanism $\calM$ then we must have
that $\norm{c-c^*} = \Omega(1)$. Suppose not. Then that would imply that we can construct a new mechanism
$\calM'$ that outputs $\hat{c}\cdot\frac{\norm{\sum_{i=1}^n\bx_i}}{n}$ which would contradict Lemma~\ref{lem:marginalspure}.
As a result, $\norm{c-c^*} = \Omega(1)$ so that
$f(\ell(\hat{c}, D)) - f(\ell(c^*, D)) = \Omega(\frac{K}{\eps n})$ for the output $\hat{c}$ of any $\eps$
differentially private mechanism.

\end{proof}

\subsection{$(\eps, \delta)$ Lower Bound}

\begin{theorem}

Let $n, K\in\mathbb{N}, \eps > 0, \alpha\in(0, 1],$ and $\delta = o(\frac{1}{n})$. For every $(\eps, \delta)$-differentially private algorithm $\calM$
that produces a decision $\hat{c}\in\calC_P$ such
that
$$
f(\ell(\hat{c}, D)) \leq \min_{c\in\calC_P:g(\ell(c, D))\leq 0}f(\ell(c, D)) + \alpha,
\quad\quad g(\ell(\hat{c}, D)) \leq \alpha,
$$
there is a dataset $D = \{\bx_1, \ldots, \bx_n\}\subseteq B^K_2$ such that, with probability at least 1/3,
we must have $\alpha \geq \Omega\left(\frac{\sqrt{K}}{\eps n}\right)$
(or equivalently, $n \geq \Omega\left(\frac{\sqrt{K}}{\eps\alpha}\right)$)
where $f, g$ are Lipschitz, smooth functions defined as follows:
$$
f(\ell(c, D)) = -\frac{1}{n}\sum_{i=1}^n\brackets{c, \bx_i},\quad\quad
g(\ell(c, D)) = f(\ell(c, D)) + \frac{1}{n}\norm{\sum_{i=1}^n\bx_i}.
$$
for all $c\in B^K_2$.

\label{thm:lowerapprox}
\end{theorem}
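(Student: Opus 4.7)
The plan is to mirror the proof of Theorem~\ref{thm:lowerpure} almost verbatim, using identical constructions of $f$ and $g$, but swapping in the stronger approximate-DP lower bound for releasing 1-way marginals. First I would define $f(\ell(c,D)) = -\frac{1}{n}\sum_{i=1}^n\brackets{c, \bx_i}$ and $g(\ell(c,D)) = f(\ell(c,D)) + \frac{1}{n}\norm{\sum_{i=1}^n \bx_i}$ on $\calC_P = B^K_2$, so that the unconstrained minimizer is $c^* = \sum_i\bx_i/\norm{\sum_i\bx_i}$ (by Lemma~\ref{lem:minlower}) and $g(\ell(c^*,D)) = 0$, making the constraint feasible and tight. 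Then by Lemma~\ref{lem:reducetomarginals}, the excess risk of any candidate $\hat{c}\in B^K_2$ equals
$$f(\ell(\hat{c},D)) - f(\ell(c^*,D)) = \frac{\norm{\sum_{i=1}^n\bx_i}}{2n}\cdot\norm{\hat{c}-c^*}^2,$$
so it suffices to lower-bound $\norm{\hat{c}-c^*}$ for any $(\eps,\delta)$-differentially private mechanism.

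Next I would invoke the $(\eps,\delta)$-analog of Lemma~\ref{lem:marginalspure} (in the spirit of Bun--Ullman--Vadhan / Steinke--Ullman fingerprinting lower bounds), which asserts that when $\delta = o(1/n)$, any $(\eps,\delta)$-differentially private mechanism for releasing the mean $\frac{1}{n}\sum_i\bx_i$ of a worst-case dataset $\{\bx_1,\ldots,\bx_n\}\subseteq B^K_2$ must incur $\ell_2$-error $\Omega(\sqrt{K}/(\eps n))$ with constant probability. The reduction is the same contrapositive move used in Theorem~\ref{thm:lowerpure}: if some $(\eps,\delta)$-DP algorithm $\calM$ for our $\cgoo$ instance produced $\hat{c}$ with $\norm{\hat{c}-c^*} = o(1)$, then post-processing via the map $\hat{c} \mapsto \hat{c}\cdot\norm{\sum_i\bx_i}/n$ would yield a private estimator of the mean that beats the $\sqrt{K}/(\eps n)$ lower bound, a contradiction. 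Hence $\norm{\hat{c}-c^*} = \Omega(1)$, which when substituted into the displayed identity gives excess risk $\Omega(\sqrt{K}/(\eps n))$ on both $f$ and $g$ (since $g(\ell(\hat{c},D)) = f(\ell(\hat{c},D)) - f(\ell(c^*,D))$ here).

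The main obstacle is establishing the marginals lower bound itself: the $\sqrt{K}$ rate (as opposed to the $K$ rate for pure DP, which follows from a packing/volume argument) genuinely requires fingerprinting-code machinery, and the hypothesis $\delta = o(1/n)$ in the theorem statement is exactly the regime those bounds need. I would therefore quote the approximate-DP marginals lower bound as a black box rather than re-derive it. A secondary point of care is choosing the hard distribution so that $\norm{\sum_i\bx_i}/n = \Theta(1)$ (e.g.\ the same bit-vector-style instances used in the fingerprinting lower bound), which ensures the rescaling in the reduction loses only a constant factor and the $\Omega(\sqrt{K}/(\eps n))$ rate propagates cleanly from marginals back to $\cgoo$ excess risk, yielding the equivalent sample-complexity statement $n \geq \Omega(\sqrt{K}/(\eps\alpha))$.
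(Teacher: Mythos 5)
Your proposal is correct and follows essentially the same route as the paper: the paper's proof likewise reuses the construction and the identity $f(\ell(\hat{c},D)) - f(\ell(c^*,D)) = \frac{\norm{\sum_i\bx_i}}{2n}\norm{\hat{c}-c^*}^2$ from Theorem~\ref{thm:lowerpure}, and simply swaps Lemma~\ref{lem:marginalspure} for the approximate-DP marginals lower bound (Lemma~\ref{lem:marginalsapprox}, i.e.\ the fingerprinting-based Part~2 of Lemma~5.1 in \citep{BassilyST14}), with the identical post-processing contradiction. One small correction: the hard instances should have $\norm{\sum_i\bx_i} = \Theta(M)$ with $M = \Theta(\min(n, \sqrt{K}/\eps))$, so the prefactor $\norm{\sum_i\bx_i}/(2n)$ is $\Theta(\sqrt{K}/(\eps n))$ rather than $\Theta(1)$ --- this is what yields the stated rate once combined with $\norm{\hat{c}-c^*} = \Omega(1)$.
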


\begin{proof}

We follow the steps of the proof for Theorem~\ref{thm:lowerpure} but invoke the lower bound for
1-way marginals in the approximate differential privacy case (and not the pure case).

Again, the way we have defined $f$, by Lemma~\ref{lem:reducetomarginals}, we have that
$f(\ell(c, D))) - f(\ell(c^*, D)) = \frac{\norm{\sum_{i=1}^n\bx_i}}{2n}\norm{c - c^*}^2$. Now we invoke
Lemma~\ref{lem:marginalsapprox}.

If $\hat{c}$ is the output of any $(\eps, \delta)$-differentially
private mechanism $\calM$ then we must have
that $\norm{c-c^*} = \Omega(1)$. Suppose not. Then that would imply that we can construct a new mechanism
$\calM'$ that outputs $\hat{c}\cdot\frac{\norm{\sum_{i=1}^n\bx_i}}{n}$ which would contradict Lemma~\ref{lem:marginalsapprox}.
As a result, $\norm{c-c^*} = \Omega(1)$ so that
$f(\ell(\hat{c}, D)) - f(\ell(c^*, D)) = \Omega(\frac{\sqrt{K}}{\eps n})$ for the output $\hat{c}$ of any
$(\eps, \delta)$-differentially private mechanism.

\end{proof}

\subsection{Helper Lemmas}

\begin{lemma}
Let $c^* = \argmin_{c:\norm{c}\geq 1}-\frac{1}{n}\sum_{i=1}^n\brackets{c, \bx_i}$ where
$\bx_i\in B^K_2$ for all $i\in[n]$, then
$c^* = \frac{\sum_{i=1}^n\bx_i}{\norm{\sum_{i=1}^n\bx_i}}$.
\label{lem:minlower}
\end{lemma}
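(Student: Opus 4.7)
The plan is to rewrite the objective as a single inner product and then apply Cauchy--Schwarz. Noting that $-\frac{1}{n}\sum_{i=1}^n\brackets{c, \bx_i} = -\brackets{c, \bar{\bx}}$ where $\bar{\bx} = \frac{1}{n}\sum_{i=1}^n \bx_i$, minimizing over $c$ in the $\ell_2$ unit ball (I read the constraint $\norm{c}\geq 1$ in the statement as a typo for $\norm{c}\leq 1$; otherwise the objective is unbounded below whenever $\bar{\bx}\neq 0$) reduces to maximizing $\brackets{c, \bar{\bx}}$ subject to $\norm{c} \leq 1$.

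By Cauchy--Schwarz, $\brackets{c, \bar{\bx}} \leq \norm{c}\cdot\norm{\bar{\bx}} \leq \norm{\bar{\bx}}$, with equality in Cauchy--Schwarz iff $c$ is a nonnegative scalar multiple of $\bar{\bx}$, and equality in the second inequality iff $\norm{c} = 1$ (assuming $\bar{\bx}\neq 0$). The unique maximizer is therefore $c = \bar{\bx}/\norm{\bar{\bx}}$, which equals $\sum_{i=1}^n \bx_i/\norm{\sum_{i=1}^n \bx_i}$ after cancelling the $1/n$ factors in numerator and denominator. Plugging this $c$ back confirms the objective value is $-\norm{\bar{\bx}} = -\frac{1}{n}\norm{\sum_{i=1}^n \bx_i}$, matching the constant appearing in the definition of $g$ in Theorems~\ref{thm:lowerpure} and~\ref{thm:lowerapprox}.

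There is no real obstacle here; the only subtlety is the degenerate case $\sum_{i=1}^n \bx_i = 0$, where the minimum value is $0$ and every feasible $c$ is optimal, so the expression $\sum_{i=1}^n\bx_i/\norm{\sum_{i=1}^n\bx_i}$ is vacuous. In the lower bound proofs this case does not arise because the hard datasets produced by the reduction to $1$-way marginals have a nonzero mean by construction, so we may assume $\sum_{i=1}^n \bx_i \neq 0$ throughout.
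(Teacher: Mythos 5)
Your proof is correct and takes essentially the same route as the paper's: both collapse the objective to a single inner product with $\sum_{i=1}^n\bx_i$ and apply Cauchy--Schwarz, with equality attained at the normalized sum. Your side remarks --- that the constraint $\norm{c}\geq 1$ must be a typo (the paper's own proof quietly restricts to $c\in B^K_2$, its unit sphere) and that the degenerate case $\sum_{i=1}^n\bx_i=0$ needs to be excluded --- are accurate and not addressed in the paper, but they do not alter the argument.
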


\begin{proof}
Note that for any $c\in B^K_2$ we have $|\brackets{c, \sum_{i=1}^n \bx_i}|\leq \norm{c}\norm{\sum_{i=1}^n \bx_i} = \norm{\sum_{i=1}^n \bx_i}$
by Cauchy-Schwarz and this is tight when $c = \frac{\sum_{i=1}^n\bx_i}{\norm{\sum_{i=1}^n\bx_i}}$ or
$c = -\frac{\sum_{i=1}^n\bx_i}{\norm{\sum_{i=1}^n\bx_i}}$. As a result, the minimum of $-\frac{1}{n}\sum_{i=1}^n\brackets{c, \bx_i}$ is attained at $c = \frac{\sum_{i=1}^n\bx_i}{\norm{\sum_{i=1}^n\bx_i}}$.

\end{proof}

\begin{lemma}
Let $f(\ell(c, D)) = -\frac{1}{n}\sum_{i=1}^n\brackets{c, \bx_i}$ where $c, \bx_i\in B^K_2$ for all $i\in[n]$, then
$$f(\ell(c, D)) - f(\ell(c^*, D)) = \frac{\norm{\sum_{i=1}^n\bx_i}}{2n}\norm{c - c^*}^2$$ for any $c\in B^K_2$ and
$c^* = \frac{\sum_{i=1}^n\bx_i}{\norm{\sum_{i=1}^n\bx_i}}$.
\label{lem:reducetomarginals}
\end{lemma}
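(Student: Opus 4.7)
The plan is a direct computation that just unwinds the two sides and checks they agree, using the key fact that $B^K_2$ is the unit sphere (so $\norm{c}=\norm{c^*}=1$).

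First I would introduce the shorthand $S=\sum_{i=1}^n\bx_i$ and $s=\norm{S}$, so that $c^*=S/s$ has $\norm{c^*}=1$. Then I would evaluate the right reference value $f(\ell(c^*,D))=-\frac{1}{n}\brackets{c^*,S}=-\frac{\brackets{S,S}}{ns}=-\frac{s}{n}$, which immediately gives the ``linear'' form
\[
f(\ell(c,D))-f(\ell(c^*,D))=\frac{1}{n}\bigl(s-\brackets{c,S}\bigr).
\]

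Next I would expand the claimed ``quadratic'' form: since $\norm{c}=\norm{c^*}=1$,
\[
\norm{c-c^*}^2=\norm{c}^2-2\brackets{c,c^*}+\norm{c^*}^2=2-\frac{2\brackets{c,S}}{s}.
\]
Multiplying by $\frac{s}{2n}$ yields $\frac{s}{2n}\norm{c-c^*}^2=\frac{s-\brackets{c,S}}{n}$, which matches the linear form above, giving the identity. This essentially says that the function $f$, restricted to the unit sphere, is a quadratic in $c-c^*$ with curvature $s/n$ in the relevant direction.

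There is no real obstacle here; the only place one has to be careful is remembering that $B^K_2$ denotes the unit \emph{sphere} (as fixed in the paper's notation) rather than the unit ball, so that $\norm{c}^2+\norm{c^*}^2=2$ rather than some quantity $\le 2$. Without this, the cross-term $\brackets{c,S}$ would not line up and the identity would have to be replaced by an inequality. I would briefly note this dependence on $\norm{c}=1$ in the write-up so that the subsequent application of the marginals lower bounds (Lemmas~\ref{lem:marginalspure} and~\ref{lem:marginalsapprox}) via $\norm{c-c^*}=\Omega(1)$ is unambiguous.
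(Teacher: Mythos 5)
Your proof is correct and follows essentially the same route as the paper's: both evaluate $f(\ell(c^*,D))=-\norm{\sum_i\bx_i}/n$, reduce the difference to $\frac{1}{n}(\norm{\sum_i\bx_i}-\brackets{c,\sum_i\bx_i})$, and match it to the quadratic form via $\norm{c-c^*}^2=2-2\brackets{c,c^*}$ using $\norm{c}=\norm{c^*}=1$. Your explicit remark that the identity hinges on $B^K_2$ being the unit sphere (as the paper defines it) is a worthwhile clarification but not a departure in method.
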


\begin{proof}
We have that
\begin{align}
    f(\ell(c, D)) - f(\ell(c^*, D)) &= \frac{1}{n}\sum_{i=1}^n\left(\brackets{c^*, \bx_i} - \brackets{c, \bx_i}\right) \\
    &= \frac{1}{n}\left(\brackets{c^*, \sum_{i=1}^n \bx_i} - \brackets{c, \sum_{i=1}^n \bx_i}\right)  \\
    &= \frac{1}{n}\left(\norm{\sum_{i=1}^n \bx_i} - \brackets{c, \sum_{i=1}^n \bx_i}\right) \\
    &= \frac{\norm{\sum_{i=1}^n \bx_i}}{n}\left(1-\brackets{c, c^*}\right)\\
    &= \frac{\norm{\sum_{i=1}^n \bx_i}}{2n}\norm{c-c^*}^2
\end{align}
where we have used that
$\norm{c-c^*}^2 = \norm{c}^2 + \norm{c^*}^2 - 2\brackets{c, c^*} = 2 - 2\brackets{c, c^*}$ 
and 
$c^* = \frac{\sum_{i=1}^n\bx_i}{\norm{\sum_{i=1}^n\bx_i}}$. 
\end{proof}

We now state lower bound lemmas for 1-way marginals. 
Lemma~\ref{lem:marginalspure} shows the lower bound for 1-way marginals for $\eps$-differentially
private algorithms and Lemma~\ref{lem:marginalsapprox} is for $(\eps, \delta)$-differentially
private algorithms.

\begin{lemma}[Part 1 of Lemma 5.1 in~\citep{BassilyST14}]
Let $n, K\in\mathbb{N}$ and $\eps > 0$. There exists a number 
$M = \Omega(\min(n, \frac{K}{\eps}))$ such that
for every $\eps$-differentially private algorithm $\calM$ there is a dataset
$D = \{\bx_1, \ldots, \bx_n\}\subseteq B^K_2$ with 
$\norm{\sum_{i=1}^n\bx_i}_2\in[M-1, M+1]$ such that, with probability at least 1/2 (over the randomness
of the algorithm), we have
$$
\norm{\calM(D) - q(D)}_2 = \Omega(\min(1, \frac{K}{\eps n})),
$$
where $q(D) = \frac{1}{n}\sum_{i=1}^n\bx_i$.
\label{lem:marginalspure}
\end{lemma}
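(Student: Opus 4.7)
The plan is to use a standard packing plus $M$-fold group privacy argument, with the parameter $M = \Theta(\min(n, K/\eps))$ chosen to balance the two effects. The idea is to exhibit a family of datasets $\{D_v\}_{v\in V}$ whose empirical means $q(D_v)$ are pairwise $2\alpha$-separated in $\ell_2$ yet whose datasets differ in at most $M$ rows, so that group privacy forces any $\eps$-DP mechanism $\calM$ to incur error $\Omega(\alpha)$ on at least one of them. Solving the resulting packing--privacy inequality for $\alpha$ yields the claimed bound $\Omega(\min(1, K/(\eps n)))$.

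\textbf{Construction.} Set $M = \lfloor \min(n, K/\eps) \rfloor$. Let $V \subseteq \reals^K$ be a maximal $2\alpha$-packing (in $\ell_2$) of the sphere of radius $M/n$; a standard volume argument on $S^{K-1}$ gives $|V| \geq (cM/(n\alpha))^{K-1}$ for an absolute constant $c > 0$, assuming $\alpha$ is below the sphere radius up to constants. For each $v \in V$, define $y_v := (n/M)\,v$, which has $\|y_v\|_2 = 1 \in B^K_2$, and let $D_v$ be the dataset consisting of $M$ copies of $y_v$ and $n-M$ copies of the zero vector. Then $q(D_v) = v$ exactly and $\norm{\sum_i \bx_i}_2 = M$, so $D_v$ satisfies the required $[M-1, M+1]$ condition. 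Moreover, for any two $v, v' \in V$, the datasets $D_v$ and $D_{v'}$ agree on the $n-M$ zero rows, hence differ in at most $M$ rows.

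\textbf{Key inequality and case analysis.} By $M$-fold group privacy, an $\eps$-DP mechanism $\calM$ satisfies $\pr[\calM(D_v)\in S] \leq e^{\eps M}\pr[\calM(D_{v'})\in S]$ for any measurable $S$. Suppose for contradiction that $\pr[\norm{\calM(D_v) - v}_2 < \alpha] > 1/2$ for every $v \in V$. Since the balls $B(v,\alpha)$ are pairwise disjoint by construction, for any fixed $v_0 \in V$,
\begin{equation*}
1 \;\geq\; \sum_{v\in V}\pr[\calM(D_{v_0})\in B(v,\alpha)] \;\geq\; e^{-\eps M}\sum_{v\in V}\pr[\calM(D_v)\in B(v,\alpha)] \;\geq\; \tfrac{|V|}{2}\,e^{-\eps M},
\end{equation*}
which forces $|V|\leq 2 e^{\eps M}$. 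Substituting the packing lower bound gives $(cM/(n\alpha))^{K-1} \leq 2 e^{\eps M}$, i.e., $(K-1)\log(cM/(n\alpha)) \leq \eps M + O(1)$. In the regime $K/\eps \leq n$ we have $\eps M \leq K$, so $\log(cM/(n\alpha)) = O(1)$, which rearranges to $\alpha \geq \Omega(M/n) = \Omega(K/(\eps n))$. In the regime $K/\eps > n$ we set $M = n$, so $M/n = 1$ and the same inequality yields $(c/\alpha)^{K-1} \leq 2 e^{\eps n} \leq 2 e^{K}$, hence $\alpha = \Omega(1)$. In both cases the contradiction shows that some $v \in V$ has $\pr[\norm{\calM(D_v) - v}_2 \geq \alpha] \geq 1/2$ with $\alpha = \Omega(\min(1, K/(\eps n)))$.

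\textbf{Main obstacle.} The delicate point is obtaining the packing lower bound $|V|\geq (cM/(n\alpha))^{K-1}$ on the sphere of radius $M/n$ with the right dependence on $K$ (so that the exponent on the left matches $\eps M$ on the right after taking logarithms, giving a clean linear, rather than exponentially decaying, bound on $\alpha$). The standard volume-ratio argument on $S^{K-1}$ handles this provided $\alpha$ is at most a constant fraction of the sphere's radius $M/n$; in the complementary regime $\alpha \gtrsim M/n$ the conclusion $\alpha = \Omega(M/n) = \Omega(\min(1, K/(\eps n)))$ already holds trivially, so one may assume the packing exists without loss of generality. Everything else reduces to routine group privacy.
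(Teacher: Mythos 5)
The paper never proves this lemma: it is imported verbatim as Part~1 of Lemma~5.1 of \citep{BassilyST14}, so there is no in-paper argument to compare against. Your packing-plus-group-privacy derivation is the standard (and essentially the original) proof of the pure-DP lower bound for releasing 1-way marginals, and the details check out: every $D_v$ has $\norm{\sum_i \bx_i}_2 = M$ exactly (hence lies in the window $[M-1,M+1]$), any two of them differ in at most $M$ rows, the disjoint-balls summation correctly yields $|V| \le 2e^{\eps M}$, and the two regimes of $M = \lfloor\min(n, K/\eps)\rfloor$ give $\alpha = \Omega(\min(1, K/(\eps n)))$ after taking logarithms. Two cosmetic blemishes are worth noting: this paper's $B^K_2$ is literally defined as the unit \emph{sphere} ($\norm{\bx}_2 = 1$), so the $n-M$ zero rows should be replaced by, say, $(n-M)/2$ cancelling pairs $\{u, -u\}$ of unit vectors (which leaves $q(D_v)$ and the row-difference count unchanged); and the exponent $K-1$ degenerates at $K=1$, a case that must be absorbed into the constants or handled by a two-point argument. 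Neither affects the substance of the proof.
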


\begin{lemma}[Part 2 of Lemma 5.1 in~\citep{BassilyST14}]
Let $n, K\in\mathbb{N}$, $\eps > 0$, and $\delta = o(\frac{1}{n})$. There is a number
$M = \Omega(\min(n, \frac{\sqrt{K}}{\eps}))$ such that for every
$(\eps, \delta)$-differentially private algorithm $\calM$, there is a dataset
$D = \{\bx_1, \ldots, \bx_n\}\subseteq B^K_2$ with
$\norm{\sum_{i=1}^n\bx_i}_2\in[M-1, M+1]$ such that, with probability at least 1/3 (over the randomness
of the algorithm), we have
$$
\norm{\calM(D) - q(D)}_2 = \Omega(\min(1, \frac{\sqrt{K}}{\eps n})),
$$
where $q(D) = \frac{1}{n}\sum_{i=1}^n\bx_i$.
\label{lem:marginalsapprox}
\end{lemma}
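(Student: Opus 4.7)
The plan is to reduce this to the classical lower bound for privately estimating one-way marginals and invoke the $(\eps, \delta)$-differentially private fingerprinting lower bound of Bun--Ullman--Vadhan. The characteristic $\sqrt{K}/\eps$ (rather than $K/\eps$) dependence is precisely the signature of fingerprinting-based lower bounds for approximate DP, so it is natural to look for a reduction to that machinery.

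First I would construct the hard family of datasets. Embed the Boolean hypercube into $B^K_2$ by taking $\bx_i = \frac{1}{\sqrt{K}}\bz_i$ with $\bz_i\in\{-1,+1\}^K$; then each $\bx_i$ lies exactly on the unit sphere, and $q(D) = \frac{1}{n\sqrt{K}}\sum_{i}\bz_i$. Choose a prior on the biases $\bp\in[-1,1]^K$ (for instance, each coordinate drawn independently from a re-scaled Beta distribution of the Bun--Ullman--Vadhan flavor) and draw each $\bz_i$ coordinatewise with $\mathbb{E}[z_{i,j}] = p_j$. This places the entire construction inside $B_2^K$ and guarantees that, with constant probability, $\|\sum_i\bx_i\|_2$ concentrates in the window $[M-1,M+1]$ for the prescribed $M=\Omega(\min(n,\sqrt{K}/\eps))$, allowing us to condition on this event at the cost of a constant factor in the success probability.

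Next I would run a tracing / fingerprinting argument. Suppose for contradiction that the $(\eps,\delta)$-DP mechanism $\calM$ outputs $\hat{q}$ with $\|\hat{q}-q(D)\|_2 = o(\sqrt{K}/(\eps n))$ with probability greater than $2/3$. Define a tracing test score $S_i(\hat{q},\bz_i,\bp) = \langle \hat{q}-\bp/\sqrt{K},\, \bz_i - \bp\rangle$ (up to the appropriate $1/\sqrt{K}$ normalization). One shows, using only the accuracy assumption and independence of the Boolean coordinates, that $\sum_{i\in D}\mathbb{E}[S_i]$ is at least $\Omega(\sqrt{K})$, while for a fresh $\bz^\star$ drawn independently from the same product distribution $\mathbb{E}[S_\star]=0$ with variance $O(1)$. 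Group privacy / differential privacy then forces the gap between $\mathbb{E}[S_i \mid i\in D]$ and $\mathbb{E}[S_i \mid i\notin D]$ to be $O(\eps + \delta)$ per row, so summing and invoking a Markov-style argument produces the contradiction $n \gtrsim \sqrt{K}/\eps$ (equivalently, the excess error must be $\Omega(\sqrt{K}/(\eps n))$).

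The main obstacle will be the tracing analysis itself: one must carefully choose the prior on $\bp$ so that the score $S_i$ has the right expectation gap and variance simultaneously across all $K$ coordinates, and then convert the resulting expected tracing success into a high-probability guarantee (which is what yields the $1/3$ constant). This is exactly the technical core of the fingerprinting code construction; having set up the reduction from the $B^K_2$ problem to Boolean marginal estimation, I would invoke the packaged fingerprinting lemma (the approximate-DP half of Lemma~5.1 in \citep{BassilyST14}) directly rather than redo its internals. A minor secondary issue is ensuring that the $[M-1,M+1]$ norm constraint and the prior are compatible; this is handled by a standard concentration argument on $\|\sum_i\bx_i\|_2$ and by taking the quantitatively worst dataset within the support, which is exactly the device used to go from an average-case prior lower bound to the worst-case existence statement in the lemma.
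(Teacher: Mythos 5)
This lemma is imported verbatim from Part~2 of Lemma~5.1 of \citep{BassilyST14} and the paper offers no proof of its own, so the relevant comparison is between your sketch and that citation: you correctly identify the fingerprinting/tracing machinery (Boolean-hypercube embedding $\bx_i=\bz_i/\sqrt{K}$, a re-weighted Beta prior on the biases, the score $\langle \hat{q}-\bp/\sqrt{K},\,\bz_i-\bp\rangle$, and the per-row DP bound summed over $n$) that underlies the approximate-DP half of that lemma, and you ultimately invoke the same packaged result rather than re-deriving its internals. This is essentially the same approach as the paper, with the only caveat that your argument is a sketch of the cited proof rather than an independent one, since it closes by appealing to the very lemma being proved.
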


\section{Bounded Divergence Linear Optimizers}
\label{sec:divergence}

We introduce a class of bounded divergence linear optimizers.~\cite{CuffY16} explore various
definitions of differential privacy through the lens of mutual information constraints.
In a similar
vein, we introduce some information-theoretic definitions of linear optimizers based on
the divergence between two random variables.

These oracles can be used in multi-objective applications that do not necessarily apply to
algorithmic fairness.
For example,~\cite{BallBDDRVV20} show how to lift ``hardness'' through bounded mutual information
reductions (i.e., potentially lossy reductions). In some cases, these reductions might need to
optimize more than one loss function or constraint (e.g., optimizing both language or code length
and the regularity of code words).

It is known that $\eps$-differential privacy can be cast as a max divergence
bound. Similar to how min-entropy is a worst-case analog of Shannon Entropy, the
max divergence is a worst-case analog of KL-divergence~\citep{Vadhan17}.
In fact, 
it turns out that most relaxations of differential privacy can be cast as a bound on an
information-theoretic divergence. We use this insight to provide the following
definitions for bounded divergence linear optimizer oracles 
($\lopt_{\eps, 0}, \lopt_{\eps, \delta}, \rlopt_{\eps, \phi}$).
The randomness is over the coin flips of these oracles.

%\subsection{$\eps$-Differential Privacy Linear Optimizers}

\begin{definition}[$\lopt_{\eps, 0}$]
Let $\calW\subseteq\reals^K$ (or $\calW\subseteq\reals^K_{\geq 0}$) be a set of weight
vectors. Then for any weight vector $\bw\in\calW$ and for
all $\bz, \bz'\in\calZ^n$ that differ in one row:
\begin{enumerate}
\item If $\tilde{c} = \lopt_{\eps, 0}(\calC_P, \ell, \bw, \bz, \tau)$, then
$\bw\cdot \ell(\tilde{c}, \bz) \leq \min_{c\in\calC_P}\bw\cdot\ell(c, \bz) + \tau\norm{\bw}$,
\item $D_\infty(\lopt_{\eps, 0}(\calC_P, \ell, \bw, \bz, \tau)\;\;\Vert\;\;\lopt_{\eps, 0}(\calC_P, \ell, \bw, \bz', \tau)) \leq \eps$,
\item $D_\infty(\lopt_{\eps, 0}(\calC_P, \ell, \bw, \bz', \tau)\;\;\Vert\;\;\lopt_{\eps, 0}(\calC_P, \ell, \bw, \bz, \tau)) \leq \eps$,
\end{enumerate}
where $D_\infty(Y\Vert Z) = \max_{S\subseteq \supp(Y)}\left[\ln
\frac{\pr[Y\in S]}{\pr[Z\in S]}\right]$ is the max divergence
between random variables $Y$ and $Z$ with the same support.
\label{def:purelopt}
\end{definition}

%\subsection{$(\eps, \delta)$-Differential Privacy Linear Optimizers}

\begin{definition}[$\lopt_{\eps, \delta}$]
Let $\calW\subseteq\reals^K$ (or $\calW\subseteq\reals^K_{\geq 0}$) be a set of weight
vectors. 
Then for any weight vector $\bw\in\calW$ and for
all $\bz, \bz'\in\calZ^n$ that differ in one row:
\begin{enumerate}
\item If $\tilde{c} = \lopt_{\eps, \delta}(\calC_P, \ell, \bw, \bz, \tau)$, then
$\bw\cdot \ell(\tilde{c}, \bz) \leq \min_{c\in\calC_P}\bw\cdot\ell(c, \bz) + \tau\norm{\bw}$,
\item $D_\infty^\delta(\lopt_{\eps, \delta}(\calC_P, \ell, \bw, \bz, \tau)\;\;\Vert\;\;\lopt_{\eps, \delta}(\calC_P, \ell, \bw, \bz', \tau)) \leq \eps$,
\item $D_\infty^\delta(\lopt_{\eps, \delta}(\calC_P, \ell, \bw, \bz', \tau)\;\;\Vert\;\;\lopt_{\eps, \delta}(\calC_P, \ell, \bw, \bz, \tau)) \leq \eps$,
\end{enumerate}
where $D_\infty^\delta(Y\Vert Z) = \max_{S\subseteq \supp(Y):\pr[Y\in S]\geq \delta}\left[\ln
\frac{\pr[Y\in S]-\delta}{\pr[Z\in S]}\right]$ is the 
smoothed max divergence
between random variables $Y$ and $Z$ with the same support.
\label{def:approxlopt}
\end{definition}

%\subsection{$\rho$-Zero-Concentrated Differential Privacy Linear Optimizers}

\begin{definition}[$\rlopt_{\eps, \phi}$]
Let $\calW\subseteq\reals^K$ (or $\calW\subseteq\reals^K_{\geq 0}$) be a set of weight
vectors. 
Then for any weight vector $\bw\in\calW$ and for
all $\bz, \bz'\in\calZ^n$ that differ in one row:
\begin{enumerate}
\item If $\tilde{c} = \rlopt_{\eps, \phi}(\calC_P, \ell, \bw, \bz, \tau)$, then
$\bw\cdot \ell(\tilde{c}, \bz) \leq \min_{c\in\calC_P}\bw\cdot\ell(c, \bz) + \tau\norm{\bw}$,
\item $D_\phi(\rlopt_{\eps, \phi}(\calC_P, \ell, \bw, \bz, \tau)\;\;\Vert\;\;\rlopt_{\eps, \phi}(\calC_P, \ell, \bw, \bz', \tau)) \leq \eps$,
\item $D_\phi(\rlopt_{\eps, \phi}(\calC_P, \ell, \bw, \bz', \tau)\;\;\Vert\;\;\rlopt_{\eps, \phi}(\calC_P, \ell, \bw, \bz, \tau)) \leq \eps$,
\end{enumerate}
where $D_\phi(Y\Vert Z)$ is the $\phi$-R\'enyi divergence of order $\phi > 1$
between random variables $Y$ and $Z$ defined as
$D_\phi(Y\Vert Z) = \frac{1}{\phi - 1}\ln\E_{x\sim Z}(\frac{Y(x)}{Z(x)})^\phi$.
\label{def:renyilopt}
\end{definition}

Definitions~\ref{def:purelopt} and~\ref{def:approxlopt} are approximate linear optimizers
that satisfy pure and approximate differential privacy respectively.
Definition~\ref{def:renyilopt} is an analog for R\'enyi differential privacy
~\citep{Mironov17}. As $\phi\rightarrow 1$, the R\'enyi divergence is equal to the
Kullback-Leibler divergence (relative entropy) and as $\phi\rightarrow\infty$, 
the R\'enyi divergence is the max-divergence.
\begin{remark}
$(\eps, \delta)$-differential privacy allows for use of advanced
composition and a tighter analyses for the composition of $(\eps, 0)$-differentially private
mechanisms. And the R\'enyi differential privacy, amongst many advantages,
allows for simpler analysis and use of the Gaussian Mechanism.
In this paper, we mainly use $\lopt_{\eps, 0}, \lopt_{\eps, \delta}$ for our results.
We can also extend this framework to handle general
$f$-divergences~\citep{SasonV16}.
\end{remark}

\section{Reductions Approach to Optimization and Learning}
\label{sec:reductions}

The reductions approach in machine learning 
~\citep{LangfordOZ06, BeygelzimerLZ09} has been widely studied 
and applied in different scenarios. 
Applications to ranking, regression,
classification, and importance-weighted classification are
particularly well-known. The crux of the reductions approach
to optimization and learning is to use the machinery -- both
theory and practice -- of solutions to one machine learning 
problem in order to solve another learning problem by 
reducing one problem to another.

A concrete example of the use of the reductions approach is by~\cite{AgarwalBD0W18}. 
The authors present a
systematic approach to reduce the problem of fair classification
to cost-sensitive classification problems. We will first review
applications of the reductions approach to optimization 
and learning and then
explain how to make this approach differentially private through
the linear optimization based algorithm presented in this paper.

Furthermore, we will focus on the problem of empirical
risk minimization where we are given a finite-sized
training sample from an unknown distribution and will optimize
with respect to this finite sample. Generalization guarantees
can be derived based on draws of a large enough sample from
the distribution (or knowledge of the complexity of the hypothesis class to be learned)
and knowledge of proportion of the population
belonging to a specific subgroup.
\footnote{
Which can also be estimated from a large enough sample drawn
from the distribution.
}
We do not focus on generalization in this paper
but rather on the problem of empirical risk minimization.

First, we discuss how the reductions approach can be applied to
optimize convex measures of the \textit{confusion matrix} and then
discuss how it can be applied to a few other definitions from the
algorithmic fairness literature.

\subsection{Convex Measures of Confusion Matrix}

\begin{definition}[Confusion Matrix]
The \textbf{confusion matrix} (sometimes referred to as the \textbf{contingency table})
$C^\mu[h]\in[0, 1]^{L\times L}$ of an
hypothesis $h$ with respect to a distribution $\mu$ over examples 
is
defined as
$$
C^\mu_{pq}[h] = C^\mu_{pq}\circ h = \pr_{x, y\sim\mu}\left[y = p, h(x) = q\right].
$$
We shall sometimes refer to $C^\mu[h]$ as $C[h]$ or $C$. $L$ is the number of possible
labellings that $h(x)$ or $y$ can be for any example $(x, y)\sim\mu$.
\label{def:cmatrix}
\end{definition}

Our algorithms in this paper to solve the problem of
constrained group-objective optimization assume that our functions $f$ and
$g$ are convex functions of the loss vectors. The performance
measures $G$-mean and $H$-mean are both concave functions of
the confusion matrix~\citep{Narasimhan18}. As a result, their negatives are convex.

\begin{example}[$G$-mean]
The \textbf{$G$-mean} performance measure is used to measure the quality
of both multi-class and binary classifiers in settings of 
severe class imbalance. It is defined as
$$
GMean(C) = \left(\prod_{i=1}^L\frac{C_{ii}}{\sum_{j=1}^L C_{ij}}\right)^{1/L},
$$
for some confusion matrix $C = C^\mu[h]$ defined on 
distribution $\mu$
and hypothesis $h$.
\label{ex:gmean}
\end{example}

\begin{example}[$H$-mean]
The \textbf{$H$-mean} performance measure is defined as
$$
HMean(C) = L\left(\sum_{i=1}^L\frac{\sum_{j=1}^L C_{j i}}{C_{i i}}\right)^{-1},
$$
for some confusion matrix $C = C^\mu[h]$ defined on
distribution $\mu$
and hypothesis $h$.
\label{ex:hmean}
\end{example}

Since we do not have access to the true confusion matrix
$C^\mu[h]$ which requires access to the distribution $\mu$ 
itself -- 
not just finite samples -- we must rely on empirical estimates
of $C^D[h]$ as follows:
$$
\hat{C}^D_{p q}[h] = \frac{1}{n}\sum_{i=1}^n\ind\left[y_i = p, h(x_i) = q\right],
$$
where $D\sim\mu^n$ is a finite sample of size $n$ and $h$ is an
hypothesis. We
term $\hat{C}^D$, the empirical confusion matrix.

Note that the empirical confusion matrix $\hat{C}^D[h]\in(0, 1]^{L\times L}$ 
can be written in
terms of constrained group-objective optimization as follows.
For a finite sample $D = \{(x_i, y_i)\}_{i=1}^n$, we define the
loss vector $\ell(h, D)\in(0, 1]^K$ where $K=L^2$ as
$$
\ell_k(h, D) =\frac{1}{n}\sum_{i=1}^n\ind[y_i = p\,\,\wedge\,\,h(x_i) = q],
\quad k = L\cdot (p-1) + q,
$$
for any $k\in [L^2]$ so that
$\hat{C}^D_{p q}[h]$ can be mapped to the specific entry $\ell_k(h, D)$.
In other words, for any $i\in[n]$ and hypothesis $h$, $(x_i, y_i)$ belongs to group $pq\in[L^2]$ iff
$\ind[y_i = p\,\,\wedge\,\,h(x_i) = q]$ is 1.

Note that the entries of
$\hat{C}^D$ are defined in terms of the 0-1 loss which is non-convex and thus hard
to optimize. As such, we could instead use
a ``smoothed'' versions of this loss. For example, the hinge loss is a convex surrogate loss and the
``smoothed'' hinge loss is a convex and smooth loss~\citep{Rennie05}.
Also, the $G$-mean, $H$-mean (as defined above) are concave measures so we can optimize with respect to their
negatives (which are convex).

In Figure~\ref{fig:means}, we show how the G-mean and H-mean
performance measures behave as the number of classes/labellings
$L$ increases. In Figure~\ref{fig:eq}, each entry of the confusion
matrix has the same weight i.e., $C_{ij} = C_{ji}$ for all $i, j\in[L]$.
In Figure~\ref{fig:un}, the weight in each entry decreases 
geometrically. Specifically, from one entry to the next, there is a
multiplicative decrease of a factor of $1/3$. In both cases, we require
that the sum of the entries of the confusion matrix is 1. In both
figures, the H-mean performance measure is larger than the G-mean.
We see from Figure~\ref{fig:eq} that the G-mean decreases faster than
the H-mean when the confusion matrix is balanced. From
Figure~\ref{fig:un}, we see that the H-mean increases faster than
the G-mean decreases when the confusion matrix is severely unbalanced.
\footnote{
These figures are a simple illustration of the behavior of H-mean
and G-mean and are not meant to provide conclusive evidence of
the behavior of these performance measures.
}

Using Corollary~\ref{cor:loptexp}, one could, for example, minimize average error subject to $G$-mean or
$H$-mean constraints.

\begin{figure}
  \begin{subfigure}[b]{0.4\textwidth}
    \includegraphics[width=\textwidth]{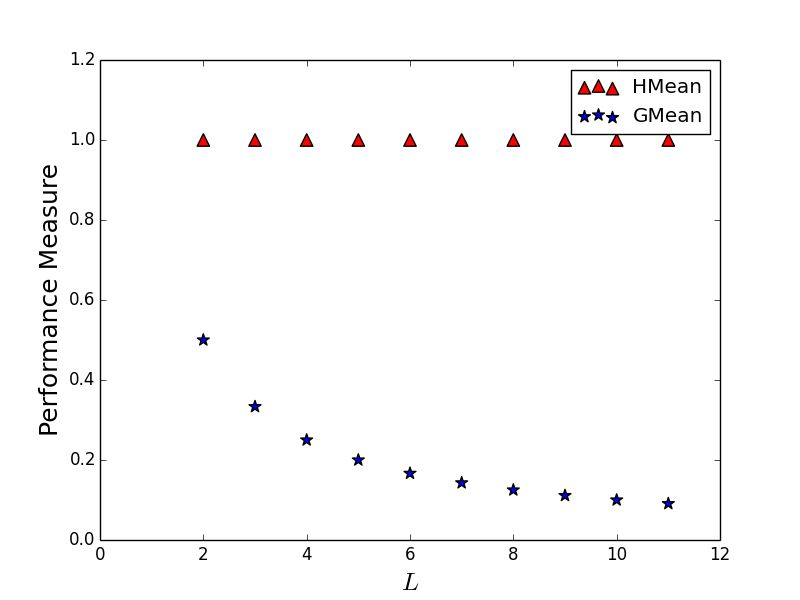}
    \caption{Balanced confusion matrix weights.}\label{fig:eq}
  \end{subfigure}
  \qquad
  \begin{subfigure}[b]{0.4\textwidth}
    \includegraphics[width=\textwidth]{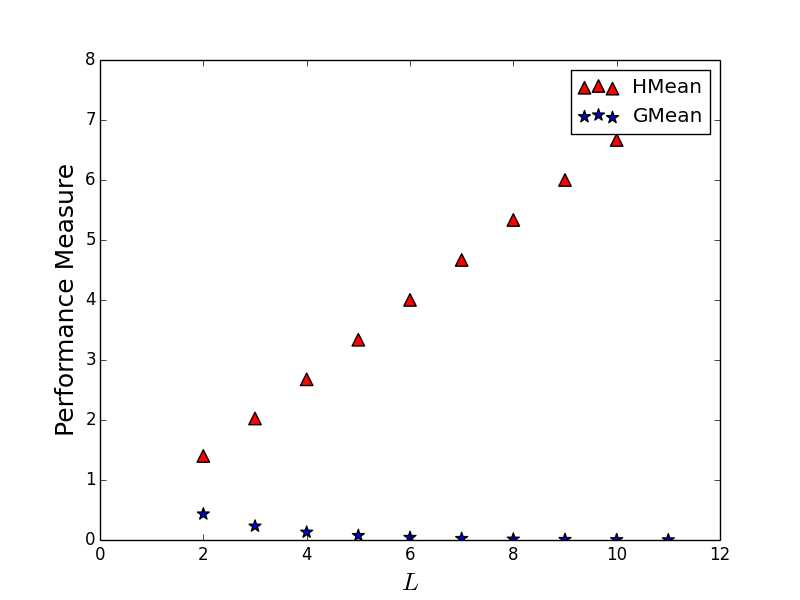}
    \caption{Severely unbalanced confusion matrix weights.}\label{fig:un}
  \end{subfigure}
  \centering
  \caption{Illustration of H-mean and G-mean performance as the number of classes increases.}
  \label{fig:means}
\end{figure}

\subsection{Algorithmic Fairness Definitions}

In this section, we discuss how certain statistical definitions from the
algorithmic fairness literature can be written in terms of
constrained group-objective optimization. The first two -- equalized odds
and demographic parity -- are used for ensuring some notion of
fairness in classification and the third -- Gini index of
inequality -- can be used for income analysis of inequality amongst
subgroups of a population.

For the first two definitions, the setup is as follows:
The goal is to learn an accurate classifier
$h:\calX\rightarrow\{0, 1\}$ from some family of classifiers
(e.g., decision trees, neural networks, or polynomial threshold
functions) while satisfying some definition of statistical fairness.
We assume that we are given training examples
$D = \{(\bx_i, a_i, y_i)\}_{i=1}^n\in(\calX, \calA, \calY)^n$ typically representing $n$
individuals drawn i.i.d. over the joint distribution $(\bx, a, y)\sim \mu$.
For each $i\in[n]$, $\bx_i\in\calX$ is the features of individual $i$,
$a_i\in \calA$ is the protected attribute of the individual (e.g.,
race or gender), $y_i\in\calY$ is the label. Using
constrained group-objective optimization, the chosen hypothesis $h$ need
not have access to nor knowledge of the protected attribute
$A$ during testing or deployment.

\begin{definition}[Equalized Odds~\citep{HardtPNS16}]
A classifier $h$ satisfies \textbf{Equalized Odds} under a distribution
over $(\bx, a, y)$ if $h(X)$, its prediction,
is conditionally independent of the protected attribute $A$ given
the label $Y$. 

In notation, we have that for all $a\in\calA, \hat{y}\in\calY, x\in\calX$
$$
\pr_{(\bx, a, y)\sim\mu}\left[h(\bx) = \hat{y} | A = a, Y = y\right] = \pr\left[h(\bx) = \hat{y} | Y = y\right].
$$

\label{def:eo}
\end{definition}

\begin{definition}[Demographic Parity~\citep{AgarwalBD0W18}]
A classifier $h$ satisfies \textbf{Demographic Parity} under a distribution
over $(\bx, a, y)$ if $h(X)$, its prediction, is
statistically independent of the protected attribute $A$.

In notation, we have that for all $a\in\calA, \hat{y}\in\calY$
$$
\pr_{(\bx, a, y)\sim\mu}\left[h(\bx) = \hat{y} | A = a\right] = \pr\left[h(\bx) = \hat{y}\right].
$$
\label{def:demo}
\end{definition}

For optimization purposes, we often cannot satisfy either
equalized odds or demographic parity exactly so we must instead pursue
relaxations of equalized odds and demographic parity.
For example,~\cite{Jagielski18} pursue
$\gamma$ Equalized Odds which is defined in Definition~\ref{def:alphaeo2}, stated
in terms of false positives and false negatives of a hypothesis $h$.

\begin{definition}[Gini Index of Inequality~\citep{Busa-FeketeSWM17}]

Suppose there are $K$ subgroups in a population of individuals
earning income. The \textbf{Gini Index of Inequality} is given by
$$
I(l) =  \frac{\sum_{i, j}|l_i - l_j|}{2n\sum_i l_i} \in [0, 1],
$$
where $l\in[0, 1]^K$ and $l_i$ could represent the percentile
average income of individuals in subgroup $i$.

The Gini index is not convex but quasi-convex which means that
its level sets are convex. For any given $\theta\in[0, 1]$,
$I(l) \leq \theta$ is equivalent to
$$
\sum_{i, j}|l_i - l_j| - 2n\theta\sum l_i \leq 0,
$$
which is a convex constraint~\citep{AlabiIK18}.
\label{def:gi}
\end{definition}

Definition~\ref{def:gi} makes no distributional assumptions on the loss vector
$l\in[0, 1]^K$.

\cite{AgarwalBD0W18} show how to convert
the empirical risk minimization problem for satisfying
either demographic
parity or equalized odds into the following problem:
\begin{equation}
    \min_{h\in\Delta}\hat{err}(h)\,\,\text{ s.t. }\,\,
    M\hat{\mu}(h) \leq \hat{c}
\label{eq:agarwal}
\end{equation}

where the matrix $M\in\reals^{|\calK|\times|\calJ|}$ and the
vector $\hat{c}\in\reals^{|\calK|}$ specify linear constraints for the problem
and $\hat\mu(h)\in\reals^{|\calJ|}$ is a vector of conditional moments taken
over the the distribution on $(\calX, A, \calY)$.

To convert into the form of constrained group-objective optimization, we
set $f(\ell(h, D)) = \hat{err}(h)$ and
$g(\ell(h, D)) = (M\hat{\mu}(h) - \hat{c})\cdot \textbf{1}$
where $\textbf{1}$ is the all-ones vector or
$g(\ell(h, D)) = \max_{i\in[\calK]}(M\hat{\mu}(h) - \hat{c})_i$.
We leave out details of how to convert the definition of
Demographic Parity and Equalized Odds into Equation~\ref{eq:agarwal} as this is already done in Section 2 (termed ``Problem Formulation'') in
~\citep{AgarwalBD0W18}.
Last, for the Gini index of inequality, we can convert into
constrained group-objective optimization by setting the constraint function
$g$ to $g(l) = \sum_{i, j}|l_i - l_j| - 2n\theta\sum l_i \leq 0$ for some $\theta\in[0, 1]$
and setting $f(l) = -\sum_i l_i$ for all $l\in[0, 1]^K$.

We note here that $K$, the number of groups, is not a constant and could
vary depending on the context, application, and matters of intersectionality
\citep{BuolamwiniG18, Hebert-JohnsonK18}.

\subsubsection{Satisfying Equalized Odds}

Now, we show how to satisfy approximate notions of equalized odds while respecting
differential privacy constraints.

\begin{definition}[$\alpha$-Equalized Odds~\citep{Jagielski18}]
Let $X, A, Y$ be random variables representing the non-sensitive
features, the sensitive attribute, and the label assigned to an individual, respectively.

Given a dataset of examples $D = \{(\bx_i, a_i, y_i)\}_{i=1}^n\in(\calX, \calA, \{0, 1\})^n$ of size
$n$, we say a classifier $c\in\calC_P$ satisfies $\alpha$ \textbf{Equalized Odds} if
\begin{equation}
    \max_{a, a'\in\calA}\{\max(|\hat{FP}_a - \hat{FP}_{a'}|, |\hat{TP}_a - \hat{TP}_{a'}| )\}\leq \alpha
    \label{eq:eqodds2}
\end{equation}

where $\hat{FP}_a, \hat{TP}_a$ are empirical estimates of
$FP_a(c) = \pr_{(x, y, a)}[c(\bx) = 1 | A = a, y = 0]$,
$TP_a(c) = \pr_{(x, y, a)}[c(\bx) = 1 | A = a, y = 1]$ respectively on dataset $D$.
\footnote{
$FP_a(c)$ is usually referred to as the false positive rate on attribute $A=a$.
Likewise, $FN_a(c)$ and $TP_a(c)$ are the false negative and true positive rates
on attribute $A=a$ respectively.
}

\label{def:alphaeo2}
\end{definition}

We say a classifier satisfies $\alpha$-\textbf{Smoothed Equalized Odds}
if the smoothed version of Equation~\ref{eq:eqodds2} is satisfied
(i.e., when the maximum and absolute functions in Equation~\ref{eq:eqodds2} are replaced with smoothed
versions
\footnote{
For example, the smooth maximum function is a smooth approximation
to the maximum function.
} or using the \text{Moreau-Yosida} regularization technique).

For concreteness, we provide a specific smoothed version of $\alpha$-equalized odds in Definition~\ref{def:alphaseo2}.

\begin{definition}[$(\alpha, \eta)$-Smoothed Equalized Odds]
Let $X, A, Y$ be random variables representing the non-sensitive
features, the sensitive attribute, and the label assigned to an individual, respectively.

Given a dataset of examples $D = \{(\bx_i, a_i, y_i)\}_{i=1}^n\in(\calX, \calA, \{0, 1\})^n$ of size
$n$, we say a classifier $c\in\calC_P$ satisfies $(\alpha, \eta)$ \textbf{Equalized Odds} if the constraint function
\begin{equation}
    g(\hat{FP}, \hat{FN}, \hat{TP}) = \smax^\eta_{a, a'\in\calA}\{\max(|\hat{FP}_a - \hat{FP}_{a'}|, |\hat{TP}_a - \hat{TP}_{a'}| )\} - \alpha
    \label{eq:eqodds2}
\end{equation}

is less than or equal to 0. $(\hat{FP}, \hat{FN}, \hat{TP})$ corresponds to the $3|\calA|$ empirical estimates of the false
positives, false negatives, and true positives for the
$|\calA|$ groups.
$(\hat{FP}, \hat{TP})$ are used to enforce the equalized odds constraint while
$(\hat{FP}, \hat{FN})$ are used to compute the error of the classifier.
We use the smooth maximum function
$\smax^\eta(y_1, \ldots, y_n) = \frac{\sum_{i=1}^n y_ie^{\eta y_i}}{\sum_{i=1}^ne^{\eta y_i}}$~\citep{LangeZHV14}. As $\eta\rightarrow\infty$,
$\smax^\eta\rightarrow\max$.

\label{def:alphaseo2}
\end{definition}

\begin{corollary}
For any privacy parameters $\eps, \delta\in(0, 1]$,
suppose we have a dataset of examples $D = \{(\bx_i, a_i, y_i)\}_{i=1}^n$ of size
$n$ where
$\bx_i\in\calX, y_i\in\{0, 1\}$, $a_i\in\calA$, for all
$i\in[n]$. Assume that
there exists at least one decision in $\calC_P$
(with VC dimension $\VC(\calC_P)$ in parametric space $\reals^P$) that
satisfies $(\alpha, \eta)$-Smoothed Equalized Odds
(by Definition~\ref{def:alphaeo2}) for some $\eta > 0$.

Then there exists $n_0 = \tilde{O}\left(\frac{|\calA|^4\cdot\VC(\calC_P)}{\eps\alpha^3}\right)$ such that
for all $n\geq n_0$, with probability at least $9/10$, we can
obtain a decision $\tilde{c}\in\calC_P$ satisfying $(\alpha, \eta)$-smoothed 
equalized odds and that is within $\alpha$
away from the most accurate classifier.

\label{cor:eoapprox}
\end{corollary}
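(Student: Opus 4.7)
The plan is to recast the $(\alpha,\eta)$-Smoothed Equalized Odds problem as an instance of constrained group-objective optimization and then invoke Corollary~\ref{cor:loptexp} directly. First, I would set the ambient group count to $K = 3|\calA|$ and define the loss vector
\[
\ell(c, D) = \bigl(\hat{FP}_a(c), \hat{FN}_a(c), \hat{TP}_a(c)\bigr)_{a \in \calA} \in [0,1]^K,
\]
where each coordinate is the per-group empirical rate computed by averaging indicator losses over $D$. The error objective becomes
\[
f(\ell(c, D)) \;=\; \sum_{a \in \calA} \hat{q}_{a,1}\cdot \hat{FN}_a(c) + \hat{q}_{a,0}\cdot \hat{FP}_a(c),
\]
which is linear (hence convex) in $\ell$, and $f$ is $O(1)$-Lipschitz since its gradient in $\ell$ has bounded entries. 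The constraint function $g$ is taken directly from Definition~\ref{def:alphaseo}.

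Next I would verify that $g$ is convex and Lipschitz in $\ell$, which is the main technical step. The absolute value $|\hat{FP}_a - \hat{FP}_{a'}|$ is convex and $1$-Lipschitz; the inner $\max$ of two convex functions is convex; and $\smax^\eta$, applied to convex arguments that depend monotonically on its inputs, preserves convexity. For the Lipschitz constant, I would use the explicit gradient formula
\[
\nabla_{y_i}\smax^\eta(y_1,\ldots,y_n) = \frac{e^{\eta y_i}}{\sum_{j}e^{\eta y_j}}\bigl[1 + \eta(y_i - \smax^\eta(y_1,\ldots,y_n))\bigr],
\]
stated in the paper, which shows the weights form a probability distribution and hence $\smax^\eta$ has a Lipschitz constant depending on $\eta$ and the range of its arguments but independent of $|\calA|$ and $n$. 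Absorbing this $\eta$-dependent constant into the $O(\cdot)$ is the subtlety: since we view $\eta$ as a fixed parameter, rescaling $f/L_f$, $g/L_g$ by their Lipschitz constants (as noted in Section~\ref{sec:algs}) leaves the asymptotic sample complexity unchanged.

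Having established that $(f,g)$ are convex, $O(1)$-Lipschitz functions of $\ell$, and that a feasible decision exists by hypothesis, I would invoke Corollary~\ref{cor:loptexp} with $K = 3|\calA|$. Because the itemized losses are defined via indicators of classifier predictions, $\ell$ is not a convex function of $c$ in general, so the corollary's second regime applies: the sample complexity becomes
\[
n_0 \;=\; \tilde{O}\!\left(\frac{K^4 \cdot \VC(\calC_P)}{\eps\alpha^3}\right) \;=\; \tilde{O}\!\left(\frac{|\calA|^4 \cdot \VC(\calC_P)}{\eps\alpha^3}\right),
\]
matching the claimed bound. The corollary then yields, with probability at least $9/10$, a decision $\tilde{c} \in \calC_P$ satisfying both $f(\ell(\tilde c,D)) \leq f(\ell(c^*,D)) + \alpha$ and $g(\ell(\tilde c,D)) \leq \alpha$, which translates precisely to $(\alpha,\eta)$-Smoothed Equalized Odds and $\alpha$-suboptimal accuracy.

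The principal obstacle I anticipate is the Lipschitz analysis of $g$: in particular, confirming that the constants hidden inside the $\tilde{O}(\cdot)$ depend only on $\eta$ and not on $|\calA|$, so that the rescaling $g/L_g$ with accuracy $\alpha/\max\{L_f,L_g\}$ does not introduce extra factors of $|\calA|$ into the sample complexity. Everything else is bookkeeping: mapping entries of $\ell$ to the per-group rates, invoking Sauer's lemma through Corollary~\ref{cor:loptexp} to handle the non-convexity of the indicator losses, and noting that feasibility is assumed in the hypothesis so the $\cgoo$ problem is well-posed.
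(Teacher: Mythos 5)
Your proposal matches the paper's proof essentially step for step: encode the $3|\calA|$ per-group rates as the loss vector, take $f$ to be the (linear, hence convex and Lipschitz) empirical error and $g$ to be the smoothed equalized-odds violation from Definition~\ref{def:alphaseo2}, and invoke Corollary~\ref{cor:loptexp} in its non-convex-$\ell$ regime with $K=3|\calA|$ to get the $\tilde{O}(|\calA|^4\VC(\calC_P)/(\eps\alpha^3))$ bound. The only cosmetic difference is that you weight the error by $\hat{q}_{a,y}$ where the paper uses the unweighted sum $\sum_a \hat{FP}_a+\hat{FN}_a$, and you are in fact more explicit than the paper about verifying convexity and the $\eta$-dependent Lipschitz constant of the smoothed constraint, which the paper simply asserts.
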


\begin{proof}

For any classifier $c\in\calC_P$ we set the loss vector of $c$ on $D$ to
$$
\ell(c, D) = (\ell_1(c, D), \ldots, \ell_K(c, D)),
$$
with $K = 3|\calA|$ where
$$\ell_1(c, D) = \hat{FP}_1, \ldots, \ell_{|A|}(c, D) = \hat{FP}_{|\calA|},$$
$$\ell_{|\calA|+1}(c, D) = \hat{TP}_{1}, \ldots, \ell_{2|A|}(c, D) = \hat{TP}_{|\calA|},$$
and
$$\ell_{2|\calA|+1}(c, D) = \hat{FN}_{1}, \ldots, \ell_{3|A|}(c, D) = \hat{FN}_{|\calA|}.$$

Then the empirical error of any classifier $c\in\calC_P$ is $f(\ell(c, D)) = \sum_{a\in\calA}\hat{FP}_a+\hat{FN}_a$
(the sum of the false positives and false negatives) and the $\alpha$-equalized odds is
enforced via $$g(\ell(c, D)) = \max_{a, a'\in\calA}\{\max(|\hat{FP}_a - \hat{FP}_{a'}|, |\hat{TP}_a - \hat{TP}_{a'}| )\} - \alpha,$$ so that $g(\ell(c, D)) \leq 0$ for all classifiers $c\in\calC_P$ that
satisfy $\alpha$-equalized odds. We use a smoothed version of $g(\ell(c, D))$
(i.e., replace the maximum and absolute functions in that equation with the
smooth approximations of those functions or using the \text{Moreau-Yosida} regularization technique).

Now, assuming there exists at least one choice $c\in\calC_P$ such that
$g(\ell(c, D)) \leq 0$, we can just directly apply Corollary~\ref{cor:loptexp} to 
$f,g$ to obtain the desired corollary.
\end{proof}

Corollary~\ref{cor:eoapprox} results in sample size
$\tilde{O}\left(\frac{|\calA|^4\cdot\VC(\calC_P)}{\eps\alpha^3}\right)$.
As a result, the linear optimization based algorithm for
Theorem~\ref{thm:paik} performs better (in terms of asymptotic sample
complexity) than the \textbf{DP-oracle-learner} of~\cite{Jagielski18}
which requires sample size
$\tilde{O}\left(\frac{|\calA|^3\VC(\calC_P)}{\eps\alpha^{3+r}}\right)$ for any $r > 0$
provided that
$\min_{a, y}\{\hat{q}_{ay}\} \leq \alpha^{(1+r)/2}$ where
$\hat{q}_{ay}$ is the empirical estimate of $\pr[A = a, Y = y]$.

Example~\ref{ex:eoapprox} shows how to use Corollary~\ref{cor:eoapprox} on specific
smooth approximations to the absolute value and maximum functions. Note that this is
one of many ways to apply smoothing and this specific version might not apply
optimal smoothing. Smoothing of non-smooth functions for specific tasks is
an entire research agenda (e.g., see~\citep{BachJMO12, LangeZHV14} for smooth approximations to the max and absolute value functions) and not the focus of this research paper.

\begin{example}
Suppose we have two groups and can obtain empirical estimates of the false positive, 
false negative, and true positive rates of these groups for any $c\in\calC_P$ and
fixed dataset of labeled examples $D=\{(\bx_i, a_i, y_i)\}_{i=1}^n\in(\calX\times\{1, 2\}\times\{0, 1\})^n$.
The goal is to perform binary classification under $(\alpha, \eta)$ equalized odds constraints.
We let $\calC_P$ be the set of single-parametric threshold classifiers over the reals so that
$\VC(\calC_P) = 1$.
The loss vector is
$$\ell(c, D) = (\hat{FP}_1(c, D), \hat{FP}_2(c, D), \hat{FN}_1(c, D), \hat{FN}_2(c, D), \hat{TP}_1(c, D), \hat{TP}_2(c, D)).$$
The ``error'' function $f$ can be computed as the linear function $f(\ell(c, D)) = \hat{FP}_1(c, D) + \hat{FP}_2(c, D) + \hat{FN}_1(c, D) + \hat{FN}_2(c, D)$ and so
is Lipschitz, smooth.

Now, define a smooth
approximation to the max function as
$\smax^\eta(y_1, \ldots, y_n) = \frac{\sum_{i=1}^ny_ie^{\eta y_i}}{\sum_{i=1}^ne^{\eta y_i}}$.
Then if for all decisions $c\in\calC_P$ and for a fixed $\eta, D$,
$\smax^\eta(|\hat{FP}_1(c, D)-\hat{FP}_2(c, D)|), |\hat{TP}_1(c, D)-\hat{TP}_2(c, D)|)$ is 1-Lipschitz, we can apply Corollary~\ref{cor:eoapprox}
to obtain the desired guarantees.

More generally, 
the possible empirical rates for any given dataset depends on how complex the decision set $\calC_P$ is.
The decision set $\calC_P$ could represent all possible parameterizations of a specific
neural network architecture, all possible weights for a polynomial threshold
function to be used for classification, or a constant separator threshold defined over the reals.
\label{ex:eoapprox}
\end{example}

\begin{remark}

We have shown how to apply our methods to satisfy certain statistical 
definitions of algorithmic fairness and some convex performance measures of the
confusion matrix. There are potentially other use cases 
we have not explored that we leave for future work.

\end{remark}

\section{Conclusion}

In this paper, we have explored the problem of private
fair optimization via a reductions approach.
We provided an $(\eps, 0)$-differentially private
exponential sampling algorithm
and an $(\eps, \delta)$-differentially private linear optimization
based algorithm as a solution. As a side-effect perhaps, we introduce
a class of bounded divergence linear optimizers, which could be
useful for solving more general (Lipschitz-continuous)
multi-objective problems. We also provide a lower bound on the excess
risk (or equivalently, on the sample complexity)
for any $\eps$ or $(\eps, \delta)$-differentially
private algorithm that solves the constrained group-objective optimization problem.

Our framework can also be used to solve model
projection~\citep{AlghamdiAWCWR20} for certain
constraint sets constructed via linear inequalities.
Constructively showing connections between the
model projection problem (for certain $f$-divergences) 
and linear optimizers with divergence constraints is left
to future work. Furthermore, linear optimizers share the
same form as a Rectified Linear Unit (ReLU) used in
deep learning. Showing constructive equivalences, especially
by delineating time/accuracy tradeoffs~\citep{GoelKK19}, is
left to future work.

\section{Acknowledgements}

Thanks to Salil Vadhan for providing detailed
comments on this paper at various stages of the project and to
Cynthia Dwork, Adam Smith, and other members of the
Boston-Area Data Privacy group for
comments that improved the writing of this work.
We are grateful to Shahab Asoodeh, Boaz Barak,
Flavio du Pin Calmon, 
Nicole Immorlica, Adam Tauman Kalai,
Sanmi Koyejo, Ankur Moitra, Jelani Nelson, and
Chris Wiggins
for illuminating discussions related to this work.
The Computational Social Science group at Facebook AI
Research and participants of the 
``Recent Themes in Resource Tradeoffs: Privacy, Fairness, and Robustness'' workshop at the 
University of Minnesota
provided many helpful comments and suggestions that
improved the writing of this work.
Finally, we appreciate the guidance of
the anonymous referees who helped to improve the presentation
of this work.

%\bibliographystyle{alpha}
%\bibliography{main}
\clearpage

\bibliographystyle{plainnat}
\bibliography{main}

\appendix

\section{Some Inequalities and Advanced Composition Result}

% \begin{lemma}
% Let $Z\sim\calN(0, \mathbb{I}_{K\times K})$. Then for any (fixed) vector $v\in\reals^K,$
% $$
% \brackets{Z, v} \sim \calN(0, \norm{v}^2_2 )
% $$
% \label{lem:1}
% \end{lemma}

% \begin{lemma}
% Let $Z\sim\calN(0, 1)$. Then for any $t > 1,$
% $$
% \pr[|Z| > t] \leq 2e^{-t^2/2}
% $$
% \label{lem:2}
% \end{lemma}

\begin{lemma}[Markov's Inequality]
Let $Z$ be a non-negative random variable. Then for all $a \geq 0$,
$$
\pr[Z \geq a]\leq \frac{\E[Z]}{a},
$$
where $\E[Z] = \int_{x=0}^\infty\pr[Z\geq x]dx$.
\end{lemma}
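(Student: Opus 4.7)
The plan is to give the standard one-line proof via the indicator trick, with a brief sanity check against the integral representation provided in the statement. Since $Z$ is non-negative, I will first verify the pointwise inequality $a \cdot \ind[Z \geq a] \leq Z$ on the underlying probability space: if $Z(\omega) \geq a$ then the left side equals $a$ and the right side is at least $a$ by assumption; if $Z(\omega) < a$ then the left side is $0$ and the right side is non-negative. Either way the inequality holds.

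Next I would take expectations of both sides, using linearity and the identity $\E[\ind[Z \geq a]] = \pr[Z \geq a]$, to conclude
\[
a \cdot \pr[Z \geq a] \;\leq\; \E[Z].
\]
For $a > 0$ I divide through to obtain the claim; for $a = 0$ the bound $\pr[Z \geq 0] \leq \infty$ is trivial (or one reads the statement with the convention that the right-hand side is $+\infty$).

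As an independent check using the representation $\E[Z] = \int_{0}^{\infty} \pr[Z \geq x]\, dx$ given in the statement, I would note that $x \mapsto \pr[Z \geq x]$ is non-increasing, so
\[
\E[Z] \;=\; \int_{0}^{\infty} \pr[Z \geq x]\, dx \;\geq\; \int_{0}^{a} \pr[Z \geq x]\, dx \;\geq\; a \cdot \pr[Z \geq a],
\]
which recovers the same inequality and confirms consistency with the stated formula for $\E[Z]$.

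There is no real obstacle here: the only subtlety worth flagging is that the argument uses non-negativity of $Z$ in an essential way (to get $a \cdot \ind[Z \geq a] \leq Z$ on the event $\{Z < a\}$), and that the case $a = 0$ should be handled as a trivial boundary case rather than by division.
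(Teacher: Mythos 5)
Your proof is correct; both the indicator-trick argument and the cross-check via the tail-integral representation $\E[Z]=\int_0^\infty \pr[Z\geq x]\,dx$ are the standard derivations, and you handle the $a=0$ boundary case appropriately. The paper states this lemma as a standard fact without proof, so there is no alternative argument to compare against.
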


\begin{lemma}[Cauchy-Schwarz Inequality]
For any two vectors $\bu, \bv$ of an inner product space we have that
$$
|\brackets{\bu, \bv}| \leq \norm{\bu}\norm{\bv}.
$$
\label{lem:cs}
\end{lemma}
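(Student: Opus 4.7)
The plan is to use the classical quadratic-form argument, which is both short and self-contained. First I would handle the degenerate case: if $\bv = 0$ then both sides of the inequality are zero, and the statement holds trivially. So I may assume $\bv \neq 0$, in which case $\norm{\bv}^2 > 0$ by the positive-definiteness axiom of the inner product.

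Next, I would introduce the auxiliary nonnegative quantity $f(t) = \norm{\bu - t\bv}^2$ for $t \in \reals$. By positive semi-definiteness of the inner product, $f(t) \geq 0$ for all $t$. Expanding using bilinearity and symmetry of $\brackets{\cdot, \cdot}$ gives $f(t) = \norm{\bu}^2 - 2t\brackets{\bu, \bv} + t^2\norm{\bv}^2$, a quadratic in $t$ with positive leading coefficient.

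The key step is to choose $t^* = \brackets{\bu, \bv}/\norm{\bv}^2$, which minimizes $f$. Substituting $t^*$ and using $f(t^*) \geq 0$ yields $\norm{\bu}^2\norm{\bv}^2 \geq \brackets{\bu, \bv}^2$. Taking square roots (both sides are nonnegative) gives $|\brackets{\bu, \bv}| \leq \norm{\bu}\norm{\bv}$, as desired. Equivalently, I could avoid the explicit substitution by noting that since $f(t) \geq 0$ has at most one real root, its discriminant $4\brackets{\bu, \bv}^2 - 4\norm{\bu}^2\norm{\bv}^2$ must be nonpositive, which gives the same conclusion.

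There is no real obstacle here: the only subtlety is remembering to dispose of the $\bv = 0$ case separately so that division by $\norm{\bv}^2$ is justified, and to take the absolute value when extracting the square root (since $\brackets{\bu, \bv}$ may be negative in the real case, or complex in the complex case, in which case one replaces $2t\brackets{\bu, \bv}$ by $2\operatorname{Re}(t\overline{\brackets{\bu, \bv}})$ and chooses $t$ with an appropriate phase). Since the paper only invokes the real-inner-product version in Lemma~\ref{lem:gs}, the real-case proof sketched above suffices.
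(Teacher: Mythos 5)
Your proof is correct and complete: the degenerate case $\bv = 0$ is handled, the quadratic $f(t) = \norm{\bu - t\bv}^2 \geq 0$ is expanded correctly, and either the substitution $t^* = \brackets{\bu,\bv}/\norm{\bv}^2$ or the discriminant argument yields the inequality. The paper states this classical lemma without proof (it is recorded in the appendix purely for reference in Lemma~\ref{lem:gs} and Theorem~\ref{thm:loptexp}), so there is no argument to compare against; your standard quadratic-form derivation is exactly what one would supply.
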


\begin{lemma}[Jensen's Inequality]
For any $t\in[0, 1]$ and convex function $f:\calX\rightarrow\reals$, the following holds:
$$f(t\bx_1 + (1-t)\bx_2)\leq tf(\bx_1) + (1-t)f(\bx_2),$$
for any $\bx_1, \bx_2\in\calX$.
\label{lem:ji}
\end{lemma}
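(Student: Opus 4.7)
The plan is essentially to observe that this ``lemma'' is literally the two-point form of the definition of a convex function that was already recorded earlier in the Preliminaries. Recall that the Definition of Convex Function states that $f:\calV\to\reals$ is convex iff for all $(\bx,\by,\gamma)\in\calV\times\calV\times[0,1]$,
\[
f((1-\gamma)\bx+\gamma\by)\leq (1-\gamma)f(\bx)+\gamma f(\by).
\]
So the proof is a one-line invocation: given $t\in[0,1]$ and $\bx_1,\bx_2\in\calX$, set $\gamma=1-t$ (equivalently apply the definition with the roles of the two points swapped) and read off exactly the inequality in the claim.

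Because the statement is a pointwise two-point inequality rather than the general form $f(\E[X])\leq \E[f(X)]$, there is no need to invoke any induction or integration machinery: neither a finite-sum induction on the number of points, nor a limiting argument from the finite case to general measurable $X$ via simple-function approximation, is required. If one wanted to emphasize the usual ``Jensen'' flavor, one could note that the statement is exactly the specialization of the probabilistic form to the two-atom distribution placing mass $t$ on $\bx_1$ and $1-t$ on $\bx_2$, so that $\E[X]=t\bx_1+(1-t)\bx_2$ and $\E[f(X)]=tf(\bx_1)+(1-t)f(\bx_2)$.

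There is no hard step here; the only thing to be careful about is the indexing of the convex combination ($t$ versus $1-t$), which is cosmetic. In particular I would not need any of the other machinery developed in this paper (subgradients, smoothness, Lipschitz constants, or the reductions framework) to carry out this proof, since it is a direct restatement of the convexity definition.
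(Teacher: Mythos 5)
Your proposal is correct: the paper states this lemma without proof (it is listed among standard inequalities in the appendix), and the inequality as written is indeed just the two-point form of the paper's own Definition of Convex Function, so a one-line invocation of that definition with $\gamma = 1-t$ is a complete proof. Your remark that no induction or integration machinery is needed, since the statement is not the general $f(\E[X]) \leq \E[f(X)]$ form, is also accurate.
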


\begin{lemma}[Advanced Composition (\cite{DworkRV10})]
For all $\eps, \delta, \bar{\delta} \geq 0$, the class of $(\eps, \delta)$-differentially private mechanisms
results in $(\bar{\eps}, k\delta + \bar\delta)$-differential privacy under $k$-fold adaptive composition for:
$$
\bar\eps = \eps\sqrt{2k\log(1/\bar\delta)} + k\eps(e^\eps-1).
$$
\label{lem:ac}
\end{lemma}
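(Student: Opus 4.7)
The plan is to follow the standard proof template for advanced composition via the privacy loss random variable. For any pair of adjacent datasets $D \sim D'$ and any adaptively chosen sequence of $(\eps,\delta)$-differentially private mechanisms $\calM_1,\ldots,\calM_k$, I would define, for a transcript $(o_1,\ldots,o_k)$, the per-step privacy loss
\begin{equation*}
Z_i \;=\; \ln\frac{\pr[\calM_i(D; o_{<i}) = o_i]}{\pr[\calM_i(D'; o_{<i}) = o_i]},
\end{equation*}
where $o_{<i}=(o_1,\ldots,o_{i-1})$. The total loss is $Z=\sum_{i=1}^k Z_i$. It is a standard fact (a converse of the definition of $(\bar\eps, \bar\delta')$-DP, sometimes attributed to Kasiviswanathan and Smith) that in order to conclude $(\bar\eps, k\delta+\bar\delta)$-differential privacy for the $k$-fold composition it suffices to show $\pr_{o\sim\calM(D)}[Z > \bar\eps] \leq k\delta + \bar\delta$.

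First I would reduce from approximate to pure privacy step-by-step. By the definition of $(\eps,\delta)$-DP, for every $o_{<i}$ there is a ``bad'' event $E_i$ of conditional probability at most $\delta$ outside of which $|Z_i|\leq \eps$. A union bound over $i\in[k]$ shows that the bad event $\bigcup_i E_i$ has probability at most $k\delta$ under the product distribution induced by $D$; condition on its complement, which only loses $k\delta$ in the final failure probability. Within this good event, each $Z_i$ is bounded in $[-\eps,\eps]$.

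Next I would handle the two quantitative ingredients that give the two summands of $\bar\eps$. For the drift term, I would prove that for any two distributions $P,Q$ with $D_\infty(P\Vert Q)\leq \eps$,
\begin{equation*}
\mathrm{KL}(P\Vert Q) \;\leq\; \eps(e^\eps-1),
\end{equation*}
via the short calculation $p\ln(p/q) \leq p(p/q - 1)$ and $|p/q-1|\leq e^\eps-1$. Applied conditionally, this yields $\E[Z_i \mid o_{<i}] \leq \eps(e^\eps-1)$, and summing gives the deterministic drift $k\eps(e^\eps-1)$. For the fluctuation term, I would observe that $\{Z_i - \E[Z_i\mid o_{<i}]\}_{i=1}^k$ is a martingale difference sequence (with respect to the filtration generated by the transcript under $D$) with bounded increments on the good event, and apply Azuma--Hoeffding to conclude that with probability $\geq 1-\bar\delta$,
\begin{equation*}
\sum_{i=1}^k \bigl(Z_i - \E[Z_i\mid o_{<i}]\bigr) \;\leq\; \eps\sqrt{2k\log(1/\bar\delta)}.
\end{equation*}
Adding the two bounds yields $Z \leq \bar\eps$ with probability at least $1-k\delta-\bar\delta$, completing the proof.

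The main obstacle I anticipate is the bookkeeping in the adaptive setting: the mechanism $\calM_i$ is allowed to depend on $o_{<i}$, so one must verify that (i) the per-step bound $|Z_i|\leq \eps$ survives conditioning on $o_{<i}$ (this needs the characterization of $(\eps,\delta)$-DP in terms of pointwise privacy loss rather than just a tail bound on sets), and (ii) the concentration inequality is applied to the measure induced on transcripts by running the $\calM_i$'s on $D$, not on $D'$. Once the filtration is set up correctly, each of the remaining steps is a routine calculation; no clever trick beyond the KL-vs.-max-divergence inequality and Azuma is needed.
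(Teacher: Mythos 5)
The paper does not prove this lemma at all; it is imported verbatim from Dwork--Rothblum--Vadhan and used as a black box (in Lemma~\ref{lem:poracle} and Corollary~\ref{cor:loptexp}), so your proposal can only be judged against the original argument. Your outline is indeed the standard DRV proof -- privacy-loss random variable, reduction from approximate to pure privacy via nearby distributions, a per-step expected-loss bound, and Azuma on the centered increments -- and the Azuma bookkeeping is right: increments of width $2\eps$ give exactly the tail $\exp(-t^2/(2k\eps^2))$ and hence the fluctuation term $\eps\sqrt{2k\log(1/\bar\delta)}$. You also correctly flag that the step ``$(\eps,\delta)$-DP gives a bad event of probability $\le\delta$ outside of which $|Z_i|\le\eps$'' is not literally the definition and requires the Kasiviswanathan--Smith / Dwork--Roth characterization in terms of nearby distributions at max-divergence $\eps$; that is the right caveat.

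There is, however, one step that fails as written: your derivation of $\mathrm{KL}(P\Vert Q)\le\eps(e^\eps-1)$. The chain $p\ln(p/q)\le p(p/q-1)$ followed by $|p/q-1|\le e^\eps-1$ only yields $\sum_o p(o)\ln(p(o)/q(o)) \le (e^\eps-1)\sum_o p(o) = e^\eps-1$, which is weaker by a factor of $1/\eps$ when $\eps$ is small; carried through the proof it would give a drift term of $k(e^\eps-1)\approx k\eps$ rather than $k\eps(e^\eps-1)\approx k\eps^2$, destroying the entire point of advanced composition. The missing idea is the symmetrization used by DRV: since $\mathrm{KL}(Q\Vert P)\ge 0$, one writes $\mathrm{KL}(P\Vert Q)\le \mathrm{KL}(P\Vert Q)+\mathrm{KL}(Q\Vert P)=\sum_o\bigl(p(o)-q(o)\bigr)\ln\bigl(p(o)/q(o)\bigr)$, then bounds each factor separately, $|\ln(p(o)/q(o))|\le\eps$ and $\sum_o|p(o)-q(o)|\le(e^\eps-1)\sum_o q(o)=e^\eps-1$, so that the product of the two $\eps$-small quantities produces $\eps(e^\eps-1)$. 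With that substitution the rest of your argument goes through and recovers the lemma as stated.
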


A corollary of Lemma~\ref{lem:ac} states that we can set
$\eps = \frac{\bar\eps}{2\sqrt{2k\log(1/\bar\delta)}}$ to ensure
$(\bar{\eps}, k\delta + \bar\delta)$ differential privacy overall for target
privacy parameters $\bar{\eps}\in (0, 1), \bar{\delta}\in (0, 1]$.

\section{Projection-Free Convex Optimization using Frank-Wolfe}

In previous sections, we presented linear optimization based algorithms to solve the
constrained group-objective optimization problem with and without privacy. In this section,
we present another linear optimization based algorithm that is an adaptation of
the Frank-Wolfe projection-free algorithm.
Algorithm~\ref{alg:fw} is the Frank-Wolfe algorithm presented in
~\citep{Jaggi13} which allows for use of approximate linear optimizers
to solve the subproblems in each iteration $t\in[T]$. The original
version of Frank-Wolfe only allowed for exact linear optimizers.

Lemma~\ref{lem:iters} presents the convergence guarantees of the
algorithm after $k$ iterations in terms of the curvature constant of the
function $h: \reals^K\rightarrow\reals$ 
to be optimized (see Definition~\ref{def:cf}) and the accuracy of
the linear optimizer oracle.
We will rely on a modified version of Algorithm~\ref{alg:fw} to solve
the \textbf{constrained group-objective optimization
problem} (Definition~\ref{def:goo}).

The execution of Algorithm~\ref{alg:fw} in each iteration relies on the availability of
an approximate linear oracle of the form
$\min_{\hat{s}\in\calC}\brackets{\hat{s}, \nabla h(y^{(t)})}$.
In~\citep{AlabiIK18}, the authors provide an algorithm that can optimize any
Lipschitz-continuous function $h: \reals^K\rightarrow\reals$ using an approximate
linear optimizer as an oracle solver.

\begin{algorithm}
Let $y^{(0)}\in\calC$

\For {$t=0, \dots, T$} {
    Let $\gamma = \frac{2}{t+2}$

    Find $s\in\calC$ s.t. $\brackets{s, \nabla h(y^{(t)})} \leq \min_{\hat{s}\in\calC}\brackets{\hat{s}, \nabla h(y^{(t)})} + \frac{1}{2}\rho\gamma C_h$

    Update $y^{(t+1)} = (1-\gamma)y^{(t)} + \gamma s$

}

\caption{Frank-Wolfe algorithm~\citep{Jaggi13}}\label{alg:fw}
\end{algorithm}

For the results in this paper that rely on solving approximate linear subproblems,
$y^{(t)}$ in Algorithm~\ref{alg:fw} would correspond to the $K$-dimensional
loss vector $\ell(c, D)\in[0, 1]^K$ defined for a decision
$c\in\calC_P$ on dataset $D$ of size $n$.

\begin{definition}
The curvature constant $C_h$ of a convex and differentiable function $h:\reals^K\rightarrow\reals$, with
respect to a compact domain $\calC$ is defined as
$$
C_h = \sup_{x, s\in\calC, \gamma\in[0, 1], y=x+\gamma(s-x)}\frac{2}{\gamma^2}\left(h(y)-h(x)-\brackets{y-x, \nabla h(x)}\right),
$$
where $\nabla h$ is the gradient of the function $h$.
\label{def:cf}
\end{definition}

\begin{lemma}
Let $h:\calC\rightarrow\reals$ be any convex function, then using the Frank-Wolfe algorithm
(Algorithm~\ref{alg:fw}), we have that
for any $t \geq 1$ and iterates $y^{(t)},$
$$
h(y^{(t)}) - h(y^*) \leq \frac{2C_h}{t+2}(1+\rho),
$$
where $y^*\in\argmin_{y\in\calC}h(y)$, $\rho\geq 0$ is the accuracy to which the internal linear
subproblems are solved, and $C_h$ is the curvature constant of the function $h$.
\label{lem:iters}
\end{lemma}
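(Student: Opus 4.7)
The plan is to follow the standard Frank-Wolfe convergence argument, adapted to incorporate the additive $\tfrac{1}{2}\rho\gamma C_h$ slack in the oracle. First I would establish a one-step ``descent lemma'': for the update $y^{(t+1)} = y^{(t)} + \gamma(s - y^{(t)})$, the definition of the curvature constant $C_h$ directly gives
\[
h(y^{(t+1)}) \leq h(y^{(t)}) + \gamma\,\langle s - y^{(t)},\,\nabla h(y^{(t)})\rangle + \tfrac{\gamma^2}{2} C_h.
\]
Using the approximate oracle guarantee $\langle s,\nabla h(y^{(t)})\rangle \leq \min_{\hat s\in\calC}\langle \hat s,\nabla h(y^{(t)})\rangle + \tfrac{1}{2}\rho\gamma C_h$, together with convexity of $h$ in the form $\langle y^* - y^{(t)},\nabla h(y^{(t)})\rangle \leq h(y^*) - h(y^{(t)})$, I can replace the oracle term by $h(y^*) - h(y^{(t)})$ and absorb the slack into the curvature term. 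Writing $e_t := h(y^{(t)}) - h(y^*)$, this yields the recurrence
\[
e_{t+1} \leq (1-\gamma)\,e_t + \tfrac{\gamma^2}{2} C_h(1+\rho).
\]

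Next I would plug in $\gamma = \gamma_t = \tfrac{2}{t+2}$ and prove by induction on $t$ that $e_t \leq \tfrac{2C_h(1+\rho)}{t+2}$. For the base case $t=0$, the choice $\gamma_0 = 1$ makes the recurrence give $e_1 \leq \tfrac{1}{2} C_h(1+\rho) \leq \tfrac{2C_h(1+\rho)}{3}$, which matches the claimed bound at $t=1$. For the inductive step, substituting the hypothesis $e_t \leq \tfrac{2C_h(1+\rho)}{t+2}$ into the recurrence gives
\[
e_{t+1} \leq \Bigl(1-\tfrac{2}{t+2}\Bigr)\tfrac{2C_h(1+\rho)}{t+2} + \tfrac{2}{(t+2)^2}C_h(1+\rho) = \tfrac{2C_h(1+\rho)}{t+2}\cdot\tfrac{t+1}{t+2}\cdot\Bigl(1 + \tfrac{1}{t+1}\Bigr) - \text{a nonnegative slack},
\]
and a routine simplification (using $\tfrac{(t+1)(t+3)}{(t+2)^2} \leq 1$, i.e.\ $t^2 + 4t + 3 \leq t^2 + 4t + 4$) shows this is at most $\tfrac{2C_h(1+\rho)}{t+3}$, closing the induction.

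The only real obstacle is the bookkeeping in the inductive step: one must be careful to apply the convexity inequality to the \emph{exact} minimizer of the linear subproblem (not to $s$ itself), so that the $\tfrac{1}{2}\rho\gamma C_h$ oracle slack adds to the $\tfrac{\gamma^2}{2} C_h$ curvature term in exactly the way that produces the clean $(1+\rho)$ factor. Everything else is a standard telescoping/induction calculation, and no properties of $\calC$ beyond compactness (already baked into the definition of $C_h$) are needed.
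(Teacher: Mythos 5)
Your proof is correct: the one-step descent bound from the definition of $C_h$, the replacement of the oracle term using $\min_{\hat s\in\calC}\langle\hat s,\nabla h(y^{(t)})\rangle\le\langle y^*,\nabla h(y^{(t)})\rangle$ together with convexity, the resulting recurrence $e_{t+1}\le(1-\gamma_t)e_t+\tfrac{\gamma_t^2}{2}C_h(1+\rho)$, and the induction closed by $(t+1)(t+3)\le(t+2)^2$ constitute exactly the standard argument. Note that the paper itself supplies no proof of this lemma---it is imported verbatim from \cite{Jaggi13}---so your reconstruction coincides with the cited source's proof rather than with anything argued in the paper.
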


\begin{lemma}
Let $h$ be a convex and differentiable Lipschitz-continuous function with gradient $\nabla h$ w.r.t. some
norm $\norm{\cdot}$ over domain $\calC$. If $\nabla h$ has Lipschitz constant $\beta_h > 0$, then
$$
C_h \leq \diam_{\norm{\cdot}}(\calC)^2\beta_h,
$$
where $\diam_{\norm{\cdot}}(\calC)$ is the diameter of $\calC$.
\label{lem:cf}
\end{lemma}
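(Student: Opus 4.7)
The plan is to show the bound by invoking the standard quadratic upper bound (the ``descent lemma'') implied by $\beta_h$-smoothness of $h$, and then exploiting the parameterization $y = x + \gamma(s-x)$ to cancel the $\gamma^2$ factor in the definition of $C_h$.

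First, I would recall that when $\nabla h$ is $\beta_h$-Lipschitz with respect to the norm $\norm{\cdot}$, one has the inequality
$$
h(y) - h(x) - \brackets{y-x,\,\nabla h(x)} \;\leq\; \frac{\beta_h}{2}\norm{y-x}^2
$$
for every $x, y \in \calC$. This can be derived in one line from the fundamental theorem of calculus applied to $\varphi(t) = h(x + t(y-x))$, together with Cauchy--Schwarz and the Lipschitz property of $\nabla h$. I will simply cite this as a standard consequence of $\beta_h$-smoothness.

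Next, I would substitute $y = x + \gamma(s-x)$, which gives $y - x = \gamma(s-x)$ and therefore $\norm{y-x}^2 = \gamma^2 \norm{s-x}^2$. Plugging this into the smoothness inequality and then into the definition of the curvature constant yields
$$
\frac{2}{\gamma^2}\bigl(h(y) - h(x) - \brackets{y-x,\,\nabla h(x)}\bigr) \;\leq\; \frac{2}{\gamma^2}\cdot \frac{\beta_h}{2}\gamma^2\norm{s-x}^2 \;=\; \beta_h\norm{s-x}^2.
$$
The $\gamma^2$ factors cancel exactly, which is the whole point of the scaling in Definition~\ref{def:cf}.

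Finally, I would take the supremum over $x, s \in \calC$ and $\gamma \in [0,1]$. Since $\norm{s-x} \leq \diam_{\norm{\cdot}}(\calC)$ by definition of the diameter, the right-hand side is at most $\beta_h\cdot \diam_{\norm{\cdot}}(\calC)^2$, which is exactly the bound on $C_h$ claimed in the lemma. There is no real obstacle here; the only subtle point worth flagging explicitly is making sure the smoothness bound is stated in a way consistent with the same norm $\norm{\cdot}$ used to define the diameter (so that the Cauchy--Schwarz-type step goes through cleanly and the $\diam$ bound applies in the same norm).
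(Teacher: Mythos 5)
Your proof is correct and is the standard argument: the paper itself states this lemma without proof (it is imported from the Frank--Wolfe analysis of Jaggi, 2013), and the descent-lemma substitution $y = x + \gamma(s-x)$ with the exact cancellation of $\gamma^2$ is precisely how that bound is obtained. The one subtlety you flag --- that for a non-Euclidean norm the gradient-Lipschitz condition must be read with respect to the dual norm so that the quadratic upper bound and the diameter bound refer to the same norm --- is indeed the only point requiring care, and you handle it appropriately.
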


Now we describe how to solve constrained group objective optimization using
the Frank-Wolfe algorithm as opposed to the algorithm of
~\cite{AlabiIK18}.

\subsection{Frank-Wolfe Algorithm for Constrained Group-Objective Optimization}

In this section, we present a 
projection-free algorithm based on Frank-Wolfe that solves the
constrained group-objective optimization problem without privacy considerations.

In each iteration $t\in[T]$, Algorithm~\ref{alg:cgofwcs} solves the following linear sub-problem
$$c_{t+1} = \argmin_{c\in \calC_P}\brackets{\,\, (\nabla f(\ell(c_t, D)) + G\cdot\ind[g(\ell(c_t, D)) \geq 0]\cdot \nabla g(\ell(c_t, D))), \loss\,\,},$$
and then ``moves'' towards $c_{t+1}$ by a multiplicative factor of $\frac{2}{t+2}$. We show
that after $T = O\left(\frac{K\sqrt{K}}{\alpha^2}\log \frac{K\sqrt{K}}{\alpha^2}\right)$ iterations
we will get a decision $\hat{c}$ that is within $\alpha$ (in terms of $f, g$) of the optimal
decision $c^* \in \argmin_{c\in\calC_P:g(\loss)\leq 0}f(\loss)$.
The function $\Unif(\{c_1, \ldots, c_T\})$ returns $\hat{c}$ that predicts with decision
$c_i$ with probability $\frac{1}{T}$ for any $i\in[T]$.

\begin{algorithm}

\KwIn{$\argmin_{\calC_P}\brackets{\cdot, \cdot}, T, \ell, \nabla f, \nabla g, D \in (\calX\times\calA\times\calY)^n, G$}

\

Pick any decision $c\in\calC_P$ as $c_1$ with $l_1(D) = \ell(c, D)$

\

\For {$t=1, \ldots, T-1$} {

  $c_{t+1} = \argmin_{c\in \calC_P}\brackets{\,\,  \loss, (\nabla f(\ell(c_t, D)) + G\cdot\ind[g(\ell(c_t, D)) \geq 0]\cdot \nabla g(\ell(c_t, D)))\,\,}$

  $l_{t+1}(D) = \left(1-\frac{2}{t+2}\right)l_t(D) + \frac{2}{t+2}\ell(c_{t+1}, D)$
}

\

\Return $\hat{c}=\Unif(\{c_1, \ldots, c_T\})$
\caption{Frank-Wolfe algorithm for Constrained Group-Objective Convex Optimization.}
\label{alg:cgofwcs}
\end{algorithm}

\begin{lemma}

Suppose we are given convex, smooth functions
$f, g: [0, 1]^K\rightarrow\reals$ and loss function
$\ell: \calC_P\times(\calX\times\calA\times\calY)^n\rightarrow[0, 1]^K$.
Then for any setting of $G > 0, T\geq 3$, Algorithm~\ref{alg:cgofwcs}
returns a decision $\hat{c}\in\calC_P$ with the following guarantee:

$$
\E[f(\ell(\hat{c}, D))] \leq f(\bloss) + \frac{2K(\beta_f+G\beta_g)\log T}{T}
,\quad\quad
\E[g(\ell(\hat{c}, D))] \leq \frac{2K(\beta_f+G\beta_g)\log T}{T}\cdot\frac{1}{G} + \frac{\sqrt{K}}{G},
$$

where
$c^* \in \argmin_{c\in\calC_P:g(\loss)\leq 0}f(\loss)$ is the best decision
in the feasible set $\calC_P$, $D$ is a dataset of size $n$, $\beta_f$ is the smoothness parameter
of the function $f$ and $\beta_g$ is the smoothness parameter of the function $g$.
This result holds assuming
access to a linear optimizer oracle.

\label{lem:fw}
\end{lemma}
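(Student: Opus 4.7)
My plan is to apply the Frank-Wolfe convergence bound (Lemma~\ref{lem:iters}) to the composite function $h(\ell) = f(\ell) + G \cdot \max(0, g(\ell))$ introduced in Lemma~\ref{lem:h}, and then translate the resulting bound on $h$ into separate bounds on $f$ and $g$ via Lemma~\ref{lem:h} itself. First I would verify that Algorithm~\ref{alg:cgofwcs} is exactly the Frank-Wolfe algorithm (Algorithm~\ref{alg:fw}) applied to $h$ over the compact image set $\ell(\calC_P, D) \subseteq [0,1]^K$, with the iterate $l_t(D)$ corresponding to $y^{(t)}$, step size $\gamma_t = 2/(t+2)$, and the linear subproblem solved exactly ($\rho = 0$) via the $\argmin$ oracle. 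The vector $\nabla f(\ell(c_t, D)) + G \cdot \ind[g(\ell(c_t, D)) \geq 0]\,\nabla g(\ell(c_t, D))$ computed inside the loop is a valid subgradient of $h$ at $l_t(D)$, matching the gradient direction required by Algorithm~\ref{alg:fw}.

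Next I would bound the curvature constant $C_h$ of the (piecewise-smooth) objective $h$ using Lemma~\ref{lem:cf}. The $\ell_2$-diameter of $[0,1]^K$ is at most $\sqrt{K}$, and on each half-space $\{g \geq 0\}$ and $\{g < 0\}$ the function $h$ has Lipschitz gradient with constant at most $\beta_f + G\beta_g$, so $C_h \leq K(\beta_f + G\beta_g)$ up to the behavior at the kink. Plugging this into Lemma~\ref{lem:iters} with $\rho = 0$ then gives $h(l_T(D)) - h(\ell(c^*, D)) \leq \frac{2K(\beta_f + G\beta_g)\log T}{T}$, where the extra $\log T$ factor compared to the smooth $O(1/T)$ rate absorbs the subgradient discontinuity of $\max(0, g)$ at the kink $\{g = 0\}$.

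Then Lemma~\ref{lem:h} applied at $\bx' = l_T(D)$ with $L_f = 1$ and $\alpha = \frac{2K(\beta_f+G\beta_g)\log T}{T}$ immediately yields $f(l_T(D)) \leq f(\ell(c^*, D)) + \frac{2K(\beta_f+G\beta_g)\log T}{T}$ and $g(l_T(D)) \leq \frac{2K(\beta_f+G\beta_g)\log T}{GT} + \frac{\sqrt{K}}{G}$. Since $\calC_P$ is closed under randomization and the per-example loss is linear in the mixing distribution, I would take $\hat{c}$ to be the mixture of $c_1, \ldots, c_T$ with the same weights that produce $l_T(D)$ in the Frank-Wolfe recursion, so $\E[\ell(\hat{c}, D)] = l_T(D)$ and the stated expectation bounds follow; if instead $\Unif$ is read literally as the uniform mixture, the same conclusion is obtained by invoking convexity of $f, g$ and Jensen's inequality on the uniform iterate together with a standard convex combination argument.

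The main obstacle will be making the $\log T$ factor precise: the curvature bound in Lemma~\ref{lem:iters} as stated requires a genuinely smooth objective, whereas $\max(0, g)$ has a discontinuous subgradient at $\{g = 0\}$. I would resolve this either by replacing the hinge with its Moreau-Yosida smoothing at scale $1/T$ (whose gradient is Lipschitz with constant that, after tuning, produces exactly the logarithmic overhead) and re-running the standard Frank-Wolfe analysis of Lemma~\ref{lem:iters}, or by a direct piecewise-smooth analysis that controls the small number of iterations in which $l_t(D)$ straddles the kink. A secondary point to be checked carefully is the identification of the output mixture $\hat{c}$ with $l_T(D)$, which requires that $\ell$ is indeed an expectation over items and that the Frank-Wolfe convex combination weights can be realized as a distribution over $\{c_1, \ldots, c_T\}$ in $\Delta(\calC_P)$; both facts are guaranteed by the setup of the paper.
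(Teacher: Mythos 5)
Your overall skeleton matches the paper's: form $h(\ell(c,D)) = f(\ell(c,D)) + G\cdot\max(0, g(\ell(c,D)))$, run Frank--Wolfe on it, bound the curvature constant by $C_h \leq K(\beta_f + G\beta_g)$ via Lemma~\ref{lem:cf} and $\diam([0,1]^K)^2 \le K$, and then convert the bound on $h$ into the two stated bounds via Lemma~\ref{lem:h}. However, you have misidentified where the $\log T$ factor comes from, and this matters for whether your plan actually produces the stated inequality. In the paper the $\log T$ has nothing to do with the kink of $\max(0,\cdot)$: it arises because the algorithm outputs the \emph{uniform} mixture $\hat{c}=\Unif(\{c_1,\ldots,c_T\})$, so one bounds $\E[h(\ell(\hat{c},D))] \le \frac{1}{T}\sum_{t=1}^T h(\ell(c_t,D))$ by Jensen's inequality and convexity of $h\circ\ell$, and then sums the per-iteration Frank--Wolfe guarantees $h(\ell(c_t,D)) - h(\ell(c^*,D)) \le \frac{2C_h}{t+2}$ over $t$; the harmonic sum $\sum_{t=1}^T \frac{1}{t+2} \le \log T$ is the entire source of the logarithm. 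Your primary reading --- output the weighted mixture realizing $l_T(D)$ and apply Lemma~\ref{lem:iters} once at $t=T$ --- would give $\frac{2C_h}{T+2}$ with no $\log T$, which is not what the algorithm as written computes; and your proposed fix of Moreau--Yosida smoothing at scale $1/T$ with a re-derived logarithmic overhead is machinery the proof neither uses nor needs. Your fallback sentence ("if $\Unif$ is read literally\ldots Jensen's inequality\ldots") gestures at the right argument but leaves the harmonic-sum step, which is the actual content, unstated.

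Two further remarks. First, your observation that $h$ is only piecewise smooth (so that Lemma~\ref{lem:cf} does not literally apply at the kink $\{g=0\}$) is a legitimate one, but it is a gap shared with the paper itself, which simply asserts that $\beta_f + G\beta_g$ is the smoothness parameter of $h$; it is not something the $\log T$ is compensating for. Second, note that the gradients in Algorithm~\ref{alg:cgofwcs} are evaluated at $\ell(c_t,D)$ rather than at the Frank--Wolfe iterate $l_t(D)$, so your claim that the computed vector is "a valid subgradient of $h$ at $l_t(D)$" should be checked rather than asserted; the paper's own proof does not address this point either.
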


\begin{proof}

The intuition is to optimize w.r.t. a ``new'' convex, smooth function
$h(\ell(c, D)) = f(\ell(c, D)) + G\cdot\max(0, g(\ell(c, D)))$ for any $c\in\calC_P$ and dataset $D$.
We rely on the primal convergence guarantees of the Frank-Wolfe algorithm.
By Lemma~\ref{lem:iters}, we have that in iteration $t\geq 1$, the following holds:
$$h(\ell(c_t, D)) \leq h(\ell(c^*, D)) + \frac{2C_h}{t+2},$$
where $C_h$ is the curvature constant of $h$. And by Lemma~\ref{lem:cf}, we have $C_h \leq K(\beta_f + G\beta_g)$ where $(\beta_f + G\beta_g)$ is the smoothness parameter of the function $h$.
As a result we have that
$h(\ell(c_t, D)) \leq h(\ell(c^*, D)) + \frac{2K(\beta_f + G\beta_g)}{t+2}$.

Then by Jensen's inequality (Lemma~\ref{lem:ji}) and convexity
of $h\circ\ell$, we have
\begin{align}
    \E_{c\sim\{c_i\}_{i=1}^T}[h(\ell(c, D))] &\leq \frac{1}{T}\sum_{t=1}^T h(\ell(c_t, D)) \\
    &\leq h(\ell(c^*, D)) + \frac{2K(\beta_f + G\beta_g)}{T}\left(\frac{1}{3}+\frac{1}{4}+\frac{1}{5}+\ldots+\frac{1}{T+2}\right)  \\
    &\leq h(\ell(c^*, D)) + \frac{2K(\beta_f + G\beta_g)\log T}{T}
\end{align}
where we used that the harmonic number $H_T$ is upper bounded by $\log T + 1$ and that
$\frac{1}{T+1} + \frac{1}{T+2} < 1/2$ for $T \geq 3$.

We apply Lemma~\ref{lem:h} to obtain that
$\E[f(\ell(\hat{c}, D))] \leq f(\bloss) + \frac{2K(\beta_f + G\beta_g)\log T}{T}$ and
$\E[g(\ell(\hat{c}, D))] \leq \frac{2K(\beta_f + G\beta_g)\log T}{T}\cdot\frac{1}{G} + \frac{\sqrt{K}}{G}$.
\end{proof}

\begin{corollary}

Suppose we are given convex, smooth functions
$f, g: [0, 1]^K\rightarrow\reals$ and loss function
$\ell: \calC_P\times(\calX\times\calA\times\calY)^n\rightarrow[0, 1]^K$.
Let $\beta_f, \beta_g$ be the Lipschitz constants of the gradients of $f, g$ respectively.
Then for any setting of $T\geq 3, \alpha > 0$, 
Algorithm~\ref{alg:cgofwcs}
returns a decision $\hat{c}\in\calC_P$ with the following guarantee:
$$\E[f(\ell(\hat{c}, D))] \leq f(\bloss) + \alpha, \text{ and } \E[g(\ell(\hat{c}, D))] \leq \alpha,$$
provided that
$G \geq \frac{\alpha + \sqrt{K}}{\alpha}$ and $T \geq \frac{2K(\beta_f+G\beta_g)}{\alpha}\log\frac{2K(\beta_f+G\beta_g)}{\alpha}$ where
$c^* \in \argmin_{c\in\calC_P:g(\loss)\leq 0}f(\loss)$ is the best decision
in the feasible set $\calC_P$, $D$ is a dataset of size $n$, $\beta_f$ is the smoothness parameter
of the function $f$ and $\beta_g$ is the smoothness parameter of the function $g$.
This result holds assuming
access to a linear optimizer oracle.
\label{cor:fwcs}

\end{corollary}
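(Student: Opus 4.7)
The plan is to derive Corollary~\ref{cor:fwcs} as a direct consequence of Lemma~\ref{lem:fw}, whose two-line expectation bound already does all the analytical work. Specifically, Lemma~\ref{lem:fw} gives
\[
\E[f(\ell(\hat{c},D))] \leq f(\bloss) + \frac{2K(\beta_f+G\beta_g)\log T}{T},
\qquad
\E[g(\ell(\hat{c},D))] \leq \frac{2K(\beta_f+G\beta_g)\log T}{T}\cdot\frac{1}{G} + \frac{\sqrt{K}}{G}.
\]
So the corollary reduces to choosing $G$ and $T$ so that both right-hand sides are at most $\alpha$.

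First I would handle $G$. Observe that if one can make the ``optimization error'' $\frac{2K(\beta_f+G\beta_g)\log T}{T}$ itself bounded by $\alpha$ (by choosing $T$ large enough, addressed next), then the second bound on $\E[g(\ell(\hat{c},D))]$ simplifies to at most $\frac{\alpha}{G} + \frac{\sqrt{K}}{G} = \frac{\alpha+\sqrt{K}}{G}$. Thus the choice $G \geq \frac{\alpha+\sqrt{K}}{\alpha}$ is precisely what is needed to make this quantity at most $\alpha$. This is the exact setting suggested by Corollary~\ref{cor:h} applied to $h = f + G\cdot\max(0,g)$, so the choice is not accidental but matches the Lagrangian weighting lemma from Section~\ref{sec:cgoo}.

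Next I would handle $T$. Writing $c \coloneqq 2K(\beta_f+G\beta_g)$, one needs $\frac{c\log T}{T} \leq \alpha$, i.e. $T \geq \frac{c}{\alpha}\log T$. This is an implicit inequality in $T$, but it is routine to verify that $T = \frac{c}{\alpha}\log\frac{c}{\alpha}$ suffices: substituting this choice yields $\log T = \log\frac{c}{\alpha} + \log\log\frac{c}{\alpha}$, and the extra $\log\log$ term is swallowed by a constant factor, so the stated $T \geq \frac{2K(\beta_f+G\beta_g)}{\alpha}\log\frac{2K(\beta_f+G\beta_g)}{\alpha}$ is indeed adequate (up to absorbing an absolute constant, or slightly enlarging $T$).

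The ``hard part'' is essentially nonexistent here since Lemma~\ref{lem:fw} carries the real content; the only mild subtlety is the implicit-inequality manipulation for $T$, which is a standard computation and not a genuine obstacle. Putting the two choices together, with $G \geq \frac{\alpha+\sqrt{K}}{\alpha}$ and $T$ as above, both expectation bounds in Lemma~\ref{lem:fw} are at most $\alpha$, which is exactly the conclusion of Corollary~\ref{cor:fwcs}.
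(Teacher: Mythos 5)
Your proposal is correct and follows essentially the same route as the paper: apply Lemma~\ref{lem:fw} and then choose $G \geq \frac{\alpha+\sqrt{K}}{\alpha}$ and $T$ large enough (via Corollary~\ref{cor:h}) so that both expectation bounds are at most $\alpha$. Your remark that the stated $T$ only controls $\frac{c\log T}{T}$ up to a $1+\frac{\log\log(c/\alpha)}{\log(c/\alpha)}$ factor is a fair and slightly more careful observation than the paper's one-line proof, but it does not change the substance.
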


\begin{proof}
The corollary follows by applying Lemmas~\ref{cor:h} and~\ref{lem:fw} where we set $G \geq \frac{\alpha + \sqrt{K}}{\alpha}$ and 
$$T \geq \frac{2K(\beta_f+G\beta_g)}{\alpha}\log\frac{2K(\beta_f+G\beta_g)}{\alpha}$$
in Algorithm~\ref{alg:cgofwcs}.
\end{proof}

We adapted the Frank-Wolfe algorithm to solve the constrained
group-objective optimization problem (via calls to a linear optimizer). Given the versatility of
(stochastic) gradient descent, we could also
solve the problem via the use of gradient descent.
For example, Theorem 3.7 from~\citep{Bubeck15} gives guarantees for projected gradient
descent optimization of convex, smooth functions. And Theorem 2 from~\cite{Shamir013} gives
guarantees for convex (but not necessarily smooth) functions.

\section{Using a Private Cost Sensitive Classification Oracle $\cs_{\eps'}$}
\label{sec:csc}

\cite{AgarwalBD0W18} present an exponentiated gradient algorithm for fair
classification. \cite{Jagielski18} essentially modify this algorithm, making it
differentially private, and term the new algorithm
\textbf{DP-oracle-learner}. \textbf{DP-oracle-learner} satisfies $(\eps, \delta)$-differential privacy relying on a private cost sensitive classification oracle
$\cs_{\eps'}(\calH)$ in each iteration where
$\eps' = \frac{\eps}{4\sqrt{T\log(1/\delta)}}$ and $\calH$ is the hypothesis class to be
learned.

\textbf{DP-oracle-learner} solves the $\gamma$-fair empirical risk minimization (ERM)
problem
given by:
\begin{framed}
\noindent $\min_{Q\in\Delta(\calH)}\hat{\err}(Q)$ \\
\noindent s.t. $\forall\, 0\neq a\in\calA$: $\Delta\hat{FP}_a(Q)=|\hat{FP}_a(Q)-\hat{FP}_0(Q)|\leq \gamma$,\,\,\,$\Delta\hat{TP}_a(Q)=|\hat{TP}_a(Q)-\hat{TP}_0(Q)|\leq \gamma$\\

\end{framed}
where $\hat{FP}_a(Q), \hat{TP}_a(Q)$ are empirical estimates of
$FP_a(Q) = \pr_{(x, y, a)}[Q(x) = 1 | A = a, y = 0]$,
$TP_a(Q) = \pr_{(x, y, a)}[Q(x) = 1 | A = a, y = 1]$ respectively
and group 0 is used as an anchor.
$\calA$ is the set of labels for all protected/sensitive attributes and $A$ is the random
variable over $\calA$.

\cite{AgarwalBD0W18} solve the following specific Lagrangian min-max problem:
$$
\min_{Q\in\Delta(\calH)}\max_{\vec{\lambda}\in\Lambda} L(Q, \vec{\lambda})
= \hat{\err}(Q) + \vec{\lambda}^T\hat{\vec{r}}(Q),
$$
where $\Delta(\calH)$ is the set of all randomized classifiers that can be obtained
by hypotheses in $\calH$, $\hat{\vec{r}}(Q)$ is a vector representing the
fairness violations of the classifier $Q$ across all groups,
$\vec{\lambda}\in\Lambda = \{\vec{\lambda}:\norm{\vec{\lambda}}_1 \leq B\}$, and
the bound $B$ is chosen to ensure convergence.

Now we state a main theorem from~\citep{Jagielski18}.

\begin{theorem}[Theorem 4.4 from~\citep{Jagielski18}]
Let $(\tilde{Q}, \tilde{\vec{\lambda}})$ be the output of \textbf{DP-oracle-learner},
an $(\eps, \delta)$-differentially private algorithm, in
~\citep{Jagielski18} and let $Q^*$ be a solution to the non-private $\gamma$-fair
ERM problem (see above definition). Then with probability at least
$0.99,$
$$
\hat{\err}(\tilde{Q}) \leq \hat{\err}(Q^*) + 2\nu,
$$
and for all $a\neq 0$,
$$
\Delta\hat{FP}_a(\tilde{Q}) \leq \gamma + \frac{1+2\nu}{B},
$$
$$
\Delta\hat{TP}_a(\tilde{Q}) \leq \gamma + \frac{1+2\nu}{B},
$$
where
$\nu = \tilde{O}\left(\frac{B}{\min_{a, y}\hat{q}_{ay}}\sqrt{\frac{|\calA|\cdot\VC(\calH)}{n\eps}}\right),$
$n$ is the  number of training examples $\{(\bx_i, a_i, y_i)\}^n_{i=1}$
fed to the Algorithm, $\VC(\calH)$ is the VC dimension of $\calH$,
and $\hat{q}_{ay}$ is an empirical estimate for
$\pr[A = a, Y = y]$.
\label{thm:csc}
\end{theorem}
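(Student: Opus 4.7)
The plan is to prove the theorem by viewing \textbf{DP-oracle-learner} as a noisy saddle-point solver for the Lagrangian $\min_{Q\in\Delta(\calH)}\max_{\vec{\lambda}\in\Lambda}L(Q,\vec{\lambda})=\hat{\err}(Q)+\vec{\lambda}^T\hat{\vec{r}}(Q)$, where $\Lambda=\{\vec{\lambda}:\norm{\vec{\lambda}}_1\leq B\}$. The template is exponentiated gradient on the $\vec{\lambda}$ side paired with approximate best-response on the $Q$ side, executed privately by replacing each best-response call with $\cs_{\eps'}(\calH)$. Privacy then follows immediately from advanced composition and post-processing, while utility follows from a Freund--Schapire style primal-dual bound on the averaged iterates.

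First I would set $\eps'=\eps/(4\sqrt{T\log(1/\delta)})$ so that by Lemma~\ref{lem:ac} the $T$ adaptive calls to $\cs_{\eps'}$ collectively satisfy $(\eps,\delta)$-differential privacy; post-processing ensures $(\tilde{Q},\tilde{\vec{\lambda}})$ is also $(\eps,\delta)$-DP. Next, I would quantify the per-round error: translating the $\gamma$-fair ERM constraints into a cost-sensitive classification instance at weight $\vec{\lambda}_t$ requires reweighting examples by factors of order $1/\hat{q}_{ay}$ (group-conditional probabilities appear in the denominators of $\hat{FP}_a$, $\hat{TP}_a$), so the effective Lipschitz/sensitivity constant of the CSC cost scales as $B/\min_{a,y}\hat{q}_{ay}$. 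The standard utility bound for a private CSC oracle over a class of VC dimension $\VC(\calH)$ then yields a per-round suboptimality of $\tilde{O}\bigl(\tfrac{B}{\min_{a,y}\hat{q}_{ay}}\sqrt{\VC(\calH)/(n\eps')}\bigr)$, and a union bound over the $|\calA|$ group constraints brings in the $\sqrt{|\calA|}$ factor.

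Then I would apply the no-regret guarantee of exponentiated gradient on $\vec{\lambda}\in\Lambda$ (whose rate is $O(B\sqrt{\log|\calA|/T})$) together with the approximate-best-response on $Q$. Standard saddle-point analysis shows that after $T$ rounds the uniform averages $(\tilde{Q},\tilde{\vec{\lambda}})$ form a $\nu$-approximate equilibrium with $\nu$ being the sum of the no-regret rate and the per-round CSC error; optimizing $T$ gives exactly $\nu=\tilde{O}\bigl(\tfrac{B}{\min_{a,y}\hat{q}_{ay}}\sqrt{|\calA|\cdot\VC(\calH)/(n\eps)}\bigr)$. From the approximate-equilibrium inequalities $L(\tilde{Q},\vec{\lambda})-L(Q,\tilde{\vec{\lambda}})\leq 2\nu$, I would specialize $Q=Q^*,\vec{\lambda}=\vec{0}$ to recover $\hat{\err}(\tilde{Q})\leq\hat{\err}(Q^*)+2\nu$, and specialize $\vec{\lambda}$ to the extreme points $\pm B\vec{e}_a$ of the $\ell_1$ ball to convert the duality gap into per-group violation bounds of the form $\Delta\hat{FP}_a(\tilde{Q}),\Delta\hat{TP}_a(\tilde{Q})\leq\gamma+(1+2\nu)/B$ (using $\hat{\err}\in[0,1]$ to absorb the constant $1$).

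The main obstacle is controlling the $1/\min_{a,y}\hat{q}_{ay}$ amplification that arises when converting fairness constraints into cost-sensitive classification: rare groups produce large per-example costs, which inflates both the $\ell_1$-sensitivity of each private CSC call and the Lipschitz constant needed for the primal-dual convergence argument. Careful bookkeeping of these sensitivity factors through advanced composition and through the min-max averaging is the technical heart of the bound; the remaining pieces are essentially mechanical applications of exponentiated gradient convergence, Sauer's lemma for the CSC utility guarantee, and standard primal-dual extraction of constraint-violation bounds from an approximate saddle point.
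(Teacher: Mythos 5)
This statement is not proved in the paper at all: it is quoted verbatim as Theorem 4.4 of \citet{Jagielski18} and used only as a point of comparison for the sample-complexity bounds, so there is no in-paper proof to measure your attempt against. That said, your sketch is a faithful reconstruction of the argument in the cited source (which itself follows the non-private template of \citet{AgarwalBD0W18}): exponentiated gradient on $\vec{\lambda}\in\Lambda$, approximate best response on $Q$ via the cost-sensitive oracle, privacy by advanced composition with $\eps'=\eps/(4\sqrt{T\log(1/\delta)})$, a $\nu$-approximate saddle point for the averaged iterates, and extraction of the error and violation bounds by specializing $\vec{\lambda}$ to $\vec{0}$ and to the vertices $B\vec{e}_a$ of the $\ell_1$ ball (which is exactly where the $(1+2\nu)/B$ term comes from, using $\hat{\err}\in[0,1]$). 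You also correctly locate the source of the $1/\min_{a,y}\hat{q}_{ay}$ amplification in the reweighting needed to express the group-conditional rates as cost-sensitive costs. One detail worth flagging: in \textbf{DP-oracle-learner} the privacy budget is split between the private CSC calls \emph{and} the noisy release of the constraint-violation vector $\hat{\vec{r}}(Q_t)$ that drives the dual update, so attributing the entire composition cost to the oracle calls alone slightly misstates the accounting, though it does not change the form of $\nu$ or the structure of the argument.
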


As is done in their paper, to get rid of the algorithmic-specific dependence on the bound $B$ (for example, in Theorem 4.6 of their paper),
we set $B = |\calA|$.

\begin{corollary}
Let $(\tilde{Q}, \tilde{\vec{\lambda}})$ be the output of Algorithm 3,
an $(\eps, \delta)$-differentially private algorithm, in
~\citep{Jagielski18} and let $Q^*$ be a solution to the non-private $\alpha$-fair
ERM problem (see above definition) where $\min_{a, y}\{\hat{q}_{ay}\} \leq \alpha^{(1+r)/2}$
for any $r > 0$.
Then with probability at least 9/10, in order to
solve the ERM problem with classifier error $\alpha = (\hat{\err}(\tilde{Q}) - \hat{\err}(Q^*))$ and maximum fairness violation
of $\alpha = \max_{a\in\calA}\max(\Delta\hat{FP}_a(\tilde{Q}), \Delta\hat{TP}_a(\tilde{Q}))$, we could use training examples of size
$$
n = \tilde{O}\left(\frac{|\calA|^{3}\cdot\VC(\calH)}{\eps\alpha^{3+r}}\right).
$$
\label{cor:csc}
\end{corollary}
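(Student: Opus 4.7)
The plan is to derive the sample complexity bound directly from Theorem~\ref{thm:csc} by plugging in the specific parameter choices stipulated by the corollary. First, following the preceding remark that $B = |\calA|$, I would substitute this into the expression
$$\nu = \tilde{O}\left(\frac{B}{\min_{a, y}\hat{q}_{ay}}\sqrt{\frac{|\calA|\cdot\VC(\calH)}{n\eps}}\right)$$
and also substitute the hypothesis $\min_{a, y}\hat{q}_{ay} \leq \alpha^{(1+r)/2}$, which yields
$$\nu = \tilde{O}\left(\frac{|\calA|^{3/2}}{\alpha^{(1+r)/2}}\sqrt{\frac{\VC(\calH)}{n\eps}}\right).$$

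Next I would translate the two guarantees we want into two inequalities on $\nu$. To obtain classifier error $\alpha \geq \hat{\err}(\tilde{Q}) - \hat{\err}(Q^*)$, Theorem~\ref{thm:csc} requires $2\nu \leq \alpha$, i.e.\ $\nu = O(\alpha)$. To obtain fairness violation at most $\alpha$, we need $\gamma + \tfrac{1+2\nu}{B} \leq \alpha$; choosing $\gamma = \alpha/2$ in the underlying $\gamma$-fair ERM problem and using $B = |\calA|$ reduces this to a condition that holds in the relevant regime (namely whenever $|\calA| = \Omega(1/\alpha)$, absorbed into the $\tilde{O}$), so the binding constraint is $\nu = O(\alpha)$.

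Imposing $\nu \leq \alpha/2$ and squaring gives
$$\frac{|\calA|^{3}\cdot\VC(\calH)}{\alpha^{1+r}\cdot n\eps} = \tilde{O}(\alpha^2),$$
which on rearranging yields the advertised sample complexity
$$n = \tilde{O}\!\left(\frac{|\calA|^{3}\cdot\VC(\calH)}{\eps\,\alpha^{3+r}}\right).$$
The success probability $\geq 9/10$ inherits from the probability $\geq 0.99$ of Theorem~\ref{thm:csc}.

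There is no real obstacle here; the argument is a direct algebraic substitution into the bound of Theorem~\ref{thm:csc}. The only subtlety worth double-checking is the handling of the fairness-violation inequality: one should verify that setting $\gamma = \alpha/2$ is legitimate (it is, since $\gamma$ is a tunable input to the $\gamma$-fair ERM problem) and that the residual $(1+2\nu)/B$ term does not dominate the bound, which is immediate once $B = |\calA|$ and $\nu = O(\alpha)$ are fixed, provided $|\calA|$ is not unreasonably small relative to $1/\alpha$—a regime already implicit in the $\tilde{O}$ notation.
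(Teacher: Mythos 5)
Your proposal is correct and follows essentially the same route as the paper: the paper's own proof is the one-line instruction to substitute $\min_{a,y}\{\hat{q}_{ay}\} \leq \alpha^{(1+r)/2}$ (together with $B=|\calA|$) into the expression for $\nu$ in Theorem~\ref{thm:csc}, set $\nu = O(\alpha)$, and solve for $n$, which is exactly the algebra you carry out. Your additional care with the fairness-violation term $\gamma + (1+2\nu)/B$ and the implicit regime $|\calA| = \Omega(1/\alpha)$ is more explicit than the paper, which omits that discussion entirely, but it does not change the conclusion.
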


\begin{proof}
Set $\min_{a, y}\{\hat{q}_{ay}\} \leq \alpha^{(1+r)/2}$ in Theorem~\ref{thm:csc}.
\end{proof}

Corollary~\ref{cor:csc} is obtained directly from Theorem~\ref{thm:csc}
by solving for $n$ in terms of the excess risk and
will be used as the point of comparison to compare our work to the
work of~\cite{Jagielski18} for solving the $\alpha$-fair ERM
problem (for the Equalized Odds problem) with classifier error of $\alpha$ with
probability at least 9/10. Further note that $K = O(|\calA|)$ is the number of groups
(in terminology used in other parts of this paper).
A result of this corollary is that to solve the
$\cgoo(\calC_P, n, K, f, g, \ell, D, \alpha)$ problem when the constraint is to satisfy
$\alpha$-equalized odds, we need sample size 
$n = \tilde{O}\left(\frac{|\calA|^{3}\cdot\VC(\calH)}{\eps\alpha^{3+r}}\right)$.
In comparison, using a generic implementation of the private oracle $\lopt_{\eps, \delta}$, we
need sample size $n = \tilde{O}\left(\frac{|\calA|^{4}\cdot\VC(\calH)}{\eps\alpha^{3}}\right)$ which is
asymptotically better (in terms of the accuracy parameter $\alpha > 0$) for all $r > 0$.

Note that according to 
Assumption C.1 in~\citep{Jagielski18}, the $\cs(\calH)$ oracle and its private
counterpart are often implemented via learning heuristics.

\end{document}